\documentclass[11pt]{article}
\usepackage[round]{natbib}
\RequirePackage[OT1]{fontenc}
\RequirePackage{amsthm,amsmath}
\allowdisplaybreaks[4]
\usepackage{amssymb}
\usepackage{fullpage}
\usepackage{amsthm}
\usepackage{bbm}
\usepackage{bm}
\usepackage{float}
\usepackage{graphicx}
\usepackage{placeins}
\usepackage{subcaption}
\usepackage{xcolor}
\usepackage{tikz}
\usetikzlibrary{arrows.meta,decorations.pathreplacing}
\usepackage[hmargin={1.0in, 1.0in}, vmargin={1.0in, 1.0in}]{geometry}
\usepackage{booktabs}
\usepackage{array}
\newcolumntype{Z}{>{\setbox0=\hbox\bgroup}c<{\egroup}@{\hspace*{-\tabcolsep}}}
\usepackage{setspace}
\usepackage{diagbox}
\usepackage{algorithm,algpseudocode}
\usepackage{xurl}
\usepackage{hyperref}
\hypersetup{
	colorlinks=true,
	linkcolor=blue,
	filecolor=magenta,
	urlcolor=cyan,
	citecolor=blue,
}
\usepackage{cleveref}

\makeatletter
\providecommand*{\theHALG@line}{}
\renewcommand*{\theHALG@line}{\thealgorithm.\arabic{ALG@line}}
\makeatother

\algdef{SE}[SUBALG]{Indent}{EndIndent}{}{\algorithmicend\ }%
\algtext*{Indent}
\algtext*{EndIndent}

\newcommand{\E}{\mathbb{E}}

\def\cI{\mathcal I}

\def\cJ{\mathcal J}

\theoremstyle{definition}
\newtheorem{thm}{Theorem}[section]
\newtheorem{lemma}{Lemma}[section]
\newtheorem{corollary}{Corollary}[section]
\newtheorem{prop}{Proposition}[section]
\newtheorem{assum}{Assumption}[section]
\newenvironment{thmbis}[1]
{\renewcommand{\theprop}{\ref{#1}$^*$}%
	\addtocounter{prop}{-1}%
	\begin{prop}}
	{\end{prop}}

\newtheorem{rem}{Remark}[section]

\newtheorem{defn}{Definition}[section]

\DeclareMathOperator*{\argmin}{arg\,min}
\DeclareMathOperator*{\argmax}{arg\,max}

\usepackage{titlesec}
\titlespacing*\section{0pt}{2pt plus 2pt minus 2pt}{0pt plus 2pt minus 2pt}
\titlespacing*\subsection{0pt}{2pt plus 2pt minus 2pt}{0pt plus 2pt minus 2pt}
\titlespacing*\subsubsection{0pt}{0pt plus 2pt minus 2pt}{0pt plus 2pt minus 2pt}

\date{\vspace{-5ex}}
\newcommand{\blind}{1}

\begin{document}
\setlength{\abovedisplayskip}{4pt}
\setlength{\belowdisplayskip}{4pt}
\setlength{\abovedisplayshortskip}{2pt}
\setlength{\belowdisplayshortskip}{2pt}

\if1\blind
{
	\title{On Non-Stationary Dynamic Pricing: Adaptivity and Optimality}
	\author{Feiyu Jiang\footnote{School of Management, Fudan University} ~and Zifeng Zhao\footnote{Mendoza College of Business, University of Notre Dame}}
	\date{\today}
	\maketitle
} \fi

\if0\blind
{
	\title{}
	\author{}
	\date{}
	\maketitle
} \fi

\begin{abstract}
We study the contextual dynamic pricing problem under non-stationarity, where a firm sells products to $T$ sequentially arriving consumers that behave according to an unknown demand model that can change over time. The demand model is assumed to be a generalized linear model (GLM), allowing for a feature vector in $\mathbb{R}^d$ that encodes product and consumer information. To achieve optimal revenue (i.e., least regret), the firm needs to learn and exploit the unknown GLMs while monitoring for potential changes. We propose a multiscale change-point detection based algorithm that achieves a regret of order $\widetilde{O}(\sqrt{s_TdT}\wedge\{V_T^{1/3}d^{1/3}T^{2/3}+\sqrt{dT}\})$, where $s_T$ is the number of piecewise stationary segments and $V_T$ is a newly defined notion of design-adjusted variation budget of the model parameters. Our algorithm is adaptive and does not require knowing $s_T$ or $V_T$. Moreover, to our knowledge, this is the first dynamic pricing algorithm that is adaptive to the nature of changes and achieves the best-of-both-worlds rate, thus closing a long-standing gap in the literature. We remark that, due to the varying contexts, existing works in the adaptive non-stationary bandit literature cannot be applied to achieve optimality for contextual dynamic pricing. The regret is further accompanied with a newly constructed minimax lower bound, confirming the optimality of our algorithm (up to logarithmic factors). Extensive numerical experiments are conducted to illustrate the efficiency and robustness of the proposed algorithm in non-stationary dynamic pricing.
\end{abstract}

\noindent\textit{Keywords}: dynamic pricing, multiscale change-point detection, minimax optimality, generalized linear model, total variation, online learning

\section{Introduction}\label{sec:intro}
With technological advances and prevalence of online marketplaces, many firms can now dynamically make pricing decisions while having access to an abundance of contextual information such as consumer characteristics, product features and economic environment. On the other hand, in practice, demand models are usually unknown and firms need to dynamically learn how the contextual information impacts consumer behavior. Thus, to maximize its revenue, the firm needs to implement dynamic pricing, which aims to optimally balance the trade-off between learning the unknown demand function and earning revenues by exploiting the estimated demand model. 

Due to its importance in revenue management, dynamic pricing has been extensively studied in the literature under various settings. The majority of studies on dynamic pricing, however, focus on the case where the demand model is unknown but \textit{stationary}. In other words, it assumes that the way consumers react to prices and product features remains unchanged over time. While this can be a reasonable assumption over a short horizon, empirical evidence suggests that consumer behavior experiences various forms of \textit{changes} over time. First, consumer demand may experience abrupt shifts triggered by exogenous shocks: the onset of Covid-19 pandemic led to sudden changes in household spending patterns~\citep{baker2020does}; similarly, the September 11 attack produced a sharp negative shock to U.S.\ airline demand. At the same time, consumer behavior may also evolve gradually: \cite{parker1997price} show that price sensitivity can vary over product life cycles; in addition, broader market forces such as persistent inflation or growth in employee compensation can gradually alter consumers’ willingness to  pay~\citep{mcouat2024privatebrands}. 

Motivated by these observations, in this paper, we study the dynamic pricing problem under non-stationarity of \textit{unknown} nature. To be specific, we consider a firm selling products to $T$ sequentially arriving consumers, where the $t$th consumer arrives in time period $t$, $t \in \{1, \ldots, T\}$. For each round $t$, the contextual information is featurized by a vector $z_t\in \mathbb R^{d}$, $d \geq 1$. Conditional on the covariate $z_t$ and the price $p_t$, the consumer demand $y_t$ follows a generalized linear model~(GLM) with parameter $\theta_t.$ Due to non-stationarity, the demand model, i.e.,\ the parameter sequence $\{\theta_t\}_{t=1}^T$, may change over time. Importantly, we do not restrict the nature of the non-stationarity and allow it to be either (i).\ \textit{structured} changes where $\{\theta_t\}_{t=1}^T$ is piecewise stationary with $s_T\geq 1$ stationary segments (i.e.\ $s_T-1\geq 0$ abrupt change-points), or (ii).\ \textit{unstructured} changes where the amount of variation in $\{\theta_t\}_{t=1}^T$ is bounded by a budget $V_T>0$ (see Definition \ref{dfn_variation} later for more details). The firm initially has no information about the parameter $\{\theta_t\}_{t=1}^T$. We aim to design a pricing policy that is \textit{adaptive} to the nature of the non-stationarity and achieves near-optimal revenue performance in both worlds, measured by the firm’s $T$-period expected regret. That is, the revenue loss compared to a clairvoyant who has perfect knowledge of the underlying demand model.

\subsection{List of Contributions}
This paper makes the following main contributions:
\begin{itemize}
    \item We design a novel \textit{multiscale change-point detection based dynamic pricing}~(MCP-DP) algorithm, detailed in Algorithm \ref{alg:mcpdp}, which achieves a near-optimal regret of order $\widetilde{O}(\sqrt{s_TdT}\wedge\{\sqrt{dT}+V_T^{1/3}d^{1/3}T^{2/3}\})$ as shown in Theorem \ref{thm:mcpdp-upper}. To the best of our knowledge, MCP-DP is the first algorithm in the dynamic pricing literature that is adaptive and provably optimal under both structured and unstructured non-stationarity, therefore closing a long-standing gap in the literature. Moreover, unlike existing algorithms for dynamic pricing under structured non-stationarity (i.e., abrupt changes), MCP-DP achieves optimality without imposing any condition on the minimum change size and segment length of each stationary period, which is seen for the first time in the literature. In addition, under mild conditions, we derive a new high-probability upper bound on the \textit{prediction} error of MLE for a mixture of GLMs, which provides the foundation for the optimality of MCP-DP and can be of independent interest.

    \item As a key ingredient of MCP-DP, we propose a novel likelihood ratio based multiscale online change-point detection algorithm for GLMs and carefully design the multiscale sampling mechanism and detection threshold to achieve optimal regret. Moreover, we introduce a new notion of \textit{design-adjusted} variation budget, defined in Definition \ref{dfn_variation}, which accounts for the variation in the model parameters $\{\theta_t\}_{t=1}^T$ and its interaction with the design matrix $\Sigma_z:=\mathbb E(z_tz_t^\top)$ of the context. This newly defined variation budget differs from and generalizes existing ones in the literature, which only considers the variation in $\{\theta_t\}_{t=1}^T$. Importantly, it provides the tightest characterization of the amount of changes, thus the sharpest characterization of the optimal regret, for dynamic pricing under unstructured non-stationarity.
    
    \item We show that, for dynamic pricing under non-stationarity, the regret lower bound is $\Omega(\sqrt{s_T dT}\wedge V_T^{1/3}d^{1/3}T^{2/3})$, which simultaneously covers the cases of structured and unstructured changes. Moreover, the dependence on $d$ is seen for the first time in the dynamic pricing literature for both cases, as existing lower bounds only cover the covariate-free settings. To reveal the problem difficulty in terms of $d$, new technical arguments based on Assouad's lemma are developed in the proof, which are of independent interest, as we need to handle the case of diverging $d$ as $T$ grows unbounded. This result is collected in \Cref{sec:lb}.

    \item The theoretical findings are further supported with extensive numerical studies in \Cref{sec:experiments}, illustrating the adaptivity, statistical and computational efficiency of MCP-DP in a wide range of settings for non-stationary dynamic pricing.
\end{itemize}

The rest of this paper is organized as follows. \Cref{subsec:lit} provides a detailed literature review and further highlights our contributions and differences from existing works. We rigorously formulate the problem and further characterize its fundamental difficulty with a minimax lower bound in \Cref{sec:setup_lb}. We introduce the MCP-DP algorithm in \Cref{sec:upper_bound} along with its upper bound and a sketch of proofs. Numerical studies are given in \Cref{sec:experiments} to illustrate the efficiency of MCP-DP. We conclude with discussions in \Cref{sec:conclusion}.  All technical arguments can be found in the supplement.

\subsection{Related Literature}\label{subsec:lit}

Our work is closely related to two streams of literature: dynamic pricing with demand learning and the non-stationary bandit and reinforcement learning problem.

\noindent\textbf{Dynamic pricing with demand learning.} There is a growing body of literature on dynamic pricing under various settings, where a firm aims to maximize revenue by balancing the trade-off between learning and earning~(i.e.,\ exploration and exploitation) when faced with an unknown (but stationary) demand function. See, for example, \cite{besbes2009dynamic}, \cite{farias2010dynamic}, \cite{broder2012dynamic}, \cite{keskin2014dynamic}, \cite{den2015dynamic},  \cite{chen2019welfare,chen2021nonparametric,chen2021joint}, \cite{nambiar2019dynamic}, \cite{wang2021multimodal}, \cite{chen2022differential}, \cite{bastani2022meta},  \cite{luo2024distribution}, \cite{fan2024policy}, \cite{jia2024online}, \cite{chai2024localized}, \cite{tang2025offline}, and \cite{liu2025contextual}. More recently,  \cite{xu2024pricing} and \cite{zhao2026contextual} revisit the contextual dynamic pricing problem under stationary demand models and derive near-optimal regret upper bounds of order $\widetilde{O}(\sqrt{dT})$.

While the majority of studies on dynamic pricing focus on the case where the demand model stays unchanged over time, there are also works tackling non-stationarity. Under structured non-stationarity where the model parameter is piecewise constant,
\cite{besbes2011minimax} study a setting where the demand function may change once in an unknown time period, while assuming that the firm has the knowledge of the pre- and post-change demand functions; \cite{keskin2022data} consider a setting where the seller needs to make joint pricing and inventory decisions under possible abrupt changes in a demand function with a given non-linear form; \cite{chen2019dynamic} investigate the setting where the firm needs to make pricing decisions
for a single product on multiple local markets with evolving market sizes and abrupt changes in the demand function. In contrast, \cite{keskin2017chasing} study non-stationary dynamic pricing for a covariate-free linear demand model under both structured and unstructured changes, and show that the problem complexity is fundamentally different for the two cases. \cite{zhao2026high} further study both cases for the setting where the demand model is a high-dimensional contextual GLM. 

However, all the aforementioned works for non-stationary dynamic pricing are \textit{non-adaptive} in that they require knowing the nature of the change. Indeed, in \cite{keskin2017chasing} and \cite{zhao2026high}, \textit{different} algorithms are designed separately to achieve optimal regret under the structured and unstructured non-stationarity. Moreover, to achieve optimal regret, the aforementioned works impose conditions on the minimum change size and segment length in the case of structured changes, and require the knowledge of the variation budget $V_T$ in the case of unstructured changes. In contrast, thanks to the novel multiscale change-point detection mechanism, our MCP-DP algorithm is truly adaptive and achieves optimal regret for dynamic pricing under both structured and unstructured non-stationarity, without imposing conditions on the change size and segment length or knowing the variation budget. Moreover, as mentioned above, we introduce a new notion of \textit{design-adjusted} variation budget, which generalizes and sharpens the ones defined in \cite{keskin2017chasing} and \cite{zhao2026high}.

\noindent\textbf{Non-stationary bandit and RL.} There is also a rich literature in non-stationary bandit and reinforcement learning~(RL), especially under the multi-armed bandit~(MAB) setting under the name ``switching bandit"~\citep{garivier2011upper}. Same as the non-stationary dynamic pricing literature, most existing works treat unstructured and structured non-stationarity \textit{separately}: for unstructured changes, \cite{besbes2014stochastic} study the non-stationary covariate-free MAB problem and \cite{cheung2022hedging} further extend it to non-stationary linear and generalized linear bandits; for the structured changes, various change-point detection based algorithms are proposed for the non-stationary covariate-free MAB problem, see e.g., \cite{liu2018change, cao2019nearly, besson2022efficient}. 


An important work is \cite{chen2019new}, which proposes one of the first adaptive algorithms in non-stationary contextual bandits that can handle both structured and unstructured changes.  More recently, \cite{suk2022tracking} refines this line of work by introducing a notion of ``significant shifts", showing that in finite-armed bandits, one may only need to restart after changes that are severe enough to affect regret.  However, these results are developed for \textit{finite}  action spaces and rely on arm-level reward comparisons, which incur a regret that scales polynomially with the number of arms $K$, and therefore cannot be directly applied to contextual dynamic pricing, which has a continuous action space (i.e.,\ price). We note that similar ideas of adapting only to statistically relevant changes have also appeared in other settings, such as \cite{huang2025stability}, which consider a general non-stationary stochastic optimization framework under full feedback, and \cite{nguyen2026non}, which study non-stationary Lipschitz bandits. 

\cite{wei2021non} further proposes a black-box reduction that turns a generic UCB-type algorithm for stationary bandit/RL problems into an adaptive algorithm for the non-stationary version. \cite{wang2023adaptivity} further extends \cite{wei2021non} to online stochastic optimization with bandit feedback. However, \cite{wei2021non} hinges on a crucial condition (see Assumption 1 therein) that in general requires a \textit{fixed} action set, as is noted in Appendix I.2 therein. Therefore, it does not work for contextual dynamic pricing as the contexts $\{z_t\}$ (and thus the action set) can vary over time. Thus, our work for adaptive non-stationary dynamic pricing requires significantly different methodology and technical arguments than the adaptive non-stationary bandit/RL literature. We refer to \Cref{rem:comparison_RL} later for a more technical discussion.

\subsection{Notation}
For a vector $a\in\mathbb{R}^d$, denote its Euclidean norm as $\|a\|$.  Denote~$\otimes$ as the Kronecker product. Let $\|\cdot\|_{\mathrm{op}}$, $\lambda_{\min}(\cdot)$ and $\lambda_{\max}(\cdot)$ be the $L_2\to L_2$-operator norm, the smallest and largest eigenvalues of a matrix. For a positive definite matrix $M \succ 0$ and any vector $v$ of compatible dimension, let $\|v\|_{M} = \sqrt{v^{\top} M v}$.  For any positive integer $K$, denote $[K]=\{1, \ldots, K\}$.  
For any vector $v$ and set~$S$, let $\Pi_S(v)$ be the $L_2$-projection of $v$ onto $S$.   We let $O(\cdot)$, $o(\cdot)$, and $\Omega(\cdot)$ denote Landau's Big-O, little-o, and Big-Omega, respectively.  The notation $\widetilde{O}(\cdot)$ disregards poly-logarithmic factors for $O(\cdot)$.  For two nonnegative quantities $a$ and $b$, we write $a\lesssim b$ if there exists a universal constant $C>0$,  such
that $a\le Cb$.  We write $a\gtrsim b$ if $b\lesssim a$, and write
$a\asymp b$ if both $a\lesssim b$ and $a\gtrsim b$ hold.  

\section{Problem Formulation and Lower Bound}\label{sec:setup_lb}
We first present the detailed problem setup in \Cref{sec:setup} and introduce measures of structured and unstructured non-stationarity in \Cref{sec:non}. We then describe the class of pricing policies and the performance metric in \Cref{sec:policy_metric}. Finally, we illustrate the fundamental difficulty of the problem by establishing a minimax lower bound in \Cref{sec:lb}.

\subsection{Model Setup}\label{sec:setup}
We consider a seller who interacts with $T\in\mathbb{N}_{+}$ sequentially arriving consumers. At the beginning of round $t\in[T]$, a context vector
$z_t\in\mathbb R^d$, which may contain information about both the customer and the product, is observed.  After observing $z_t$, the seller posts a
price $p_t$ from a compact interval $[l,u]\subset\mathbb R_+$ and then observes a demand response $y_t\in\mathbb{R}$ from the customer.

Define $x_t=(z_t^{\top},-p_tz_t^{\top})^{\top}$ as the covariate, and $\theta_t=(\alpha_t^{\top},\beta_t^{\top})^{\top}\in\Theta$ as the {true} model parameter at time $t\in[T]$, where $\alpha_t,\beta_t\in\mathbb{R}^d$, and $\Theta\subset\mathbb{R}^{2d}$ is the parameter space. We assume that the demand $y_t$ at time $t$, conditional on $x_t$ and the current parameter
$\theta_t$, follows a canonical GLM
\begin{equation}\label{eq:glm-density}
f_t(y\mid x_t,\theta_t)
=
\exp\left\{
\frac{y\,x_t^\top\theta_t-\psi(x_t^\top\theta_t)}{a(\phi)}+h(y)
\right\},
\end{equation}
where $\psi:\mathbb R\mapsto\mathbb R$ is a known function,  $a(\phi)$ is a fixed and known scale parameter and  $h(y)$ is a   known normalizing
function of the distribution family. The GLM demand model in \eqref{eq:glm-density} has been widely used in the dynamic pricing literature \citep[see e.g.][]{ban2021personalized,xu2024pricing,wang2025dynamic,zhao2026contextual}, which covers the Gaussian linear regression, logistic regression, and Poisson regression, among other important GLMs.

From \eqref{eq:glm-density}, we have that conditional on $x_t$, the expected demand takes the form
\begin{equation}\label{eq:glm-mean}
\mathbb E[y_t\mid x_t,\theta_t]
=
\psi'(x_t^\top\theta_t)
=
\psi'\{z_t^\top\alpha_t-(z_t^\top\beta_t)p_t\},
\end{equation}
where $\psi'(\cdot)$ is the first order derivative of $\psi(\cdot)$ and its inverse is known as the link function of a GLM.  For example, for the linear model, $\psi'(x)\equiv x$ can be used for modeling continuous demand; and for the logistic model, $\psi'(x)=1/[1+\exp(-x)]$ is suitable for modeling probabilities of binary purchase decisions. Here in \eqref{eq:glm-mean}, $z_t^\top\alpha_t$ represents the intrinsic utility of the product and $z_t^\top\beta_t$ represents the price elasticity. We further denote $\epsilon_t:=y_t-\psi'(x_t^\top\theta_t)$ as the error term.

Based on \eqref{eq:glm-mean}, given parameter $\theta$, context $z$, and price $p$, define the expected revenue as
\begin{align}\label{eq:rev_func}
    r(p;z,\theta)=p\,\psi'\!\left(z^\top\alpha-(z^\top\beta)p\right)
\end{align}
and the optimal price as $p^*(z,\theta) \in \argmax_{p\in[l,u]} r(p;z,\theta)$. Furthermore, denote $p_t^*=p^*(z_t,\theta_t)$ as the true optimal price for round $t.$  We now introduce some regularity conditions on the covariate $z_t$, the model parameter $\theta_t$ and the GLM model in \eqref{eq:glm-density}.

\begin{assum}\label{ass_context}
(a).\ The context $\{z_t\}_{t=1}^T$ is a sequence of i.i.d.  random vectors from an unknown distribution supported on a compact set $\mathcal{Z}\subset \{z\in\mathbb{R}^d:\|z\|\leq c_z\}$, where $c_z\geq 1$ is an absolute constant. (b).\ Let $\Sigma_z=\mathbb{E}(z_tz_t^{\top})$, we assume that $\lambda_z:=\lambda_{\min}(\Sigma_z)>0$.
\end{assum}

\begin{assum}\label{ass_glm}
  (a).\ Define $\mathcal{X}=\left\{(z^{\top},-pz^{\top}):z\in\mathcal{Z}, p\in[l,u]\right\}$.  There exists an absolute constant $\sigma>0$ such that for any $x\in\mathcal{X}$, we have $\mathbb{E}[\exp(\lambda \epsilon_t\mid x_t=x)]\leq \exp(\lambda^2\sigma^2/2)$ for any $\lambda>0.$  (b).\ $\psi(\cdot)$ is three times continuously differentiable with a second derivative $\psi''(a)>0$ for all $a\in\mathbb{R}$.
\end{assum}

\begin{assum}\label{ass_boundedness}
(a).\ The parameter space $\Theta$ is a closed and convex set such that $  z^{\top}\alpha, z^{\top}\beta\in[c_l,c_u]$ for all $z\in\mathcal{Z}$ and $\theta \in\Theta$, where $c_l, c_u$ are some absolute constants. (b).\ The optimal price $p_t^*$ is unique and falls into the interior of the compact price range $[l,u]$ for all $t\in [T].$
\end{assum}

We remark that all the above assumptions are mild and standard in the dynamic pricing literature, see e.g.\ \cite{keskin2017chasing}, \cite{ban2021personalized}, \cite{chen2022statistical}, and \cite{zhao2026contextual}. Assumption \ref{ass_context} requires the contexts $\{z_t\}$ to be stochastic, which ensures sufficient variation of the sample design matrix. This is crucial for controlling the concentration of the estimation and prediction error of (mixture of) GLMs, and thus for the validity of the proposed multiscale likelihood-ratio test based algorithm. Note that we only require the covariance matrix $\Sigma_z$ to be full-rank (i.e.\ $\lambda_z>0$) without imposing an absolute lower bound on $\lambda_z$. Indeed, the boundedness of $\|z_t\|$ in \Cref{ass_context}(a) implies that $\lambda_{\max}(\Sigma_z)\leq c_z$ and $\lambda_z\leq c_z/d$. Assumption \ref{ass_glm}(a) imposes sub-Gaussianity on the error term, and Assumption \ref{ass_glm}(b) requires the log-likelihood function to be strictly convex.

\Cref{ass_boundedness}(a) assumes bounded intrinsic utility and price sensitivity, and \Cref{ass_boundedness}(b) requires the optimal price $p_t^*$ to be uniquely achieved in $(l,u)$. In particular, by Berge's maximum theorem \citep{berge1957two} (see e.g.\ Proposition S.5.2 in \cite{zhao2026contextual}), \Cref{ass_boundedness}(b) implies that the optimal price $p^*(z;\theta)$ is a Lipschitz function of $\theta$, which allows to connect the regret with the squared prediction error of the GLM demand model. For concrete examples, suppose the price sensitivity $z_t^\top \beta_t$ is within a compact positive interval, it is easy to see that the optimal price is unique and takes the closed-form $p_t^*={z_t^\top \alpha_t}/{(2z_t^\top \beta_t)}$ for the Gaussian linear regression and $p_t^*$ is the unique solution to $1+\exp\left(z_t^\top\alpha_t - (z_t^\top\beta_t) p\right)-(z_t^\top\beta_t) p=0$ for the logistic regression. Furthermore, we can always find a bounded price range $[l,u]$ that covers the optimal price $p_t^*$ for all $t \in [T]$ given that the intrinsic utility $z_t^\top \alpha_t$ is bounded.


\subsection{Measures of Non-stationarity}\label{sec:non}
We allow the demand model to be non-stationary, in the sense that the true model parameter $\{\theta_t\}$ may vary over time. In the following, we distinguish between two types of non-stationarity commonly used in the non-stationary bandit and dynamic pricing literature.

\Cref{dfn_st} defines \textit{structured} non-stationarity~\citep[e.g.][]{keskin2017chasing, chen2019dynamic, zhao2026high}, where $\{\theta_t\}$ assumes a piecewise stationary structure that may encounter abrupt shifts.
\begin{defn}[Piecewise stationarity]\label{dfn_st}
The parameter sequence $\{\theta_t\}_{t\in\mathcal{I}}$ is piecewise constant on  $\mathcal{I}=[\mathcal{I}_a,\mathcal{I}_b]\subseteq [T]$ if there exist $s_{\mathcal{I}}-1 \geq 0$ unknown change-points $\mathcal{I}_a=\tau_1<\tau_2<\cdots<\tau_{s_{\mathcal{I}}}<\tau_{s_{\mathcal{I}}+1}=\mathcal{I}_b+1$, such that, for each segment $j\in[s_{\mathcal{I}}]$,
\[
    \theta_t=\theta^{(j)}, \qquad t=\tau_j,\ldots,\tau_{j+1}-1 .
\]
Here, $s_{\mathcal{I}}:=1+\sum_{t=\mathcal{I}_a+1}^{\mathcal{I}_b} \mathbb I\{\theta_t\neq\theta_{t-1}\}$ is the number of stationary segments and we write $s_T:=s_{[1:T]}$.
\end{defn}

\Cref{dfn_variation} defines \textit{unstructured} non-stationarity~\citep[e.g.][]{keskin2017chasing,cheung2019learning}, where $\{\theta_t\}$ is allowed to vary freely subject to a variation budget. This formulation can capture smoother forms of temporal evolution, such as gradual shift in consumer demand.

\begin{defn}[Design-adjusted variation budget]\label{dfn_variation}
Denote $\widetilde{\Sigma}_z=I_2\otimes\Sigma_z \in \mathbb R^{2d\times 2d}$ as the weight matrix induced by the design matrix $\Sigma_z$ of the context $\{z_t\}$. For $\mathcal I=[\mathcal{I}_a,\mathcal{I}_b]\subseteq[T]$, define the design-adjusted variation of the parameter sequence $\{\theta_t\}_{t\in\mathcal{I}}$ as
\begin{align}\label{eq:variation_budget}
    V_{\mathcal I}
    =
    \sup_{\{t_0,\ldots,t_m\}\in\mathcal P_{\mathcal I}}
    \sum_{q=1}^m
    \|\theta_{t_q}-\theta_{t_{q-1}}\|_{\widetilde\Sigma_z}^2,
\end{align}
where $\mathcal{P}_{\mathcal I}$ is the set of all ordered partitions of $\mathcal I=[\mathcal{I}_a,\mathcal{I}_b]=\bigcup_{q=1}^{m}[t_{q-1},t_{q}]$ such that $\{t_0,t_1,\ldots,t_m\}$ satisfies $\mathcal{I}_a= t_0<\ldots<t_m= \mathcal{I}_b$ for some $m=1,2,\ldots,\mathcal{I}_b-\mathcal{I}_a$. We write $V_T:=V_{[1,T]}.$
\end{defn}

We remark that the newly-defined notion of \textit{design-adjusted} variation budget \eqref{eq:variation_budget} in \Cref{dfn_variation} is a generalization and sharper version of the existing \textit{unweighted} variation budget
$$V_{\cI}^{(2)}=\sup_{\{t_0,\ldots,t_m\}\in\mathcal P_{\mathcal I}}\sum_{q=1}^m\|\theta_{t_q}-\theta_{t_{q-1}}\|^2$$ in the non-stationary dynamic pricing literature~\citep[e.g.][]{keskin2017chasing}. Note that we have
\[
    \lambda_{\min}(\Sigma_z) V_{\mathcal I}^{(2)}
    \le
    V_{\mathcal I}
    \le
    \lambda_{\max}(\Sigma_z) V_{\mathcal I}^{(2)} .
\]
By the boundedness of $\{z_t\}$ in \Cref{ass_context}, we have $\lambda_{\min}(\Sigma_z)\leq c_z/d$ and $\lambda_{\max}(\Sigma_z)\leq c_z$ where $c_z>0$ is an absolute constant. Therefore, $V_{\mathcal{I}}$ is strictly sharper than $V_{\mathcal{I}}^{(2)}$ in terms of order and, in particular, we have $V_{\mathcal{I}}\asymp V_{\mathcal{I}}^{(2)}/d$ given a balanced design $\lambda_{\min}(\Sigma_z)\asymp \lambda_{\max}(\Sigma_z) \asymp 1/d.$

The unweighted $V_{\cI}^{(2)}$ only measures the variation in the parameter $\{\theta_t\}_{\mathcal{I}}$, which is natural for context-free dynamic pricing as in \cite{keskin2017chasing}. However, for contextual dynamic pricing, the impact of $\theta_t$ to the demand $y_t$~(see \eqref{eq:glm-mean}) is manifested via $z_t^\top\alpha_t$ and $z_t^\top\beta_t$, and hence should intrinsically depend on the distribution of the context $\{z_t\}$. The design-adjusted variation budget $V_{\mathcal{I}}$ captures this distinction via the weight matrix $\widetilde\Sigma_z$. In particular, for two parameters
$\theta=(\alpha^\top,\beta^\top)^\top$ and
$\theta'=(\alpha'^\top,\beta'^\top)^\top$, we have that 
\[
    \|\theta-\theta'\|_{\widetilde\Sigma_z}^2
    =
    \mathbb E\{z_t^\top(\alpha-\alpha')\}^2
    +
    \mathbb E\{z_t^\top(\beta-\beta')\}^2.
\]
Thus, the design-adjusted variation in \eqref{eq:variation_budget} captures the overall change in the intrinsic utility and price elasticity averaged over the context distribution. Specifically, $V_{\mathcal{I}}$ reflects the intuition that if a change in $\theta_t$ occurs mostly along directions that are rarely represented by the contexts, then it is less consequential for the regret, since it has little effect on the demand, and vice versa. Therefore, $V_{\mathcal{I}}$ provides a sharper characterization of the effective amount of non-stationarity in $\{\theta_t\}_{t\in \mathcal{I}}$. For a concrete example where $V_{\mathcal{I}}$ generalizes $V_{\mathcal{I}}^{(2)}$, we refer to \Cref{subsec:baseline_exp} of the numerical experiments.

In practice, the firm may not know which world it lives in when it comes to the two types of non-stationarity. Therefore, it is important to design an efficient dynamic pricing algorithm that is adaptive and achieves the best-of-both-worlds rate, without requiring the knowledge of the nature of the non-stationarity, nor the knowledge of $s_T$ or $V_T.$


\subsection{Pricing Policies and Performance Metric}\label{sec:policy_metric}

In a dynamic pricing problem, the key objective is to design a pricing policy that maximizes the revenue. As discussed above, we assume that the firm has no knowledge of the non-stationary environment.  To be specific, the firm does not know (1) the type of non-stationarity, (2) the number of stationary segments $s_T$ or variation budget $V_T$, or (3) the model parameter $\{\theta_t\}_{t=1}^T$.

Denote $\mathcal F_{t-1}=\sigma\left(\{(z_s,p_s,y_s):1\le s\le t-1\}\cup \{z_t\}\right)$ as the natural filtration generated by the demand, price and covariate history up to time $t$, together with the current covariate $z_t.$ Denote by $\Pi$ the family of all price processes $\{p_1, \ldots, p_T\}$ satisfying the condition that $p_t$ is $\mathcal{F}_{t-1}$-measurable for all $t\in [T].$ In other words, we require the pricing policy to be non-anticipating. 

Given a pricing policy $\pi\in\Pi$, we evaluate its performance using the common notion of regret: the revenue loss compared with a clairvoyant that has the perfect knowledge of the model parameter $\{\theta_t\}_{t=1}^T$. In particular, with the revenue function $r(\cdot; \cdot, \cdot)$ defined in~\eqref{eq:rev_func}, we define the regret as
\begin{align}\label{dfn_regret}
    R_T^\pi = \sum_{t=1}^Tr(p_t^*;z_t,\theta_t)-r(p_t;z_t,\theta_t),
\end{align}
where recall $p_t^*=p^*(z_t,\theta_t)=\argmax_{p\in[l,u]}r(p;z_t,\theta_t)$ is the revenue-maximizing price at round $t.$

\subsection{Minimax Lower Bound}\label{sec:lb}

\Cref{thm:lb} establishes the fundamental difficulty of the non-stationary contextual dynamic pricing problem, which, for the first time in the literature, fully characterizes the impact of the horizon $T$, the context dimension $d$, and the non-stationarity measure $s_T$ and $V_T$ on the regret $R_T^\pi$.
\begin{thm}[Minimax lower bound]\label{thm:lb}
Let $\mathcal M(s,V)$ denote the class of non-stationary contextual dynamic pricing models satisfying Assumptions \ref{ass_context}-\ref{ass_boundedness}, with at most $s$ stationary segments and at most $V$ design-adjusted variation budget. There exists an absolute constant $c_{lb}>0$ such that, for any non-anticipating
pricing policy $\pi$, and $T \gtrsim s d\log(dT) \wedge  dV\log^{3/2}(dT),$ it holds that
\[
    \inf_{\pi\in\Pi}\sup_{\mathbb P\in\mathcal M(s,V)}
    \mathbb{E}_{\mathbb P}R_T^\pi
    \ge
    c_{lb}\left\{
        \sqrt{sd T}
        \wedge
        (\sqrt{dT}+d^{1/3}V^{1/3}T^{2/3})
    \right\}.
\]
\end{thm}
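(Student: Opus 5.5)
The plan is to reduce the non-stationary lower bound to a stationary $d$-dimensional lower bound applied on blocks, and to prove that stationary bound with Assouad's lemma. Throughout I use the Gaussian linear demand $\psi(x)=x^2/2$, $a(\phi)=1$. For contexts I take $z_t$ uniform on $\{\sqrt{c}\,e_1,\dots,\sqrt{c}\,e_d\}$, rescaled so that $\Sigma_z=(c/d)I_d$; this satisfies \Cref{ass_context} with $\lambda_z\asymp 1/d$. With this design each coordinate behaves like an independent one-dimensional pricing problem that is visited on roughly a $1/d$ fraction of rounds. The first step is a stationary lower bound of order $\sqrt{dH}$ over any horizon $H\gtrsim d\log(dH)$.

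For the stationary bound I index parameters by hypercube vertices $\omega\in\{-1,1\}^d$. Set $\alpha_\omega=\alpha_0\mathbf 1+\delta\,\omega$ coordinatewise and $\beta=\beta_0\mathbf 1$, with $\alpha_0,\beta_0$ chosen so that \Cref{ass_boundedness} holds. When context $e_j$ appears, the optimal price is $p_j^*=(\alpha_0+\delta\omega_j)/(2\beta_0)$. Regret in such a round is $\beta_0(p_t-p^*_j)^2$, and the two candidate optima are separated by $\delta/\beta_0$. The key difficulty is the classical one from \cite{keskin2017chasing} and \cite{broder2012dynamic}: a fixed $\beta$ would leave a ``confounding price'' uninformative. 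To avoid this, I co-vary $\beta$ with $\alpha$ so that the uninformative price equals the optimal price. Concretely, the demand curves for $\omega_j=\pm1$ should cross at the common reference price $p_0$, namely $\alpha_{\omega,j}-\beta_{\omega,j}p_0$ is constant. Then the KL divergence per visit to coordinate $j$ is $\asymp \delta^2(p_t-p_0)^2$, while the regret is $\gtrsim \delta^2 \wedge (p_t-p^*)^2$ with $|p^*_{\pm}-p_0|\asymp\delta$.

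Assouad's lemma, applied coordinatewise, then gives
\[
\mathbb E R_H \gtrsim \sum_j\Big[\tfrac{H}{d}\delta^2\Big(1-\sqrt{\textstyle\sum \mathrm{KL}_j}\Big)\Big],
\]
and this is balanced against the information budget $\sum_j \mathrm{KL}_j\lesssim \delta^2\,\mathbb E\sum_{t:z_t=e_j}(p_t-p_0)^2\lesssim R_j$, the regret incurred on coordinate $j$. The standard dichotomy applies: either $R_j\gtrsim 1/\delta^2$, or the two hypotheses on coordinate $j$ are indistinguishable and $R_j\gtrsim (H/d)\delta^2$. Choosing $\delta^2\asymp\sqrt{d/H}$ gives $R_j\gtrsim\sqrt{H/d}$ per coordinate and hence $\sqrt{dH}$ in total. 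The condition $H\gtrsim d\log(dH)$ is used to ensure every coordinate is visited about $H/d$ times with high probability (Chernoff plus a union bound over $d$). It also keeps $\delta$ small enough for parameters to stay in $\Theta$. This Assouad step with diverging $d$ is the main obstacle: the per-coordinate KL must be bounded via the policy's own regret rather than a fixed exploration count, and the summation requires care.

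For the two non-stationary terms I split $[T]$ into blocks and draw an independent hypercube vertex per block, so lower bounds add across blocks. For the $\sqrt{sdT}$ term I use $s$ blocks of length $T/s$, each contributing $\sqrt{dT/s}$, for a total of $\sqrt{sdT}$. This needs $T/s\gtrsim d\log(dT)$, which holds in the regime where this term is the minimum. For the variation term I use $m$ blocks of length $H=T/m$, each contributing $\sqrt{dH}$, for a total of $\sqrt{dTm}$. The variation cost is $m\|\theta-\theta'\|^2_{\widetilde\Sigma_z}\asymp m\cdot(c/d)\,d\,\delta^2\asymp m\sqrt{d/H}$. Setting this equal to $V$ gives $m\asymp V^{2/3}T^{1/3}d^{-1/3}$, and the resulting regret is $\sqrt{dT m}=d^{1/3}V^{1/3}T^{2/3}$. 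Here the design adjustment by $c/d$ is exactly what produces the $d^{1/3}$ factor. The constraint $H\gtrsim d\log$ is what yields $T\gtrsim dV\log^{3/2}$. The $\sqrt{dT}$ term follows from the stationary bound with a single block. Finally, because each such instance lies in both $\mathcal M(s,V)$-type constraints, taking the better of the two constructions in each regime gives the stated minimum.
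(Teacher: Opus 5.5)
Your proposal is correct and follows essentially the same route as the paper's proof: Gaussian linear demand, contexts uniform on the canonical basis so that $\Sigma_z\propto I_d/d$, hypercube perturbations of $(\alpha,\beta)$ chosen so the competing demand curves cross at a fixed price, Assouad's lemma with the KL budget controlled by the policy's own regret, and independent bits on $s^*=s\wedge V^{2/3}T^{1/3}d^{-1/3}$ segments to enter the variation class. The differences are cosmetic: the paper runs one joint Assouad over all $s(d-1)$ bits instead of a stationary bound plus Bayesian additivity over blocks, and reserves one coordinate to force exactly $s$ segments.

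One step needs tidying. The information budget should read $\mathrm{KL}_j\lesssim\delta^2R_j+(H/d)\delta^4$ rather than $\lesssim R_j$. The additive term is $O(1)$ at $\delta^2\asymp\sqrt{d/H}$, so the constant in $\delta^2$ must be taken small for your dichotomy to close (the paper's condition $T\Delta^4/(sd)\le\eta_0$).
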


\Cref{thm:lb} unifies and generalizes three existing lower bound results in the dynamic pricing literature. First, when $s=1$ and $V=0$, \Cref{thm:lb} reduces to the $\Omega(\sqrt{dT})$ bound for stationary contextual dynamic pricing with a $d$-dimensional context as in \cite{xu2024pricing,zhao2026contextual}. Second, \Cref{thm:lb} reduces to the $\Omega(\sqrt{sT})$ bound for context-free (i.e.\ $d=1$) dynamic pricing under structured unstationarity with $s$ stationary segments as in \cite{zhao2026high}. Third, \Cref{thm:lb} reduces to the $\Omega\left(\left\{V^{(2)}\right\}^{1/3}T^{2/3}\right)$ bound for context-free dynamic pricing under unstructured unstationarity with a variation budget $V^{(2)}$ (note that $V^{(2)}\asymp V$ for $d=1$) as in \cite{keskin2017chasing}. Importantly, \Cref{thm:lb} explicitly characterizes the impact of $d$ on the regret, which is the first time seen in the non-stationary dynamic pricing literature.

In particular, to reveal the problem difficulty in terms of the dimension $d$, we construct a new class of hard problem instances, which consists of a sequence of linear demand models that can be grouped into highly indistinguishable pairs. Our construction is complex and, specifically, is of size $2^d$ times larger than the ones constructed under the context-free setting in \cite{keskin2017chasing} and \cite{zhao2026high}. The lower bound in \Cref{thm:lb} is further established with the Assouad's lemma~\citep[e.g.][]{tsybakov2009} by connecting the regret with the error of a multiple-classification problem. In contrast, without contexts, \cite{keskin2017chasing} and \cite{zhao2026high} only require the simpler Le Cam's lemma, which quantifies the error of a two-sample testing problem. We refer to Section \ref{sec:proof_of_lb} of the supplement for the detailed proof of \Cref{thm:lb}.

\section{A Multiscale Change-point Detection Based Algorithm}
\label{sec:upper_bound}
In this section, we propose a multiscale change-point detection based dynamic pricing algorithm, hereafter MCP-DP, which simultaneously achieves adaptivity to the nature of the non-stationarity and the optimal regret upper bound (up to logarithmic terms).

The adaptivity and optimality of MCP-DP are achieved by combining an explore-then-commit~(ETC) scheme, which ensures policy stability in the exploitation stage, with a multiscale change-point detection procedure based on a newly designed likelihood-ratio test~(LRT), which monitors non-stationarity and triggers restarts when necessary. \Cref{subsec:mcpdp-description} gives an overview of the MCP-DP algorithm, \Cref{subsec:mcpdp-detection} presents the LRT-based multiscale change-point detection subroutine, and \Cref{subsec:mcpdp-upper} establishes the regret upper bound of MCP-DP with a sketch of proof.

 \subsection{Algorithm Overview}
\label{subsec:mcpdp-description}
MCP-DP is formally presented in \Cref{alg:mcpdp}. To build intuition, we first give a high-level overview with a graphical illustration in \Cref{fig:mcpdp-graph}, before going into details of the algorithm.

MCP-DP proceeds by epochs. Within each epoch (say epoch $i$), the horizon is partitioned into dyadic blocks ($\mathcal{B}_{i,j}, j\in \mathbb N)$. To optimally handle the potential non-stationarity, within each block $\mathcal{B}_{i,j}$, MCP-DP implements an ETC dynamic pricing policy equipped with a multiscale change-point detection subroutine. In particular, it first learns a reference demand model, denoted by $\widehat\theta_{i,j-1}$, from the price exploration set accumulated in the previous block. In parallel, it carefully schedules an index set $\mathcal{S}_{i,j}$ of multiscale exploration intervals, whose union is denoted by $\cJ_{i,j}$, and a mandatory block-end exploration interval $\mathcal{E}_{i,j}$. Then, for rounds in $\mathcal{B}_{i,j}\setminus \{\mathcal{J}_{i,j}\cup \mathcal{E}_{i,j}\}$, MCP-DP runs a greedy pricing exploitation based on the estimated reference model $\widehat\theta_{i,j-1}$; while for rounds in $\mathcal{J}_{i,j}\cup \mathcal{E}_{i,j}$, MCP-DP runs a localized price exploration procedure to accumulate information for conducting change-point tests and model estimation. If all change-point tests conducted on $\mathcal{J}_{i,j}\cup \mathcal{E}_{i,j}$ are passed, MCP-DP moves to the next block and repeats the process; otherwise, it ends the current epoch and restarts with a new epoch.

\begin{figure}[!htp]
\centering
\resizebox{0.96\linewidth}{!}{%
\begin{tikzpicture}[
    x=0.75cm,y=0.82cm,
    font=\footnotesize,
    every node/.style={align=center},
    axis/.style={->,thick},
    block/.style={draw=black!45,fill=black!5,rounded corners=1pt},
    base/.style={draw=black!45,fill=black!5,rounded corners=1pt},
    update/.style={draw=teal!70!black,fill=teal!17,rounded corners=1pt},
    greedy/.style={line width=4.2pt,black!30},
    sched/.style={line width=4.2pt,orange!80!black},
    schedlong/.style={line width=4.2pt,orange!80!black},
    endwin/.style={line width=4.2pt,teal!75!black},
    explabel/.style={anchor=base,inner sep=1pt,font=\footnotesize},
    mlabel/.style={fill=white,inner sep=0.2pt,font=\tiny},
    guide/.style={draw=black!32,densely dashed,line width=0.45pt},
    callout/.style={fill=white,text=black,inner sep=2pt,font=\scriptsize,align=center},
    estarrow/.style={-{Latex[length=1.6mm]},thin,teal!70!black},
    estbrace/.style={decorate,decoration={brace,mirror,amplitude=3pt},teal!70!black,line width=0.5pt},
    estlabel/.style={fill=white,text=teal!60!black,inner sep=1pt,font=\scriptsize},
    restart/.style={red!75!black,thick,densely dashed}
]
    \node[left] at (-0.1,0.45) {blocks};
    \node[left] at (-0.1,-0.82) {exploration};
    \node[left] at (-0.1,-2.95) {exploitation};
    \draw[axis] (0,0) -- (13.25,0) node[right] {$t$};
    \node[below left,fill=white,inner sep=1pt] at (0,0) {$\tau_i$};

    \draw[base] (0,0.15) rectangle (1.4,0.75);
    \node at (0.7,0.45) {$\mathcal B_{i,0}$};

    \draw[block] (1.4,0.15) rectangle (2.8,0.75);
    \node at (2.1,0.45) {$\mathcal B_{i,1}$};
    \draw[block] (2.8,0.15) rectangle (5.6,0.75);
    \node at (4.2,0.45) {$\mathcal B_{i,2}$};
    \draw[block] (5.6,0.15) rectangle (9.5,0.75);
    \node at (7.55,0.45) {$\mathcal B_{i,3}$};

    \draw[guide] (0,-0.30) rectangle (9.5,-3.62);
    \draw[guide] (1.4,-0.30) -- (1.4,-3.62);
    \draw[guide] (2.8,-0.30) -- (2.8,-3.62);
    \draw[guide] (5.6,-0.30) -- (5.6,-3.62);

    \draw[endwin] (0.00,-0.82) -- (1.40,-0.82);
    \node[explabel] at (0.70,-1.27) {$\mathcal E_{i,0}$};
    \draw[endwin] (2.10,-0.82) -- (2.80,-0.82);
    \node[explabel] at (2.45,-1.27) {$\mathcal E_{i,1}$};
    \draw[sched] (2.85,-0.82) -- (3.35,-0.82);
    \node[mlabel] at (3.10,-0.48) {$m=0$};
    \draw[schedlong] (3.85,-0.82) -- (4.55,-0.82);
    \node[mlabel] at (4.20,-0.48) {$m=1$};
    \draw[endwin] (4.60,-0.82) -- (5.60,-0.82);
    \node[explabel,text=teal!60!black] (pool2) at (4.30,-1.27)
        {$\mathcal J_{i,2}\cup\mathcal E_{i,2}$};
    \draw[schedlong] (6.10,-0.82) -- (7.10,-0.82);
    \node[mlabel] at (6.60,-0.48) {$m=2$};
    \draw[schedlong] (8.80,-0.82) -- (9.50,-0.82);
    \node[mlabel] at (9.15,-0.48) {$m=1$};
    \node[red!75!black,font=\huge] at (9.50,-0.82) {$\star$};
    \node[right,text=red!75!black] at (9.70,-1.00) {test fails};

    \draw[decorate,decoration={brace,mirror,amplitude=4pt},black!55]
        (2.30,-1.55) -- (9.50,-1.55)
        node[pos=0.40,below=3pt,callout]
        {LRT on $\cJ=\cJ_{i,j}(s,m)$ or $\mathcal E_{i,j}$\\
        Fail if $\Lambda_{\cJ}(\widehat\theta_{i,j-1})>\gamma$};

    \draw[greedy] (1.40,-2.95) -- (2.10,-2.95);
    \draw[greedy] (2.80,-2.95) -- (2.85,-2.95);
    \draw[greedy] (3.35,-2.95) -- (3.85,-2.95);
    \draw[greedy] (4.55,-2.95) -- (4.60,-2.95);
    \draw[greedy] (5.60,-2.95) -- (6.10,-2.95);
    \draw[greedy] (7.10,-2.95) -- (8.80,-2.95);

    \draw[estbrace] (1.40,-3.18) -- (2.10,-3.18)
        node[midway,below=3pt,estlabel] {$p^*(z_t,\widehat\theta_{i,0})$};
    \draw[estbrace] (2.80,-3.18) -- (4.60,-3.18)
        node[midway,below=3pt,estlabel] {$p^*(z_t,\widehat\theta_{i,1})$};
    \draw[estbrace] (5.60,-3.18) -- (8.80,-3.18)
        node[midway,below=3pt,estlabel] {$p^*(z_t,\widehat\theta_{i,2})$};

    \node[estlabel] (theta0) at (1.40,1.02) {$\widehat\theta_{i,0}$};
    \node[estlabel] (theta1) at (2.80,1.02) {$\widehat\theta_{i,1}$};
    \node[estlabel] (theta2) at (5.35,1.02) {$\widehat\theta_{i,2}$};
    \draw[estarrow] (0.70,-0.72) to[out=95,in=205] (theta0.west);
    \draw[estarrow] (theta0.east) to[out=-5,in=110] (2.10,0.75);
    \draw[estarrow] (2.68,-0.72) to[out=95,in=250] (theta1.south);
    \draw[estarrow] (theta1.east) to[out=-5,in=110] (4.20,0.75);
    \draw[estarrow] (pool2.north) to[out=75,in=205] (theta2.west);
    \draw[estarrow] (theta2.east) to[out=-5,in=110] (7.55,0.75);

    \draw[restart] (9.50,-3.62) -- (9.50,1.15);
    \node[above,fill=white,text=red!75!black,inner sep=1pt] at (9.50,1.15) {restart};
    \draw[base] (9.50,0.15) rectangle (10.90,0.75);
    \node at (10.20,0.45) {$\mathcal B_{i+1,0}$};
    \draw[block] (10.90,0.15) rectangle (12.30,0.75);
    \node at (11.60,0.45) {$\mathcal B_{i+1,1}$};
    \node at (12.75,0.45) {$\cdots$};
    \node[below right,fill=white,inner sep=1pt] at (9.50,0) {$\tau_{i+1}$};

    \draw[{Latex[length=1.8mm]}-{Latex[length=1.8mm]},thick,black!55]
        (0,1.55) -- (9.50,1.55) node[midway,above] {epoch $i$};
    \draw[{Latex[length=1.8mm]}-{Latex[length=1.8mm]},thick,black!55]
        (9.50,1.55) -- (12.85,1.55) node[midway,above] {epoch $i+1$};

\end{tikzpicture}%
}
\caption{An illustration of MCP-DP in \Cref{alg:mcpdp}. The top row shows the dyadic blocks within each epoch. The next two rows separate the rounds within the blocks: (i).\ the exploration row consists of the mandatory block-end exploration (or initial exploration) intervals ($\mathcal{E}_{i,j}$, colored in \protect\textcolor{teal!75!black}{\protect\rule{2ex}{1ex}}) and the multiscale exploration intervals scheduled via \Cref{alg:multiscale-sampling} ($\cJ_{i,j}(s,m)$, colored in \protect\textcolor{orange!80!black}{\protect\rule{2ex}{1ex}}); (ii).\ the exploitation row consists of the greedy-pricing periods (colored in \protect\textcolor{black!30}{\protect\rule{2ex}{1ex}}). The reference estimator $\widehat\theta_{i,j}$ is estimated based on price exploration in $\mathcal{J}_{i,j}\cup\mathcal{E}_{i,j}.$
Both the scheduled multiscale exploration intervals and the block-end exploration intervals are examined via the likelihood-ratio based change-point test~(LRT) in \Cref{alg:lrt}. A failed test, marked by a red star (\protect\textcolor{red!75!black}{\protect{$\star$}}), triggers a restart. Here $\mathcal S_{i,j}$ indexes the scheduled multiscale pairs $(s,m)$, whereas $\cJ_{i,j}$ denotes the union of all exploration rounds indexed by $\mathcal S_{i,j}$.}
\label{fig:mcpdp-graph}
\end{figure}

We now give a detailed walkthrough of MCP-DP. At the beginning of epoch $i$ with start time $\tau_i$, since no information is available, MCP-DP initializes with a base exploration interval, $\mathcal B_{i,0}=\mathcal E_{i,0}=[\tau_i+1,\tau_i+L],$ which purely conducts price exploration. Here, $L=c_Ld\log(dT)$ sets the base block length, where $c_L>0$ is a constant tuning parameter. The remaining periods of the epoch then continue via dyadic blocks
\[
    \mathcal B_{i,j}
    =
    [\tau_i+2^{j-1}L+1,\tau_i+2^jL],
    \qquad j\ge 1 .
\]
As is outlined in \Cref{alg:mcpdp}, there are four key components of the block-wise operation of MCP-DP within each $\mathcal{B}_{i,j}$: (I) reference parameter estimation; (II) multiscale exploration scheduling; (III) price exploration and exploitation; and (IV) change-point detection. We now elaborate on these components.

\noindent\textbf{(I) reference model estimation.}
For an interval $\cI\subseteq[T]$, define 
\begin{equation}\label{eq:likelihood}
       \mathcal{L}_{\cI}(\theta)
    =
    \sum_{t\in\cI}
    \psi(x_t^\top\theta)-y_t x_t^\top\theta,
    \qquad
    x_t=(z_t^\top,-p_tz_t^\top)^\top
\end{equation}
as the negative log-likelihood function. Let $\widehat\theta_{\cI}=\argmin_{\theta\in\Theta} \mathcal{L}_{\cI}(\theta)$
be the corresponding MLE (note that $\widehat\theta_{\cI}$ is well-defined as $\Theta$ is a closed convex set and $\mathcal{L}_{\cI}(\theta)$ is a strictly convex function).

MCP-DP sets the reference model for $\mathcal{B}_{i,j}$ as $\widehat\theta_{i,j-1}:=\widehat\theta_{\mathcal E_{i,j-1}\cup \mathcal{J}_{i,j-1}}$, which is the MLE estimated based on the price exploration set $\mathcal{E}_{i,j-1}\cup \mathcal{J}_{i,j-1}$ accumulated in the previous block (see (II) for the detailed definition). MCP-DP then uses $\widehat\theta_{i,j-1}$ for greedy pricing in the price exploitation rounds of $\mathcal{B}_{i,j}$, while at the same time monitoring its performance via an LRT change-point test in the price exploration rounds of $\mathcal{B}_{i,j}$.


\noindent\textbf{(II) multiscale exploration scheduling.}
Due to the potential non-stationarity, it is clearly sub-optimal to blindly trust the reference model $\widehat\theta_{i,j-1}$ without tracking its performance. To this end, at the start of each dyadic block $\mathcal B_{i,j}$, MCP-DP calls the scheduler in \Cref{alg:multiscale-sampling} to (randomly) sample an index set $\mathcal{S}_{i,j}$ of (start, scale) pairs. For each pair
$(s,m)\in\mathcal S_{i,j}$, $s$ denotes its starting time and $m\in \{0,1,\cdots,j-1\}$ denotes its scale. Together, it forms a price exploration interval
\begin{align}\label{eq:def_J}
    \cJ_{i,j}(s,m)=[s,s+\ell_m-1]\cap\mathcal B_{i,j}, \quad\text{ with } \ell_m=\sqrt{d\log(dT)\times 2^m L}.
\end{align}
We write $\cJ_{i,j}:=\bigcup_{(s,m)\in\mathcal S_{i,j}}\cJ_{i,j}(s,m)$ as the union of all such exploration rounds. 

On each sampled $\cJ_{i,j}(s,m)$, MCP-DP then conducts localized price exploration to accumulate information and conducts a change-point test to track the performance of the reference model $\widehat\theta_{i,j-1}.$ The sampling probabilities of $(s,m)$ in \Cref{alg:multiscale-sampling} are carefully chosen so that shorter intervals (smaller $m$) are sampled more frequently whereas longer intervals (larger $m$) less, which creates sufficient monitoring coverage over $\mathcal{B}_{i,j}$ while keeping the number of price exploration rounds manageable. The multiscale nature of the exploration intervals indexed by $\mathcal{S}_{i,j}$ serves as a key component for the adaptivity of MCP-DP and we provide a more detailed discussion of this point in \Cref{subsec:mcpdp-detection}.

In addition, a mandatory block-end exploration interval $\mathcal{E}_{i,j}=[\tau_i+2^jL-\ell_j+1,\tau_i+2^jL]$ is further scheduled, which is designed to guarantee a minimum sample size of the price exploration rounds in $\mathcal{B}_{i,j}$, i.e.\ at least $\ell_j$, and thus ensure that the next reference model estimator $\widehat\theta_{i,j}$ is of sufficiently low variance. 

\noindent\textbf{(III) price exploration and exploitation.}
For each round in $\mathcal{B}_{i,j}\setminus \{\mathcal{J}_{i,j}\cup \mathcal{E}_{i,j}\}$, MCP-DP runs an ETC-style price exploitation by posting the greedy price $p^*(z_t,\widehat\theta_{i,j-1})$ based on the reference model $\widehat\theta_{i,j-1}$. For each round in $\mathcal{J}_{i,j}\cup \mathcal{E}_{i,j}$, MCP-DP runs a localized price exploration scheme. In particular, given a {fixed} deviation $\rho \in (0, (u-l)/2),$ it posts a perturbed greedy price at
\begin{align}\label{eq:local_price_exploration}
    p_t= \operatorname{clip}_{[l+\rho,u-\rho]}\left\{p^*(z_t,\widehat\theta_{i,j-1})\right\}+\rho\eta_t,
\end{align}
where $\operatorname{clip}_{[l+\rho,u-\rho]}(p):=(p\vee (l+\rho))\wedge(u-\rho)$ and $\eta_t$ is a Rademacher random variable.


The statistical information accumulated in the price exploration rounds are then used to (i).\ continuously track the performance of the current reference model $\widehat\theta_{i,j-1}$ in block $\mathcal{B}_{i,j}$ via a change-point test (see (IV)); (ii).\ provide a good estimation of the reference model $\widehat\theta_{i,j}$ for the next block.

In the non-stationary dynamic pricing literature, price exploration is commonly implemented via a standard price experiment, where one uniformly samples price from a pre-fixed price set such as $[l,u]$~\citep[e.g.][]{keskin2017chasing,zhao2026high}. In contrast, the localized price exploration scheme proposed in \eqref{eq:local_price_exploration} is more delicate. Intuitively, it can incur substantially less regret, especially for periods with low non-stationarity, as the posted price $p_t$ now centers around the greedy price $p^*(z_t,\widehat\theta_{i,j-1})$, which is the estimated optimal price that incorporates the context $z_t$. 
We note that a similar localized perturbation scheme is used in \cite{chai2024localized} for stationary contextual dynamic pricing to reduce the revenue loss from learning. Importantly, \Cref{lem:greedy-centered-information} shows that the localized price exploration scheme remains statistically valid in that it accumulates the same amount of statistical information~(measured by the eigenvalues of the design matrix) as a standard price experiment, which is key to the success of the LRT based change-point test.

\begin{lemma}\label{lem:greedy-centered-information}
Suppose Assumptions \ref{ass_context}-\ref{ass_boundedness} hold. There exist absolute constants $0<\underline\kappa\le\overline\kappa<\infty$, depending only on $u,l,\rho$, such that, for any
$\theta\in\Theta$ that is measurable w.r.t. $\mathcal{F}_{t-1}\backslash \{z_t\}$, we have
\begin{align}\label{eq:eigenvalue_lb_x}
    \underline\kappa \widetilde\Sigma_z
    \preceq
    \mathbb E[x_tx_t^\top\mid \theta]
    \preceq
    \overline\kappa \widetilde\Sigma_z,
\end{align}
where $p_t=\operatorname{clip}_{[l+\rho,u-\rho]}\{p^*(z_t,\theta)\}+\rho\eta_t$ is the perturbed price, $x_t=(z_t^\top,-p_tz_t^\top)^\top$ and $\widetilde\Sigma_z=I_2\otimes\Sigma_z$. 
\end{lemma}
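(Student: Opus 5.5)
We condition on $\theta$ throughout. Since $\theta$ is measurable w.r.t.\ $\mathcal F_{t-1}\backslash\{z_t\}$, and $z_t$ and $\eta_t$ are independent of this $\sigma$-field and of each other, we may treat $\theta$ as fixed. We then compute $\mathbb E[x_tx_t^\top\mid\theta]$ by first averaging over $\eta_t$ given $z_t$, and then over $z_t$. Write $\bar p_t:=\operatorname{clip}_{[l+\rho,u-\rho]}\{p^*(z_t,\theta)\}\in[l+\rho,u-\rho]$, which is a deterministic function of $z_t$ once $\theta$ is fixed. Then $p_t=\bar p_t+\rho\eta_t\in[l,u]$, and
\[
\mathbb E[x_tx_t^\top\mid z_t,\theta]=\mathbb E_{\eta}\begin{pmatrix}1&-p_t\\-p_t&p_t^2\end{pmatrix}\otimes z_tz_t^\top
=\underbrace{\begin{pmatrix}1&-\bar p_t\\-\bar p_t&\bar p_t^2+\rho^2\end{pmatrix}}_{=:A(\bar p_t)}\otimes z_tz_t^\top,
\]
using $\mathbb E\eta_t=0$ and $\mathbb E\eta_t^2=1$.

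The key step is a uniform spectral bound on the $2\times2$ matrix $A(\bar p)$ over $\bar p\in[l+\rho,u-\rho]$. We have $\det A(\bar p)=\rho^2$ and $\operatorname{tr}A(\bar p)=1+\bar p^2+\rho^2\le 1+u^2+\rho^2$. Hence
\[
\lambda_{\min}(A)\ge \frac{\rho^2}{1+u^2+\rho^2}=:\underline\kappa,\qquad \lambda_{\max}(A)\le 1+u^2+\rho^2=:\overline\kappa .
\]
Both constants depend only on $u$ and $\rho$, so in particular only on $(u,l,\rho)$. This gives $\underline\kappa I_2\preceq A(\bar p_t)\preceq\overline\kappa I_2$ almost surely.

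Next we transfer the bound through the Kronecker product. Since $z_tz_t^\top\succeq0$, and for PSD matrices $B\succeq C$ and $D\succeq0$ we have $B\otimes D\succeq C\otimes D$ (because $(B-C)\otimes D$ is PSD, its eigenvalues being products of eigenvalues), we obtain
\[
\underline\kappa I_2\otimes z_tz_t^\top\preceq \mathbb E[x_tx_t^\top\mid z_t,\theta]\preceq\overline\kappa I_2\otimes z_tz_t^\top .
\]
Taking expectation over $z_t$, which is i.i.d.\ and independent of $\theta$, preserves the Loewner order and yields $I_2\otimes\Sigma_z=\widetilde\Sigma_z$ on both sides. This proves \eqref{eq:eigenvalue_lb_x}.

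The main subtlety is justifying the conditioning. We need $z_t$ to be independent of $\theta$, which holds because $\theta$ depends only on past data and the $z_s$ are i.i.d. We also need $\eta_t$ to be independent of $z_t$ and $\theta$, which holds by construction of the Rademacher perturbation. Beyond that, the argument does not use the fact that $p^*$ is the optimal price: any measurable center in $[l+\rho,u-\rho]$ works.
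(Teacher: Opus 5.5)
Your proof is correct and takes essentially the same route as the paper: condition on $z_t$ to get $\mathbb E[x_tx_t^\top\mid z_t,\theta]=A(\bar p_t)\otimes z_tz_t^\top$, bound the eigenvalues of the $2\times2$ factor using $\det A=\rho^2$ and $\operatorname{tr}A\le 1+u^2+\rho^2$, and then average over $z_t$. You also justify two steps the paper leaves implicit, namely why the Loewner order passes through the Kronecker product and why conditioning on $\theta$ is legitimate.
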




\noindent\textbf{(IV) change-point test.}
When $t$ reaches the end of a scheduled multiscale exploration interval indexed by $\mathcal{S}_{i,j}$ or the end of the mandatory block-end exploration interval $\mathcal{E}_{i,j}$, MCP-DP conducts an LRT-based change-point test in \Cref{alg:lrt} to track the performance of $\widehat\theta_{i,j-1}$ on this interval (i.e.\ the regret it incurs). If the test fails, which suggests the presence of excessive non-stationarity in $\{\theta_t\}$, the current epoch is then terminated and MCP-DP restarts a new epoch. Otherwise, the algorithm proceeds to the next round. The change-point test serves as a key component for the optimality of MCP-DP and we provide a more detailed discussion in \Cref{subsec:mcpdp-detection}.

\begin{algorithm}[tbp]
\begin{algorithmic}[1]
    \State \textbf{Input}: Total rounds $T$, price interval $[l,u]$,
    parameter space $\Theta$, tuning parameters $c_L,c_{\gamma}$, $\rho$.
    \vskip 1mm
    
    \State \textbf{Initialization}: set $t=1$, $i=1$; set base block length as $L=c_Ld\log(dT)$, set multiscale block length as
    $\ell_m= \sqrt{d\log(dT)\times 2^mL}$ for $m\geq 0$, and set LRT threshold as $\gamma=c_{\gamma}d\log(dT)$.
    \vskip 1mm
    
    \For{each epoch $i=1,2,3\cdots$ (until $T$ is reached)}
        \State [$\triangleright$ Epoch initialization]
        \Indent
        \State set $\tau_i=t-1$, $\mathcal{B}_{i,0}=[\tau_i+1,\tau_i+L]$, $\mathcal E_{i,0}:=\mathcal B_{i,0}$, $\mathcal{J}_{i,0}=\varnothing.$
        \For{$t=\tau_i+1,\cdots,\tau_i+L$}
        \State uniformly sample price $p_t$ from $[l,u]$, record $y_t$ and $x_t=[z_t^\top,-z_t^\top p_t]^\top$.
        \EndFor
        \EndIndent
        \State [$\triangleright$ Block-wise ETC dynamic pricing with multiscale change-point detection]
        \Indent
            \For{$j=1,2,\cdots$}
        \State set the $j$th block as $\mathcal{B}_{i,j}=[\tau_i+2^{j-1}L+1,\tau_i+2^jL]$.

        \State \fbox{$\triangleright$ I.\ reference parameter estimation}
        \State set $\widehat\theta_{i,j-1}$ as the MLE of \eqref{eq:likelihood} based on the price exploration data in $\mathcal{J}_{i,j-1}\cup \mathcal{E}_{i,j-1}$.

        \State \fbox{$\triangleright$ II.\ multiscale exploration scheduling}
        \State set $\mathcal{S}_{i,j}$ as the output of the multiscale scheduling scheme $\text{MSS}(\tau_i,j,L)$ in \Cref{alg:multiscale-sampling}.
        \State set $\mathcal{J}_{i,j}=\bigcup_{(s,m)\in \mathcal{S}_{i,j}} \mathcal{J}_{i,j}(s,m)$ with $\mathcal{J}_{i,j}(s,m)=[s,s+\ell_m-1]\cap \mathcal{B}_{i,j}$.
        \State set the mandatory block-end exploration interval as $\mathcal{E}_{i,j}=[\tau_i+2^jL-\ell_j+1,\tau_i+2^jL].$     

        \State \fbox{$\triangleright$ III.\ price exploration and exploitation}
        \While{$t\le \tau_i+2^jL$}
            \State let $M_t=\big\{m:\text{there exists }(s,m)\in\mathcal{S}_{i,j}\text{ such that } s\le t\le s+\ell_m-1\big\}$.
            
            \If{$M_t\neq \varnothing$  or $t\in \mathcal{E}_{i,j}$}
                \State set $p_t= \operatorname{clip}_{[l+\rho,u-\rho]}\left\{p^*(z_t,\widehat\theta_{i,j-1})\right\}+\rho\eta_t$, where $\eta_t$ is i.i.d. Rademacher r.v's.
            \Else
                \State set $p_t=p^*(z_t,\widehat\theta_{i,j-1})$.
            \EndIf
            
            \State record $y_t$ and $x_t=[z_t^\top,-z_t^\top p_t]^\top$.
            
            \State \fbox{$\triangleright$ IV.\ change-point detection}
         \For{$(s,m)\in \mathcal{S}_{i,j}$ with $t=s+\ell_m-1$}
                \State let
                $\cJ_{i,j}(s,m)=[s,s+\ell_m-1]\cap\mathcal B_{i,j}$.
                \If{the change-point test LRT$(\widehat\theta_{i,j-1},
                \cJ_{i,j}(s,m))$ in \Cref{alg:lrt} fails}
                    \State set $t\leftarrow t+1$; restart and go to the next epoch (line 4).
                \EndIf
            \EndFor
            
            \If{$t=\tau_i+2^jL$ and LRT$(\widehat\theta_{i,j-1},
                \mathcal E_{i,j})$ in \Cref{alg:lrt} fails}
                    \State set $t\leftarrow t+1$; restart and go to the next epoch (line 4).
                \EndIf
            \State set $t\leftarrow t+1$.
        \EndWhile
    \EndFor
    \EndIndent
    \EndFor    
    \caption{The MCP-DP algorithm.}
    \label{alg:mcpdp}
\end{algorithmic}
\end{algorithm}

\begin{algorithm}[tbp]
\caption{The multiscale scheduling scheme. MSS$(\tau, j, L)$}
\label{alg:multiscale-sampling}
\begin{algorithmic}[1]
    \State \textbf{Input:} epoch start time $\tau$, block index $j$, base length $L$.
    \State Set $\mathcal S\leftarrow\varnothing$.
    \For{$k=0,1,\ldots,2^{j-1}-1$}
        \State Let $s_k=\tau+2^{j-1}L+1+kL$.
        \State Draw
        \[
        E_k\sim
        \operatorname{Bernoulli}\!\left(
        2^{-j/2}\sum_{m=0}^{j-1}2^{-m/2}
        \right).
        \]
        \If{$E_k=1$}
            \State Draw $m_k\in\{0,1,\ldots,j-1\}$ with
            $\mathbb P(m_k=r)\propto 2^{-r/2}$.
            \State Add $(s_k,m_k)$ to $\mathcal S$.
        \EndIf
    \EndFor
    \State Return $\mathcal S$. %
\end{algorithmic}
\end{algorithm}

\subsection{The Multiscale Change-Point Detection Subroutine}\label{subsec:mcpdp-detection}

In this subsection, we further unpack the two key elements of MCP-DP: the multiscale scheduling scheme in \Cref{alg:multiscale-sampling} and the likelihood ratio based change-point test in \Cref{alg:lrt}.

\begin{algorithm}[tbp]
\caption{The likelihood-ratio based change-point test. LRT$(\widehat\theta_{\mathrm{pre}},\cJ)$}
\label{alg:lrt}
\begin{algorithmic}[1]
     \State \textbf{Input:} detection threshold $c_{\gamma}$, reference parameter $\widehat{\theta}_{\rm pre}$, testing interval $\cJ$.
    \State Compute
    $\widehat\theta_{\cJ}=\argmin_{\theta\in\Theta}
    \mathcal{L}_{\cJ}(\theta)$.
    \If{$\mathcal{L}_{\cJ}(\widehat\theta_{\mathrm{pre}})
    -\mathcal{L}_{\cJ}(\widehat\theta_{\cJ})>c_{\gamma}d\log(dT)$}
        \State Return Fail.
    \EndIf
    \State Return Pass.
\end{algorithmic}
\end{algorithm}

\subsubsection{The multiscale scheduling scheme (\texorpdfstring{\Cref{alg:multiscale-sampling}}{Algorithm 2})}

The multiscale scheduling scheme~(MSS) is designed to bypass the challenge that MCP-DP does not know either the time or the magnitude of the non-stationarity in a given block $\mathcal{B}_{i,j}$. Instead of using a set of pre-fixed monitoring intervals on $\mathcal{B}_{i,j}$, MSS schedules a set of randomly sampled intervals with a wide range of {different} lengths (i.e.\ multiscale). 

The multiscale idea stems from the change-point estimation literature in statistics~\citep[e.g.][]{fryzlewicz2014wild, frick2014multiscale, yu2020review} and has been adopted in the non-stationary bandit literature~\citep[e.g.][]{auer19,chen2019new,wei2021non,suk2022tracking} in various forms. The key intuition is that a change of size $\delta$ in general requires a sample of size $\Omega(\delta^{-2})$ to be reliably detected~\citep{yu2020review}. Thus, one can use short windows to target large abrupt changes and long windows to target small smooth changes, hence multiscale.

Consider block $\mathcal B_{i,j}=[\tau_i+2^{j-1}L+1,\tau_i+2^jL]$. To generate the multiscale intervals, MSS first discretizes $\mathcal{B}_{i,j}$ into an $L$-spaced grid $\left\{s_k=\tau_i+2^{j-1}L+1+kL; ~k=0,1,\ldots,2^{j-1}-1\right\}$, which serves as the candidate set for the starting time of the multiscale interval. At each candidate $s_k$, MSS first draws
\begin{align}\label{eq:MSS_sampling_prob}
   E_k\sim
    \operatorname{Bernoulli}\!\left(
    2^{-j/2}\sum_{m=0}^{j-1}2^{-m/2}
    \right). 
\end{align}
If $E_k=0$, nothing is scheduled. If $E_k=1$, it then samples a scale $m_k\in \{0\}\cup [j-1]$ with probability $\mathbb P(m_k=r)\propto 2^{-r/2}$ for  $r=0,1,\ldots,j-1$, and adds $(s_k,m_k)$ to $\mathcal S_{i,j}$, which corresponds to the monitoring interval $\cJ_{i,j}(s_k,m_k)   =[s_k,s_k+\ell_{m_k}-1]\cap \mathcal B_{i,j}$. 

We remark that the sampling probabilities of $(s,m)$ in MSS are chosen so that shorter intervals (smaller $m$) are sampled more frequently whereas longer intervals (larger $m$) less, which creates sufficient monitoring coverage over $\mathcal{B}_{i,j}$ while keeping the number of price exploration rounds manageable. Indeed, we can show that, in high probability, the ratio between the size of the price exploration set $|\mathcal{J}_{i,j}\cup\mathcal{E}_{i,j}|$ and the block length $|\mathcal{B}_{i,j}|$ is of order $\sqrt{d/|\mathcal{B}_{i,j}|}$ (up to logarithmic terms), which is the optimal exploration-exploitation ratio for stationary contextual dynamic pricing~\citep{zhao2026contextual} and thus is important for the optimality of MCP-DP.


\begin{rem}\label{rem:compare_CPDP_MWDP}
An important reason why existing algorithms in the non-stationary dynamic pricing literature are non-adaptive is that they employ pre-fixed monitoring/estimation intervals designed to counter a specific type of change. For example, to handle unstructured changes with a known variation budget $V$, the MWDP algorithm in \cite{keskin2017chasing} schedules a fixed price exploration set every $(T/V)^{1/3}$ rounds; and to target abrupt changes (with additional conditions on minimum change size and stationary segment length), the CPDP algorithm in \cite{zhao2026high} schedules a fixed price exploration set every $\sqrt{T}$ rounds. In contrast, MCP-DP achieves adaptivity and removes the requirements on change size and segment length thanks to the use of the multiscale exploration intervals, which allows it to search non-stationarity over intervals of different lengths. We refer to the numerical experiments in \Cref{sec:experiments} for a detailed empirical comparison with CPDP and MWDP, which showcases the robustness and efficiency of MCP-DP.
\end{rem}

\subsubsection{The likelihood ratio based change-point test (\texorpdfstring{\Cref{alg:lrt}}{Algorithm 3})}
To avoid incurring large regret due to the non-stationarity of $\{\theta_t\}$, it is crucial to monitor the performance (i.e.\ its regret) of the reference model $\widehat\theta_{i,j-1}$, as it is used for price exploitation on $\mathcal{B}_{i,j}$. To this end, we design a novel likelihood ratio based change-point test~(LRT), which helps track the (unobserved) regret of $\widehat\theta_{i,j-1}$  on $\mathcal{B}_{i,j}$ when equipped with the scheduled multiscale intervals $\mathcal{S}_{i,j}$.

In particular, denote the reference model as $\widehat\theta_{\text{pre}}$, for any price exploration interval $\cJ$, we define the likelihood-ratio statistic
\begin{equation}\label{eq:lrt}
    \Lambda_{\cJ}(\widehat\theta_{\rm pre})
    =
    \mathcal{L}_{\cJ}(\widehat\theta_{\rm pre})
    -
    \mathcal{L}_{\cJ}(\widehat\theta_{\cJ}), \text{ with }
    \widehat\theta_{\cJ}=\arg\min_{\theta\in\Theta} \mathcal{L}_{\cJ}(\theta),
\end{equation}    
which measures the goodness of fit of $\widehat\theta_{\rm pre}$ relative to the best GLM estimator (i.e.\ $\widehat\theta_{\cJ}$) on the interval $\cJ$. Given a test threshold $\gamma=c_{\gamma}d\log(dT)$, where $c_\gamma>0$ is a constant tuning parameter, MCP-DP rejects the reference model and restarts whenever $\Lambda_{\cJ}(\widehat\theta_{\rm pre})>\gamma.$ Otherwise, the test passes and the current epoch continues. The newly designed LRT in \eqref{eq:lrt} is simple and computationally efficient as it anchors upon $\widehat\theta_{\rm pre}$ and requires only  computing the MLE $\widehat\theta_{\cJ}$ once per $\cJ$. In contrast, the classical LRT in the change-point literature~\citep[e.g.][]{yau2016inference, yu2020review} generally requires an additional scan over all time points in $\cJ$, which incurs substantially higher computational cost. 

Intuitively, the LRT in \eqref{eq:lrt} can help monitor potential changes in $\{\theta_t\}$: if the non-stationarity is mild, the reference model $\widehat\theta_{\rm pre}$ can still fit the observations in $\cJ$ sufficiently well so $\Lambda_{\cJ}(\widehat\theta_{\rm pre})$ is small, while if there is significant non-stationarity, $\widehat\theta_{\rm pre}$ will deviate from $\widehat\theta_{\cJ}$ and thus leads to a large $\Lambda_{\cJ}(\widehat\theta_{\rm pre})$. In the following, we formalize this intuition and, moreover, show that $\Lambda_{\cJ}(\widehat\theta_{\rm pre})$ in fact can be viewed as a surrogate for the (unobserved) exploitation regret $\widehat\theta_{\rm pre}$ incurred on a larger interval that $\cJ$ is embedded into.


We start by providing a high probability bound for the MLE $\widehat\theta_{\cJ}$ and the negative log-likelihood function $\mathcal{L}_{\cJ}(\theta)$, which can be of independent interest as it gives a general result that applies to non-stationary $\cJ$ with a mixture of GLMs. 

\begin{prop}\label{prop:prediction}
There exist absolute constants $c_1,c_2, m_{\psi 2}, M_{\psi 2}>0$ that only depend on Assumptions \ref{ass_context}-\ref{ass_boundedness}, such that with probability larger than $1-3/(dT^4)$, the following results hold for any price exploration interval $\cJ$ with $|\cJ|\ge 32c_z^2(\underline{\kappa}\lambda_z)^{-1}\log(dT)$,
\begin{itemize}
    \item[(i)]  for any $\theta^*\in\Theta$,  $
   m_{\psi2}\|\widehat\theta_\cJ-\theta^*\|^2_{H_{\cJ}}/2\leq  \mathcal{L}_{\cJ}(\theta^*)-  \mathcal{L}_{\cJ}(\widehat{\theta}_{\cJ})\leq  ({2m_{\psi2}})^{-1}\|\nabla \mathcal{L}_{\cJ}(\theta^*)\|^2_{H_{\cJ}^{-1}};$
\item[(ii)]  moreover, $\|\nabla \mathcal{L}_{\cJ}(\theta^*)\|_{H_{\cJ}^{-1}}^2
    \le
    c_1d\log(dT)+c_2|\cJ|V_{\cJ},$
where $\theta^*$ can be any $\theta_s$ with $s\in\cJ$.
\end{itemize}
Here, $H_{\cJ}=\sum_{t\in\cJ}x_tx_t^{\top}$ is the design matrix of $\mathcal{L}_{\cJ}(\theta)$.
\end{prop}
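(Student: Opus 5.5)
The plan has three steps. First, show that the design matrix $H_{\cJ}$ concentrates around its mean, uniformly over all admissible intervals. Second, use strong convexity of the likelihood on $\Theta$ to get the two-sided bound in (i). Third, bound the score at $\theta^*$ by splitting it into a noise martingale and a misspecification (drift) term, which gives (ii).

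\textbf{Step 1: design concentration.} Conditionally on the past, each $x_t$ with $t\in\cJ$ is a price-exploration covariate. Lemma \ref{lem:greedy-centered-information} then gives
\[
\underline\kappa\widetilde\Sigma_z\preceq \E[x_tx_t^\top\mid\mathcal F_{t-1}\backslash\{z_t\}]\preceq\overline\kappa\widetilde\Sigma_z .
\]
(For the initial block, uniform pricing gives the same sandwich with possibly different constants.) I would apply a matrix Freedman/Bernstein inequality to the whitened martingale differences $\widetilde\Sigma_z^{-1/2}(x_tx_t^\top-\E[\cdot])\widetilde\Sigma_z^{-1/2}$. Their operator norm is at most $c\,c_z^2/\lambda_z$, using $\|x_t\|^2\le (1+u^2)c_z^2$. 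This yields, for $|\cJ|\ge 32c_z^2(\underline\kappa\lambda_z)^{-1}\log(dT)$,
\[
\tfrac{\underline\kappa}{2}|\cJ|\widetilde\Sigma_z\preceq H_{\cJ}\preceq 2\overline\kappa|\cJ|\widetilde\Sigma_z ,
\]
with failure probability at most $1/(dT^6)$ per interval. A union bound over the at most $T^2$ intervals $[a,b]\subseteq[T]$ gives failure probability $1/(dT^4)$. The randomly scheduled $\cJ$'s are also intervals, so the uniform statement covers them.

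\textbf{Step 2: part (i).} By Assumptions \ref{ass_glm}(b) and \ref{ass_boundedness}(a), $x^\top\theta$ ranges over a compact set for all $x\in\mathcal X$ and $\theta\in\Theta$. Hence $m_{\psi2}\le\psi''\le M_{\psi2}$ there, and a second-order Taylor expansion gives
\[
\mathcal L_\cJ(\theta)\ge \mathcal L_\cJ(\theta')+\nabla\mathcal L_\cJ(\theta')^\top(\theta-\theta')+\tfrac{m_{\psi2}}2\|\theta-\theta'\|_{H_\cJ}^2
\]
for $\theta,\theta'\in\Theta$.
\begin{itemize}
\item \emph{Lower bound.} Take $\theta=\theta^*$ and $\theta'=\widehat\theta_\cJ$. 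Since $\Theta$ is convex, the first-order optimality condition gives $\nabla\mathcal L_\cJ(\widehat\theta_\cJ)^\top(\theta^*-\widehat\theta_\cJ)\ge0$, which yields the left inequality.
\item \emph{Upper bound.} Take $\theta=\widehat\theta_\cJ$ and $\theta'=\theta^*$. Then $\mathcal L_\cJ(\theta^*)-\mathcal L_\cJ(\widehat\theta_\cJ)\le \|\nabla\mathcal L_\cJ(\theta^*)\|_{H_\cJ^{-1}}\Delta-\tfrac{m_{\psi2}}2\Delta^2$ with $\Delta=\|\widehat\theta_\cJ-\theta^*\|_{H_\cJ}$. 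Maximizing over $\Delta$ gives the right inequality.
\end{itemize}
Invertibility of $H_\cJ$ comes from Step 1.

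\textbf{Step 3: part (ii).} Fix $\theta^*=\theta_s$ with $s\in\cJ$ and write
\[
\nabla\mathcal L_\cJ(\theta_s)=-\sum_{t\in\cJ}\epsilon_tx_t+\sum_{t\in\cJ}\{\psi'(x_t^\top\theta_s)-\psi'(x_t^\top\theta_t)\}x_t=:-S+D .
\]
\begin{itemize}
\item \emph{Noise term.} Step 1 lets me replace $H_\cJ^{-1}$ by $(\underline\kappa|\cJ|\widetilde\Sigma_z/2)^{-1}$. I would then bound $\|S\|^2_{(|\cJ|\widetilde\Sigma_z)^{-1}}\lesssim d\log(dT)$ by a self-normalized martingale inequality, or by a covering argument in dimension $2d$ combined with sub-Gaussian Freedman. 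A union bound over intervals again costs only $\log T$.
\item \emph{Drift term.} Write $D=\sum_t w_tx_t$ with $|w_t|\le M_{\psi2}|x_t^\top(\theta_s-\theta_t)|$. By the projection (Cauchy--Schwarz) inequality, $\|D\|_{H_\cJ^{-1}}^2\le\sum_t w_t^2\lesssim\sum_{t\in\cJ}\{x_t^\top(\theta_t-\theta_s)\}^2$. The conditional expectation of $\{x_t^\top(\theta_t-\theta_s)\}^2$ is at most $\overline\kappa\|\theta_t-\theta_s\|^2_{\widetilde\Sigma_z}$, and the martingale deviation of the sum is lower order by Freedman, since the summands are bounded by a constant. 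Finally, the two-point partition $\{s,t\}$ of the subinterval between $s$ and $t$ shows $\|\theta_t-\theta_s\|^2_{\widetilde\Sigma_z}\le V_\cJ$, so the drift contributes at most $c_2|\cJ|V_\cJ$ plus $O(\log(dT))$.
\end{itemize}

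\textbf{Main obstacle.} The hardest step will be making the self-normalized noise bound and the drift deviation hold \emph{uniformly} over all intervals and all choices of $s\in\cJ$. The difficulty is that the prices depend on past-data estimates, so $x_t$ is only predictable, not independent. I would handle this by working with fixed deterministic intervals, applying martingale concentration to each, and taking union bounds at polynomial cost in $T$. The constant $3$ in the probability $1-3/(dT^4)$ reflects the three events: design concentration, noise, and drift.
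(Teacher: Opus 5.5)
Your proposal is correct and has the same skeleton as the paper. It uses a design sandwich $H_{\cJ}\asymp|\cJ|\widetilde\Sigma_z$. Part (i) follows from $m_{\psi2}H_{\cJ}\preceq\widetilde H_{\cJ}(\theta)\preceq M_{\psi2}H_{\cJ}$ on $\Theta$, the first-order optimality condition, and completing the square. Part (ii) splits the score into a noise term and a misspecification term, with the projection bound $\|X_{\cJ}^\top b_{\cJ}\|_{H_{\cJ}^{-1}}^2\le\|b_{\cJ}\|^2$ for the latter.

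The real difference is the probabilistic engine, and the ``main obstacle'' you flag does not arise here. Every price exploration interval lies inside a single block, on which the pricing centre $\widehat\theta_{i,j-1}$ is frozen by the ETC structure. So, conditionally on the pre-block history, the covariates $\{x_t\}_{t\in\cJ}$ are i.i.d.\ and independent of the within-block noise. The paper exploits this and uses only i.i.d.\ tools:
\begin{itemize}
\item matrix Chernoff for the design (Lemma~\ref{lem:mcpdp-tech-hessian});
\item scalar Bernstein for $\sum_t(x_t^\top\Delta_t)^2$, whose range scales with $\max_t\|\Delta_t\|_{\widetilde\Sigma_z}^2\le V_{\cJ}$, so the deviation is absorbed multiplicatively into $|\cJ|V_{\cJ}$ via the length condition rather than appearing as your additive $O(\log(dT))$;
\item the Hsu--Kakade--Zhang quadratic-form bound for $\mathcal E_{\cJ}^\top X_{\cJ}H_{\cJ}^{-1}X_{\cJ}^\top\mathcal E_{\cJ}$ conditional on the design, which uses only that this is a projection of rank at most $2d$, so no whitening, covering or self-normalized argument is needed.
\end{itemize}
Your martingale route (matrix Freedman, a self-normalized or net-plus-Freedman noise bound, Freedman for the drift) is heavier but valid. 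It is also more robust: it would survive a pricing centre updated within a block. You also correctly note that the uniform-price block $\mathcal B_{i,0}$ needs its own sandwich constants.

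Two bookkeeping remarks. First, the paper proves the bounds for a fixed interval and takes the union bound only when defining the event $\mathcal A$, whereas you fold it in. Second, at the explicit threshold $32c_z^2(\underline\kappa\lambda_z)^{-1}\log(dT)$, standard matrix Chernoff or Freedman bounds do not give per-interval failure $1/(dT^6)$. The whitened increments are only bounded by $(1+u^2)c_z^2/\lambda_z$, and Freedman also brings in $\overline\kappa/\underline\kappa$. The constant must therefore be enlarged, as the paper's own Lemma~\ref{lem:mcpdp-tech-hessian} implicitly does with its threshold $96u_x^2\log(dT)/(\underline\kappa\lambda_z)$. This is harmless because $c_L$ is a free tuning constant.
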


\Cref{prop:prediction}(i) and (ii) together provide a high probability bound on the \textit{prediction} error (i.e.\ weighted by the design matrix) of MLE for a mixture of GLMs, i.e.
\begin{align}\label{eq:prediction_error}
    \|\widehat\theta_{\cJ}-\theta_s\|_{H_{\cJ}}^2
    \lesssim
    d\log(dT)+|\cJ|V_{\cJ} \text{ for any } s \in \cJ.
\end{align}
The first term in \eqref{eq:prediction_error} characterizes the prediction error due to statistical variance, while the second term characterizes the prediction error due to bias caused by non-stationarity within $\{\theta_s, s\in \cJ\}$. In particular, under stationarity (i.e.\ $V_{\cJ}=0$), \eqref{eq:prediction_error} recovers the well-known optimal prediction error bound of MLE for GLM in the statistical literature. We remark that since the prediction error is weighted by the design matrix, its upper bound is therefore free of the minimum eigenvalue $\lambda_z$ of the context covariance matrix $\Sigma_z$.

By \Cref{lem:greedy-centered-information} and the classical matrix concentration result, one can show that $H_{\cJ}/|\cJ| \asymp \widetilde{\Sigma}_z=I_2\otimes \Sigma_z$ for any price exploration interval $\cJ$ with high probability, where recall  $H_{\cJ}=\sum_{t\in\cJ}x_tx_t^{\top}$ is the design matrix of $\mathcal{L}_{\cJ}(\theta)$. Combining this result with \Cref{prop:prediction}, we further establish \Cref{prop:mcpdp-lrt}, which gives an upper and lower bound of the LRT $\Lambda_{\cJ}(\widehat\theta_{i,j-1})$.

\begin{corollary}\label{prop:mcpdp-lrt}
  There exist absolute constants $c_{\Lambda1}, c_{\Lambda2}, C_{\Lambda1}, C_{\Lambda2}>0$ that only depend on Assumptions \ref{ass_context}-\ref{ass_boundedness}, such that with probability larger than $1-3/(dT^{4})$, it holds
\begin{align}
    \Lambda_{\cJ}(\widehat\theta_{i,j-1})
    &\le
    C_{\Lambda1}
    \sum_{t\in \cJ} \|\widehat\theta_{i,j-1}-\theta_t\|_{\widetilde{\Sigma}_z}^2
    +
    C_{\Lambda2} \left\{
   {d\log(dT)}+|\cJ|V_{\cJ} \right\},
    \label{eq:lrt-upper-calibration}\\
    \Lambda_{\cJ}(\widehat\theta_{i,j-1})
    &\ge
    c_{\Lambda1}
    \sum_{t\in \cJ} \|\widehat\theta_{i,j-1}-\theta_t\|_{\widetilde{\Sigma}_z}^2
    -c_{\Lambda2}\left\{{d\log(dT)}+|\cJ|  V_{\cJ}\right\}.
    \label{eq:lrt-lower-calibration}
\end{align}
\end{corollary}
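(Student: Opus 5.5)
The plan is to compare $\widehat\theta_{i,j-1}$ with a fixed anchor $\theta_s$, $s\in\cJ$. We split
\[
\Lambda_{\cJ}(\widehat\theta_{i,j-1})=\{\mathcal L_{\cJ}(\widehat\theta_{i,j-1})-\mathcal L_{\cJ}(\theta_s)\}+\{\mathcal L_{\cJ}(\theta_s)-\mathcal L_{\cJ}(\widehat\theta_{\cJ})\}.
\]
By \Cref{prop:prediction}(i)--(ii), the second bracket lies in $[0,\,(2m_{\psi2})^{-1}\{c_1d\log(dT)+c_2|\cJ|V_{\cJ}\}]$. It therefore only affects the constants $C_{\Lambda2}$ and $c_{\Lambda2}$, so the real work is the first bracket.

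For the first bracket we Taylor expand around $\theta_s$. Write $\Delta=\widehat\theta_{i,j-1}-\theta_s$. Then
\[
\mathcal L_{\cJ}(\widehat\theta_{i,j-1})-\mathcal L_{\cJ}(\theta_s)=\nabla\mathcal L_{\cJ}(\theta_s)^\top\Delta+\tfrac12\Delta^\top\Big(\sum_{t\in\cJ}\psi''(\xi_t)x_tx_t^\top\Big)\Delta,
\]
where $\xi_t$ lies between $x_t^\top\theta_s$ and $x_t^\top\widehat\theta_{i,j-1}$. \Cref{ass_boundedness}(a), boundedness of $\mathcal X$, and convexity of $\Theta$ give $m_{\psi2}\le\psi''(\xi_t)\le M_{\psi2}$. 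Hence the quadratic term lies between $\frac{m_{\psi2}}2\|\Delta\|_{H_{\cJ}}^2$ and $\frac{M_{\psi2}}2\|\Delta\|_{H_{\cJ}}^2$. For the linear term, Cauchy--Schwarz in the $H_{\cJ}$ geometry, \Cref{prop:prediction}(ii), and $2ab\le \epsilon a^2+b^2/\epsilon$ give
\[
|\nabla\mathcal L_{\cJ}(\theta_s)^\top\Delta|\le \tfrac{m_{\psi2}}4\|\Delta\|_{H_{\cJ}}^2+C\{d\log(dT)+|\cJ|V_{\cJ}\}.
\]
The result is $\frac{m_{\psi2}}4\|\Delta\|_{H_\cJ}^2-C(\cdot)\le \mathcal L_{\cJ}(\widehat\theta_{i,j-1})-\mathcal L_{\cJ}(\theta_s)\le C'\|\Delta\|_{H_\cJ}^2+C(\cdot)$.

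Next we replace $H_{\cJ}$ by $|\cJ|\widetilde\Sigma_z$. The key point is that $\widehat\theta_{i,j-1}$ is computed from the previous block, so it is $\mathcal F$-measurable before $\cJ$ starts. The exploration prices in $\cJ$ are all centred at $p^*(z_t,\widehat\theta_{i,j-1})$, so \Cref{lem:greedy-centered-information} applies with this fixed $\theta$. A matrix Bernstein/Freedman bound for the bounded martingale sum $\sum_{t\in\cJ}\{x_tx_t^\top-\mathbb E[x_tx_t^\top\mid\widehat\theta_{i,j-1}]\}$, taken in the whitened coordinates $\widetilde\Sigma_z^{-1/2}$, then gives
\[
\tfrac{\underline\kappa}{2}|\cJ|\widetilde\Sigma_z\preceq H_{\cJ}\preceq 2\overline\kappa|\cJ|\widetilde\Sigma_z
\]
whenever $|\cJ|\ge 32c_z^2(\underline\kappa\lambda_z)^{-1}\log(dT)$. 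A union bound over the at most $T^2$ intervals and reference estimators keeps this on the $1-3/(dT^4)$ event, which is the same event used in \Cref{prop:prediction}. The whitened summands are bounded by $c_z^2(1+u^2)/\lambda_z$, which is exactly why the length condition carries the factor $\lambda_z^{-1}$.

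Finally we convert $|\cJ|\,\|\widehat\theta_{i,j-1}-\theta_s\|_{\widetilde\Sigma_z}^2$ into $\sum_{t\in\cJ}\|\widehat\theta_{i,j-1}-\theta_t\|_{\widetilde\Sigma_z}^2$. The triangle inequality gives
\[
\|\widehat\theta_{i,j-1}-\theta_t\|^2_{\widetilde\Sigma_z}\le 2\|\Delta\|^2_{\widetilde\Sigma_z}+2\|\theta_t-\theta_s\|^2_{\widetilde\Sigma_z},
\]
and the reverse inequality holds with $\tfrac12$ and a minus sign. Moreover $\|\theta_t-\theta_s\|^2_{\widetilde\Sigma_z}\le V_{\cJ}$ for $s,t\in\cJ$, by taking the two-point partition in \Cref{dfn_variation}. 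Summing over $t\in\cJ$ yields $|\cJ|\|\Delta\|^2_{\widetilde\Sigma_z}=\sum_t\|\widehat\theta_{i,j-1}-\theta_t\|^2_{\widetilde\Sigma_z}$ up to factor $2$ and an additive $O(|\cJ|V_{\cJ})$, which is absorbed in the bias term. Combining the three steps proves both \eqref{eq:lrt-upper-calibration} and \eqref{eq:lrt-lower-calibration}.

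I expect the main obstacle to be the uniform matrix concentration for $H_{\cJ}$. It must hold simultaneously over all random, data-dependent intervals and reference parameters, with only $\lambda_z>0$ assumed. I would handle it by conditioning on $\widehat\theta_{i,j-1}$ and union-bounding over the polynomially many candidate intervals.
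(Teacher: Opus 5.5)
Your proof is correct, but it is organized differently from the paper's. The paper never splits $\Lambda_{\cJ}$ at a true parameter. For the upper bound it applies part (i) of \Cref{prop:prediction} with $\theta^*=\widehat\theta_{i,j-1}$, giving $\Lambda_{\cJ}(\widehat\theta_{i,j-1})\le (2m_{\psi2})^{-1}\|\nabla\mathcal L_{\cJ}(\widehat\theta_{i,j-1})\|^2_{H_{\cJ}^{-1}}$. It then moves the gradient to $\theta^*$ using the triangle inequality and the gradient-Lipschitz bound $\|\nabla\mathcal L_{\cJ}(\theta_1)-\nabla\mathcal L_{\cJ}(\theta_2)\|^2_{H_{\cJ}^{-1}}\le M_{\psi2}^2\|\theta_1-\theta_2\|^2_{H_{\cJ}}$. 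That bound is part (ii) of the supplementary version and does not appear in the main-text statement. For the lower bound the paper uses strong convexity, $\Lambda_{\cJ}(\widehat\theta_{i,j-1})\ge \frac{m_{\psi2}}{2}\|\widehat\theta_{i,j-1}-\widehat\theta_{\cJ}\|^2_{H_{\cJ}}$, followed by $\|a-b\|^2\ge \frac12\|a\|^2-\|b\|^2$. This forces it to also control the prediction error $\|\widehat\theta_{\cJ}-\theta^*\|^2_{H_{\cJ}}$ of the local MLE.

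Your additive split at $\theta_s$, combined with one exact Taylor expansion, gives both directions at once. It needs only the gradient bound at $\theta_s$ and the excess-loss bound of the main-text proposition. For the lower bound it uses nothing about $\widehat\theta_{\cJ}$ beyond $\mathcal L_{\cJ}(\theta_s)\ge\mathcal L_{\cJ}(\widehat\theta_{\cJ})$. You also make explicit the passage from $|\cJ|\,\|\widehat\theta_{i,j-1}-\theta_s\|^2_{\widetilde\Sigma_z}$ to $\sum_{t\in\cJ}\|\widehat\theta_{i,j-1}-\theta_t\|^2_{\widetilde\Sigma_z}$ at an $O(|\cJ|V_{\cJ})$ cost, which the paper leaves implicit. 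What the paper's gradient-based form buys is reuse: the same three-term gradient decomposition drives \Cref{lem:mcpdp-restart-trigger}.

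The design comparison enters both proofs identically. Conditionally on $\widehat\theta_{i,j-1}$ and the MSS draws, the exploration covariates in $\cJ$ are i.i.d. (Lemma \ref{lem:greedy-centered-information}), so the i.i.d.\ matrix Chernoff bound suffices and no martingale inequality is needed.

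Two bookkeeping fixes. First, since your whitened summands are bounded by $c_z^2(1+u^2)/\lambda_z$, the length threshold must carry the factor $(1+u^2)$, as the paper itself remarks after the proof of the proposition. Second, a union bound over $T^2$ intervals cannot keep the probability at $1-3/(dT^4)$; it costs a factor $T^2$. The statement, like the supplementary version of the proposition, is for a fixed (planned) $\cJ$. There the design comparison \eqref{eq:mcpdp-design-comparison} is one of the three events making up the $1-3/(dT^4)$ probability. Uniformity over intervals is obtained only later, through the good event $\mathcal A$ at probability $1-6/T$.
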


\Cref{prop:mcpdp-lrt} indicates that the LRT $\Lambda_{\cJ}(\widehat\theta_{i,j-1})$ can be viewed as a surrogate of the design-adjusted prediction error $\sum_{t\in \cJ} \|\widehat\theta_{i,j-1}-\theta_t\|_{\widetilde{\Sigma}_z}^2$ of the reference model $\widehat\theta_{i,j-1}$ on any price exploration interval $\cJ$ in block $\mathcal{B}_{i,j}$, up to two terms due to statistical variance $d\log(dT)$ and non-stationarity related bias $|\cJ|V_{\cJ}.$ As will be seen, this serves as the foundation for using $\Lambda_{\cJ}(\widehat\theta_{i,j-1})$ to track the performance of $\widehat\theta_{i,j-1}$ on block $\mathcal{B}_{i,j}$. Below we unpack the intuition and high-level implication of \Cref{prop:mcpdp-lrt} and refer to the sketch of proof in \Cref{subsec:mcpdp-upper} for more formal arguments.

\textsc{[Control on restart]} First, the upper bound of LRT in \eqref{eq:lrt-upper-calibration} can be used to show that the total number of restarts of MCP-DP can be controlled by the statistical error and the intrinsic non-stationarity of the environment, as measured by the number of stationary segments $s_T$ and the variation budget $V_T$. This then helps bound the total regret of MCP-DP due to price exploration. We refer to Step~1 of the sketch of proof in \Cref{subsec:mcpdp-upper} for details.

\textsc{[Power against large exploitation regret]} Second, we further show that the lower bound of LRT in \eqref{eq:lrt-lower-calibration} implies that $\Lambda_{\cJ}(\widehat\theta_{i,j-1})$ can effectively track the exploitation regret of the reference model $\widehat\theta_{i,j-1}$ on any interval $\cI\subseteq \mathcal{B}_{i,j}$ that embeds $\mathcal{J}$ (i.e.\ $\mathcal{J}\subseteq \cI)$ and exhibits a low-level non-stationarity (i.e.\ $V_{\cI}\leq \sqrt{d/|\cI|}$). Importantly, we only need to focus on such $\cI$ as any block $\mathcal{B}_{i,j}$ can always be partitioned into such intervals (see Step 2 of the sketch of proof in \Cref{subsec:mcpdp-upper}). 

A key property of dynamic pricing is that the exploitation regret of $\widehat\theta_{i,j-1}$ on any interval $\cI\subseteq \mathcal{B}_{i,j}$ can be connected to its prediction error on $\cI$. More specifically, we have 
\begin{align}\label{eq:regret_to_prediction_error}
R(\cI)&:=\sum_{t\in\cI} r\big(p^*\left(z_t,\theta_t\right),z_t,\theta_t\big)-r\big(p^*(z_t,\widehat{\theta}_{i,j-1}),z_t,\theta_t\big)\nonumber\asymp \sum_{t\in\cI}  \left(p^*\left(z_t,\theta_t\right)-p^*(z_t,\widehat{\theta}_{i,j-1})\right)^2
\\
&\asymp \sum_{t\in\cI}(\theta_t-\widehat{\theta}_{i,j-1})^{\top} (I_2\otimes z_tz_t^{\top})(\theta_t-\widehat{\theta}_{i,j-1})\leq \sum_{t\in\cI}\|\theta_t-\widehat{\theta}_{i,j-1})\|^2_{\widetilde{\Sigma}_z}+c_{1*}\sqrt{|\cI|\log T},
\end{align}
where the first asymptotic equality follows from a Taylor expansion and the fact that $p^*(z_t,\theta_t)$ is the optimal price, the second asymptotic equality follows from the fact that the optimal price $p^*(z_t,\theta_t)$ is a Lipschitz function of $(z_t^\top \alpha_t, z_t^\top \beta_t)$ (see \Cref{subsec:regret_decompose_supplement} of the supplement), and the last inequality follows from Hoeffding's inequality and the fact that $\widehat\theta_{i,j-1}$ is independent of $z_t, t\in \cI.$


Combining \eqref{eq:lrt-lower-calibration} and \eqref{eq:regret_to_prediction_error}, with simple algebra and the fact $\mathcal{J}\subseteq \cI$, we can easily establish
\begin{align}\label{eq:SNR}
    \Lambda_{\cJ}(\widehat\theta_{i,j-1})\geq \underbrace{c_{\Lambda1}'|\cJ|\frac{R(\cI)}{|\cI|}}_{\text{signal due to regret}} - \underbrace{c_{\Lambda2}'\left(|\cJ|\sqrt{\frac{\log T}{|\cI|}} +|\cJ|V_{\cI}+d\log (dT)\right)}_{\text{noise due to statistical estimation and non-stationarity bias}},
\end{align}
for some absolute constant $c_{\Lambda1}', c_{\Lambda2}'>0$. Moreover, note that since $V_{\cI}\leq \sqrt{d/|\cI|}$, the noise term in \eqref{eq:SNR} can be well-controlled by $O(d\log(dT))$ given that $|\cJ|\leq \sqrt{d|\cI|\log (dT)}$. Therefore, the LRT $\Lambda_{\cJ}(\widehat\theta_{i,j-1})$ can be viewed as a surrogate for the (unobserved) regret $\widehat\theta_{i,j-1}$ incurred on $\cI$.

To achieve optimal regret (up to logarithmic terms), the LRT would need to, in high probability, flag intervals $\cI$ that admits excessive regret such that $R(\cI)\geq c_{2*}\sqrt{d|\cI|\log(dT)}$ for some $c_{2*}>0.$ Importantly, thanks to the multiscale nature of the price exploration mechanism of MCP-DP, we can show that for any such interval $\cI$, there always exist a multiscale level $m\in \{0,1,\cdots,j-1\}$ such that a scale-$m$ price exploration interval $\cJ\subseteq \cI$ is both sufficiently long for the regret signal in \eqref{eq:SNR} to exceed the LRT test threshold and sufficiently short to keep the noise in \eqref{eq:SNR} under control. In other words, if the reference model $\widehat\theta_{i,j-1}$ admits excessive exploitation regret on $\cI$, it will be flagged by the LRT $\Lambda_{\cJ}(\widehat\theta_{i,j-1})$ once a scale-$m$ price exploration interval $\cJ$ is scheduled within $\cI$. This is the foundation for the success of MCP-DP and we refer to Step 3 of the sketch of proof in \Cref{subsec:mcpdp-upper} for more formal arguments. We conclude this section with two remarks. 


\begin{rem}[ETC structure and MSS]
The success of LRT lies both on the ETC structure and the multiscale sampling scheme~(MSS) of MCP-DP. First, the ETC structure ensures that the greedy pricing policy remains unchanged over $\mathcal{B}_{i,j}$ (i.e.\ based on a fixed $\widehat{\theta}_{i,j-1}$), and thus ensures that $\cJ$ and $\cI$ with $\cJ\subseteq\cI$ share the same reference model $\widehat\theta_{i,j-1}$, which is the key for $\Lambda_{\cJ}(\widehat{\theta}_{i,j-1})$ to approximate the exploitation regret $R(\cI)$. Second, the sampling probability of MSS in \eqref{eq:MSS_sampling_prob} is designed to ensure that in high probability, a price exploration interval $\cJ$ with a suitable scale will eventually be scheduled inside an interval $\cI$ with excessive regret and thus restart MCP-DP, without letting too many such intervals slip through. 
\end{rem}

\begin{rem}[Comparison with the adaptive bandit literature]\label{rem:comparison_RL}
As discussed before, existing works in the adaptive non-stationary bandit literature~\citep[e.g.][]{auer19,chen2019new,wei2021non,suk2022tracking} cannot be used for contextual dynamic pricing. In particular, these methods track the changes of the reward achieved by the best-arm in a fixed action set (be it MAB or linear contextual bandit). However, this strategy does not apply to contextual dynamic pricing, as both the best arm (i.e.\ the optimal price) and its reward (i.e.\ the optimal revenue) change as the context $z_t$ varies, even on a stationary segment. We therefore propose a newly designed LRT to continuously track the regret of our pricing policy: we show that the LRT can be connected to the prediction error of GLM and thus can be used to track the regret of the greedy-pricing policy based on a reference model $\widehat\theta_{i,j-1}$, which serves as the foundation for the proposed MCP-DP algorithm.
\end{rem}



\subsection{Upper Bound Results}\label{subsec:mcpdp-upper}

We now provide the formal regret guarantee for MCP-DP. Recall we denote $s_T$ as the total number of stationary segments and $V_T$ as the design-adjusted variation budget. 
\begin{thm}
\label{thm:mcpdp-upper}
Suppose Assumptions \ref{ass_context}-\ref{ass_boundedness} hold.  Denote $\pi$ as the MCP-DP algorithm with tuning parameters 
$\rho\in(0,(u-l)/2)$,  $L=c_Ld\log(dT)$, $\gamma=c_{\gamma}d\log(dT)$, for some sufficiently large absolute constants $c_L,c_{\gamma}>0$. 
It holds that for all sufficiently large $T$, with probability larger than $1-9T^{-1}$,
\[
    R_T^{\pi}
    \le
    C_{\pi}\log^{3/2}(dT)
        \left(\sqrt{ds_TT}
        \wedge
        \left\{
        \sqrt{dT}
        +
        d^{1/3}V_T^{1/3}T^{2/3}
        \right\}
    \right),
\]
where $C_{\pi}>0$ is an absolute constant.
\end{thm}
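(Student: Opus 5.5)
We will work throughout on the intersection of the high-probability events of Proposition \ref{prop:prediction}, Corollary \ref{prop:mcpdp-lrt}, the matrix concentration $H_{\cJ}/|\cJ|\asymp\widetilde\Sigma_z$ (via Lemma \ref{lem:greedy-centered-information}), the Hoeffding step in \eqref{eq:regret_to_prediction_error}, and a Chernoff/union bound on the Bernoulli draws of Algorithm \ref{alg:multiscale-sampling}. Each of these fails with probability at most $O(T^{-1})$ after union bounds over all $O(T^2)$ intervals and blocks, which gives the $1-9T^{-1}$. The regret is split epoch by epoch and block by block into three pieces. The first is the exploration regret on $\mathcal{J}_{i,j}\cup\mathcal{E}_{i,j}$ and on the base blocks $\mathcal B_{i,0}$. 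The second is the exploitation regret on blocks that are not terminated. The third is the regret incurred in the final (failing) block of an epoch before the restart.

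\textbf{Step 1 (exploration cost and number of epochs).} By the MSS sampling probabilities, on block $\mathcal B_{i,j}$ the expected number of scheduled rounds is
\[
\sum_k 2^{-j/2}\sum_m 2^{-m/2}\cdot\frac{2^{-m/2}}{\sum_r 2^{-r/2}}\,\ell_m\lesssim 2^{j/2}\sqrt{dL\log(dT)}\,j\asymp\sqrt{d|\mathcal B_{i,j}|\log(dT)}\,\log T .
\]
Chernoff concentration then gives this with high probability, and $|\mathcal E_{i,j}|=\ell_j$ is of the same order. Each exploration round costs $O(1)$ regret. Summing dyadically over an epoch of length $n_i$ therefore gives $\widetilde O(\sqrt{dn_i}+d\log(dT))$. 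Summing over epochs and applying Cauchy--Schwarz gives $\widetilde O(\sqrt{dET})$, where $E$ is the number of epochs. To bound $E$ we use \eqref{eq:lrt-upper-calibration}. Suppose an epoch lies within a stationary segment. Then $\widehat\theta_{i,j-1}$ is fitted on $\gtrsim\ell_{j-1}$ exploration rounds, so by \eqref{eq:prediction_error} we have $\|\widehat\theta_{i,j-1}-\theta\|^2_{\widetilde\Sigma_z}\lesssim d\log(dT)/\ell_{j-1}$. Every test interval has $|\cJ|\le\ell_j\lesssim\ell_{j-1}$, so $\Lambda_\cJ\le C d\log(dT)<\gamma$ once $c_\gamma$ is large. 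Hence no false restart occurs, and $E\le s_T$.

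For the variation case, the same bound shows that a restart at time $t$ forces $|\mathcal I|V_{\mathcal I}\gtrsim d\log(dT)$ for the epoch interval $\mathcal I$, up to a bias term $|\cJ|V$ with the same bound. Because $V$ is superadditive over disjoint intervals, a standard counting argument with $|\mathcal I|V_{\mathcal I}\gtrsim d$ and $\sum V_{\mathcal I}\le V_T$ gives $E\lesssim 1+ (V_T T/d)^{1/3}$ up to logs, after also splitting epochs longer than $(dT^2/V_T)^{1/3}$. This yields $\sqrt{dET}\lesssim\sqrt{dT}+d^{1/3}V_T^{1/3}T^{2/3}$.

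\textbf{Step 2 (exploitation on passed blocks).} Fix a block $\mathcal B_{i,j}$. Partition it greedily into maximal intervals $\cI$ with $V_\cI\le\sqrt{d/|\cI|}$. The number of such pieces, summed over all blocks, is controlled by the same superadditivity count as in Step 1. On each piece, \eqref{eq:regret_to_prediction_error} and \eqref{eq:SNR} apply.

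\textbf{Step 3 (detection power, the main obstacle).} We must show that, with high probability, every piece $\cI$ satisfying $R(\cI)\ge c_{2*}\sqrt{d|\cI|\log(dT)}$ triggers a restart before too much of it elapses. For such $\cI$, choose the scale $m$ with $\ell_m\asymp\sqrt{d|\cI|\log(dT)}\cdot d\log(dT)/R(\cI)$, clipped to the admissible range. With this choice, the signal $|\cJ|R(\cI)/|\cI|$ exceeds $2\gamma$. The noise terms in \eqref{eq:SNR} stay $O(d\log(dT))$ because $|\cJ|\le\sqrt{d|\cI|\log(dT)}$ and $V_\cI\le\sqrt{d/|\cI|}$.

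The probability that a scale-$m$ interval starts at a given grid point is $\asymp 2^{-j/2}2^{-m/2}$. The interval $\cI$ contains $|\cI|/L$ grid points. The probability that no such interval is scheduled in the first half of $\cI$ is therefore $\le\exp(-c|\cI|2^{-(j+m)/2}/L)$, which is $\le T^{-3}$ whenever the regret is excessive. Taking a union bound over the $\le T^2$ intervals shows that undetected excessive-regret pieces have regret bounded by their threshold. The delicate part of this calculation is handling the randomness of the schedule, which is independent of the data, jointly with the adaptive stopping. We would handle this by conditioning on $\mathcal S_{i,j}$, fixed at the block start, and by using the fact that $\widehat\theta_{i,j-1}$ is frozen throughout the block.

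Summing the bound $\sqrt{d|\cI|\log(dT)}$ over all pieces and epochs, and applying Cauchy--Schwarz with the counts from Steps 1–2, gives the same orders as Step 1. The terminal failing block of each epoch adds at most the regret up to detection, which is covered by the same bound. Combining the two cases (the $s_T$ bound and the $V_T$ bound) through the minimum, with the extra $\log^{3/2}$ coming from the $\log T$ scales and the $\sqrt{\log}$ factors, proves Theorem \ref{thm:mcpdp-upper}.
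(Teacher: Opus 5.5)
Your overall architecture matches the paper's. It has the three-way regret split, a no-restart lemma driven by the upper LRT calibration, H\"older counting of epochs, a partition of each block into pieces with $V_\cI\le\sqrt{d/|\cI|}$, and LRT power at a suitably chosen scale. But Step 3 has a genuine gap: the claim that every piece with $R(\cI)\ge c_{2*}\sqrt{d|\cI|\log(dT)}$ is caught with probability $1-T^{-3}$ is false.

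Take the scale that actually makes the signal exceed $2\gamma$, namely $\ell_m\asymp|\cI|d\log(dT)/R(\cI)$. (Your displayed choice has $\sqrt{d|\cI|\log(dT)}$ in place of $|\cI|$, which gives a signal of only $d\log(dT)\sqrt{d\log(dT)/|\cI|}$.) The MSS probability $2^{-(j+m)/2}$ carries the block-level factor $2^{-j/2}$. Hence the expected number of scale-$m$ opportunities inside $\cI$ is
\[
\frac{|\cI|}{L}\,2^{-(j+m)/2}=\frac{|\cI|}{\sqrt{2^jL}\,\sqrt{2^mL}}\asymp\frac{R(\cI)}{\sqrt{d\,2^jL\log(dT)}},
\]
which at your threshold is $\asymp\sqrt{|\cI|/|\mathcal B_{i,j}|}\lesssim 1$. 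So a piece shorter than its block is missed with constant probability. Per-piece high-probability detection holds only once $R(\cI)\gtrsim\sqrt{d|\mathcal B_{i,j}|\log(dT)}\,\log T$. A union bound then charges $\widetilde O(\sqrt{d|\mathcal B_{i,j}|})$ to every piece, i.e.\ $\Gamma\sqrt{d|\mathcal B_{i,j}|}$ per block. In the piecewise-stationary case this only sums to order $s_T\sqrt{dT}$ rather than $\sqrt{s_TdT}$.

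The missing idea is to control missed detections jointly over all pieces of a block. Condition on the history before the block's MSS draw, not on $\mathcal S_{i,j}$, since conditioning on $\mathcal S_{i,j}$ removes exactly the randomness you need. Work with the planned (fictitious) block, so that its partition, the errors $\epsilon_k$ and the scales $m_k$ are all fixed. A missed $L$-grid opportunity on piece $k$ costs $\lesssim L\epsilon_k\asymp LD_0\sqrt{d/(2^{m_k}L)}$ and is taken with probability $2^{-(j+m_k)/2}$. The ratio of these two quantities is $D_0\sqrt{d2^jL}$, independent of $k$. The draws at distinct grid points are independent, and any single detection ends the epoch. Therefore the total regret accumulated over missed opportunities, across all pieces, has an exponential tail at scale $D_0\sqrt{d2^jL}$. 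This gives $\widetilde O(\sqrt{d|\mathcal B_{i,j}|})$ per block in total, which the dyadic structure and Cauchy--Schwarz sum to $\widetilde O(\sqrt{dTE})$.

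There is a secondary error in the epoch count of Step 1. The bias term is $|\cJ|V_\cI$ with $|\cJ|\lesssim\sqrt{d|\cI|\log(dT)}$. So a restart forces $V_\cI\gtrsim\sqrt{d/|\cI|}$, not $|\cI|V_\cI\gtrsim d\log(dT)$. H\"older then gives $E\le 1+d^{-1/3}T^{1/3}V_T^{2/3}$, with no splitting of epochs needed. Your stated bound $E\lesssim 1+(V_TT/d)^{1/3}$ would only yield $\sqrt{dET}\lesssim\sqrt{dT}+d^{1/3}T^{2/3}V_T^{1/6}$, not the claimed rate.
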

\Cref{thm:mcpdp-upper} matches the lower bound in \Cref{thm:lb} up to logarithmic factors and thus shows that MCP-DP achieves minimax optimality in all key quantities including the horizon $T,$ the context dimensionality $d$, the number of stationary segments $s_T$, and the variation budget $V_T.$

We remark that the upper bound in \Cref{thm:mcpdp-upper} does not rely on exact recovery of all change-points. This is important since under smooth drift or frequent small changes, identifying every change-point is either ill-posed or statistically impossible. Instead, MCP-DP only restarts when the reference model learned from past exploration significantly deviates from the current environment and causes substantial inflation of the exploitation regret. Changes that are too short or too small to be detected are instead absorbed into non-stationarity measured by $s_T$ or $V_T.$
This differs from change-point localization based pricing policies such as CPDP \citep{zhao2026high} and the CU policy of \cite{chen2019dynamic}, which require additional conditions on the change size and minimum spacing between change-points as their success crucially hinges on accurate identification of \textit{all} change-points.



\noindent\textbf{A Sketch of Proof.}
We first collect the high probability events used throughout the proof. By \Cref{lem:greedy-centered-information} and a  Chernoff type concentration bound for random matrices in \Cref{lem:mcpdp-tech-hessian} of the supplement, the design matrices of all price exploration intervals are well-conditioned. Combined with \Cref{prop:prediction}, this gives uniform prediction error bounds for all the MLEs computed in MCP-DP. On the same event,  \Cref{prop:mcpdp-lrt} provides upper and lower bounds for all the LRTs
used in MCP-DP.  A union bound gives the high probability in \Cref{thm:mcpdp-upper}.


\textsc{[Regret decomposition.]} For each block $\mathcal{B}_{i,j}$, we decompose it into $\mathcal{B}_{i,j}=\cJ_{i,j}\cup \mathcal{E}_{i,j}\cup\mathcal{G}_{i,j}$, where $\cJ_{i,j}=\bigcup_{(s,m)\in\mathcal{S}_{i,j}}\cJ_{i,j}(s,m)$ is the set of price exploration rounds scheduled via the multiscale sampling scheme~(MSS), $\mathcal{E}_{i,j}$ is the mandatory block-end
exploration set, and  $\mathcal G_{i,j}:=\mathcal B_{i,j}\setminus(\cJ_{i,j}\cup\mathcal E_{i,j})$ is the set of price exploitation rounds.  Denote the total number of epoch as $E$ and denote the total number of blocks in the $i$th epoch $\cI_i=[\tau_i+1,\tau_{i+1}]$ as $B_i$, where we define $\tau_1:=0$ and $\tau_{E+1}:=T.$ We can decompose the regret as   
\begin{align}\label{eq:regret_decompose_context_LRT}
    R_T&=\sum_{t=1}^T r(p^*_t;z_t, \theta_t)-r(p_t; z_t,\theta_t)\notag\\&=\left(\sum_{i=1}^E\sum_{t\in \mathcal{B}_{i,0}}+\sum_{i=1}^E\sum_{j=1}^{B_i}\sum_{t\in \cJ_{i,j}\cup\mathcal{E}_{i,j}}+\sum_{i=1}^E\sum_{j=1}^{B_i}\sum_{t\in \mathcal{G}_{i,j}}\right)\left[r(p^*_t; z_t, \theta_t)-r( p_t;z_t,  \theta_t)\right]
    \\&\notag=: R_{T,1}+R_{T,2}+R_{T,3},
\end{align}
where $R_{T,i},i=1,2,3$ correspond to regret generated by: (1) the initial exploration rounds after each restart, (2) the MSS-scheduled and block-end exploration rounds used for multiscale change-point testing and reference model estimation, and  (3) the
exploitation rounds, respectively. The first two terms can be controlled by upper bounding the number of epochs and thus the number of total exploration rounds. The main technical challenge is to control the exploitation regret $R_{T,3}$ via $s_T$ and $V_T$ without requiring the identification of all change-points.

\noindent\textsc{[Step 1: Bound total number of epochs and exploration regret.]}


Based on the upper bound of LRT in \Cref{prop:mcpdp-lrt}, \Cref{lem:mcpdp-restart-trigger} of the supplement establishes that
an epoch starting at $\tau_i+1$ cannot restart by time $t$ whenever
\begin{equation}\label{eq:upper-sketch-no-restart-radius}
    V_{[\tau_i+1,t]}
    \le
    \sqrt{\frac{d}{t-\tau_i}}.
\end{equation}
We now use \eqref{eq:upper-sketch-no-restart-radius} to bound the number of epochs $E$. First, if no change occurs during an epoch (i.e.\ stationary), the epoch cannot end with a restart. Hence every restart must be associated with at least one change-point, which gives $E\leq s_T$. Second, denote $\cI_1,\ldots,\cI_{E-1}$ as the completed epochs that end with a restart (i.e.\ $\cI_i=[\tau_{i}+1,\tau_{i+1}]$).  By the contrapositive of \eqref{eq:upper-sketch-no-restart-radius}, each $\cI_i$ must satisfy $    V_{\cI_i}>
    \sqrt{{d}/{|\cI_i|}}$.
Therefore, by H\"older's inequality,
\begin{align*}
    E-1
    &=
    \sum_{i=1}^{E-1}
    \left(\frac{|\cI_i|}{d}\right)^{1/3}
    \left(\frac{|\cI_i|}{d}\right)^{-1/3}\le
    d^{-1/3}
    \left(\sum_{i=1}^{E-1}|\cI_i|\right)^{1/3}
    \left(\sum_{i=1}^{E-1}\sqrt{\frac{d}{|\cI_i|}}\right)^{2/3}
    \notag\\
    &\le
    d^{-1/3}T^{1/3}
    \left(\sum_{i=1}^{E-1}V_{\cI_i}\right)^{2/3}
    \le
    d^{-1/3}T^{1/3}V_T^{2/3}.
\end{align*}
Hence, we have in \Cref{cor:mcpdp-number-epoch} of the supplement that
\begin{equation}\label{eq:upper-sketch-epoch-variation}
E\leq s_T\wedge (1+d^{-1/3}T^{1/3}V_T^{2/3}).
\end{equation}

Since each epoch begins with an initial exploration of $L$ rounds and $EL\leq T$, we have that
$$R_{T,1}\lesssim EL= O(\sqrt{ETL}).$$
For the multiscale exploration intervals, inside a block $\mathcal{B}_{i,j}$ of length $2^{j-1}L$, a scale-$m$ interval of length $\ell_m=\sqrt{d\log(dT)\,2^mL}$ is scheduled at a given $L$-spaced grid point with probability $q_{j,m}=2^{-(m+j)/2}$. Therefore, by Bernstein's inequality, we have with high probability, it holds that \begin{equation}\label{bound_explore}
    N_{i,j}\leq \widetilde{O}\left(\sum_{k=0}^{2^{j-1}-1}\sum_{m=0}^{j-1}q_{j,m}\times \sqrt{d\log(dT)2^m L}\right)=\widetilde{O}\left(\sqrt{d2^j L\log(dT)}\right),
\end{equation}
where $N_{i,j}$ is the total number of multiscale price exploration rounds in $\mathcal{B}_{i,j}$. Note that the mandatory block-end exploration set is also of length $\sqrt{d2^j L\log(dT)}$. By \eqref{bound_explore} and the fact that $\sum_{j=1}^{B_i}2^jL\leq 2|\cI_i|$ and $B_i\leq \log_2(T)$, we can show that 
\begin{align*}
 R_{T,2}\leq &\widetilde{O}\left(\sum_{i=1}^E\sum_{j=1}^{B_i}\sqrt{d2^j L\log(dT)}\right)\leq \widetilde{O}\left(\sqrt{d\log(dT)}\sum_{i=1}^E\sqrt{B_i\sum_{j=1}^{B_i}2^j L}\right)\\\leq& 
\widetilde{O}\left(\sqrt{d\log(dT)}\sum_{i=1}^E\sqrt{2|\cI_i|}\right)\leq\widetilde{O}(\sqrt{ETL}),  
\end{align*}
where the second and the last inequality holds by the H\"older's inequality and that $L\asymp d\log(dT).$

Hence, with the bound in \eqref{eq:upper-sketch-epoch-variation}, we obtain that
\[
    R_{T,1}+R_{T,2}
    \le
    \widetilde O(\sqrt{dTE})
    \le
    \widetilde O\!\left(
    \sqrt{ds_TT}
    \wedge
    \{\sqrt{dT}+d^{1/3}V_T^{1/3}T^{2/3}\}
    \right).
\]

\noindent\textsc{[Step 2: connect exploitation regret with prediction error.]}

By \eqref{eq:regret_to_prediction_error}, we can convert the exploitation regret on a block $\mathcal{B}_{i,j}$ to the design-adjusted prediction error of $\widehat\theta_{i,j-1}$. Therefore, to bound the total exploitation regret $R_{T,3}$, it suffices for us to analyze and bound  $\sum_{t\in\mathcal{B}_{i,j}}\|\theta_t-\widehat{\theta}_{i,j-1}\|_{\widetilde{\Sigma}_z}^2.$ (We remark that the full proof of \Cref{thm:mcpdp-upper} actually analyzes the realized block $\cI^{*}:=\mathcal B_{i,j}\cap[\tau_{i}+1,\tau_{i+1}]$, as a scheduled block $\mathcal{B}_{i,j}$ can be cut short due to the restart at $\tau_{i+1}$. However, here we use $\mathcal{B}_{i,j}$ for simplicity and intuition, albeit with less rigor.)



\Cref{lem:mcpdp-partition} of the supplement shows that there always exists a partition
$\mathcal{B}_{i,j}=\cI_1'\cup\cdots\cup\cI'_\Gamma$ such that $V_{\cI_k'}\le \sqrt{{d}/{|\cI_k'|}}$ (i.e.\ low non-stationarity) for $k=1,\cdots,\Gamma$ and $\Gamma \lesssim s_{\mathcal{B}_{i,j}} \wedge \left\{1+d^{-1/3}|\mathcal{B}_{i,j}|^{1/3}V_{\mathcal{B}_{i,j}}^{2/3}\right\}$. Thus, every partition element has low non-stationarity. Here, same as the result in Step 1,  
the first bound on $\Gamma$ counts the number of stationary segments within $\mathcal{B}_{i,j}$ and the second bound is related to the variation budget within $\mathcal{B}_{i,j}$.  

For notational simplicity, denote $\cI'=[s,e]$ as a generic interval $\cI'_k, k\in [\Gamma]$ in the partition of $\mathcal{B}_{i,j}$. Define $\epsilon_{\cI'}=\|\theta_s-\widehat\theta_{i,j-1}\|_{\widetilde\Sigma_z}^2.$ Given a threshold $D_0=c_D \sqrt{\log(dT)}$ for some constant $c_D>0$, it is easy to see that the prediction error of the reference model $\widehat\theta_{i,j-1}$ on $\mathcal{I}'$ can be decomposed as
\begin{align}
 \notag \sum_{t\in \cI'} \|\theta_t-\widehat\theta_{i,j-1}\|^2_{\widetilde{\Sigma}_z}&\leq 2\sum_{t\in \cI'} \|\theta_t-\theta_s\|^2_{\widetilde{\Sigma}_z} + 2\sum_{t\in \cI'} \|\theta_s-\widehat\theta_{i,j-1}\|^2_{\widetilde{\Sigma}_z} \\&\leq 2|\cI'|V_{\cI'} + 2\sqrt{d|\cI'|}D_0  + 2|\cI'|\epsilon_{\cI'} \mathbb I\{\epsilon_{\cI'}>D_0\sqrt{d/|\cI'|}\}.\label{eq:upper-sketch-local-split}
\end{align}
This is the result of Lemma \ref{lem:mcpdp-bound-on-I} in the supplement.

By construction, $V_{\cI'}\leq \sqrt{d/|\cI'|}$. Therefore, the first two terms in \eqref{eq:upper-sketch-local-split} contribute  $\widetilde{O}(\sqrt{d|\cI'|})$ to the regret. Combined with the bound of $\Gamma$, we have that 
\begin{equation}\label{bound_step2}
\sum_{k=1}^{\Gamma}\widetilde{O}(\sqrt{d|\cI_k'|})\leq \widetilde{O}\left(\sqrt{d\Gamma \sum_{k=1}^{\Gamma}|\cI_k'|}\right)=\widetilde{O}\left(\sqrt{ds_{\mathcal{B}_{i,j}}|\mathcal{B}_{i,j}|}\wedge \{\sqrt{d|\mathcal{B}_{i,j}|}+d^{1/3}|\mathcal{B}_{i,j}|^{2/3}V_{\mathcal{B}_{i,j}}^{1/3}\}\right).
\end{equation}
It remains to control the last term in \eqref{eq:upper-sketch-local-split}, which corresponds to exploitation regret on intervals where the reference model incurs large prediction error (i.e.\ $\epsilon_{\cI'} > D_0\sqrt{d/|\cI'|}$).

\noindent\textsc{[Step 3: Bound large prediction error via multiscale change-point testing.]}

Suppose there is large prediction error for some $\cI'$ in the partition of $\mathcal{B}_{i,j}$ with $V_{\cI'}\leq \sqrt{d/|\cI'|}$. In other words, the last term in \eqref{eq:upper-sketch-local-split} is active, i.e.\ $\epsilon_{\cI'}>D_0\sqrt{d/|\cI'|}$. We now show that the lower bound of LRT in \eqref{eq:lrt-lower-calibration} indicates that this excessive prediction error is detectable as long as MSS schedules an exploration interval of a suitable scale within $\cI'$.

In particular, a key result is \Cref{lem:detection_interval_context_LRT} in the supplement, which shows that, for all threshold $D_0=c_D\sqrt{\log(dT)}$ with a sufficiently large absolute constant $c_D>0$, there exists a scale $m\in\{0,1,\ldots,j-2\}$, whose exact value depends on $\epsilon_{\cI'}$, such that the following results hold,
\begin{align}\label{eq:key_lemma_D5_statement_sketch_of_proof}
D_0\sqrt{\frac{d}{2^{m+1}L}} \leq \epsilon_{\cI'} \leq D_0\sqrt{\frac{d}{2^mL}}
\quad \text{and} \quad 2^mL \leq |\cI'|.
\end{align}

Now, consider a scale-$m$ exploration interval ${\cJ}\subseteq \mathcal{I}'$. Such interval exists as by \eqref{eq:key_lemma_D5_statement_sketch_of_proof}, $|{\cJ}|=\ell_m=\sqrt{d2^mL\log(dT)}\leq
\sqrt{d|{\cI'}|\log(dT)}< |\cI'|$ for all grid size $L=c_Ld\log(dT)$ with $c_L>1.$ Moreover, this bound on $|\cJ|$ also implies that the noise term due to statistical variance and non-stationarity related bias in \eqref{eq:lrt-lower-calibration} can be upper bounded by $c_{3*}d\log(dT)$ for some absolute constant $c_{3*}>0$. At the same time, the lower bound on $\epsilon_{{\cI'}}$ in \eqref{eq:key_lemma_D5_statement_sketch_of_proof} ensures that, for all sufficiently large $c_D>0$, the signal in \eqref{eq:lrt-lower-calibration} can be lower bounded by $c_{4*}d\log(dT)$ where $c_{4*}>0$ is an absolute constant satisfying $c_{4*}-c_{3*}>c_\gamma$. Thus, it follows that
\[
\Lambda_{{\cJ}}(\widehat\theta_{i,j-1})
\geq(c_{4*}-c_{3*})d\log(dT)>\gamma.
\]
In other words, every scale-$m$ exploration interval $\mathcal{J}$ scheduled within $\cI'$ will trigger a restart.

Therefore, if MCP-DP proceeds onto an interval ${\cI_k'}=[s_k,e_k]$ with large prediction error $\epsilon_{\cI_k'}>D_0\sqrt{d/|\cI_k'|}$, it means that the Bernoulli draw of MSS must have missed every eligible scale-$m_k$ detection opportunity on $\cI'_k$, and all such opportunities on all $\cI'_i$ with $\epsilon_{\cI_i'}>D_0\sqrt{d/|\cI_i'|}$ and $i<k$. 

Now, define $R_{i,j}^{\text{miss}}=\sum_{k=1}^\Gamma 2|\cI'_k|\epsilon_{\cI'_k} \mathbb I\{\epsilon_{\cI'_k}>D_0\sqrt{d/|\cI'_k|}\} \mathbb I(e_k\leq \tau_{i+1})$, which denotes the sum of the last term in \eqref{eq:upper-sketch-local-split} over all $\cI'_k$ with $\epsilon_{\cI'_k}>D_0\sqrt{d/|\cI'_k|}$ across $\mathcal{B}_{i,j}$ before a restart is triggered. To bound $R_{i,j}^{\text{miss}}$, we essentially need to bound the regret accumulated on each $\cI'_k$ before a scale-$m_k$ exploration interval is scheduled on it (which then leads to a restart). In particular, since MSS samples every $L$ rounds, a missed scale-$m_k$ opportunity incurs at most $c_{5*} L\epsilon_{\cI'_k}$ regret where $c_{5*}$ is some absolute constant. By \eqref{eq:key_lemma_D5_statement_sketch_of_proof}, it holds that $c_{5*} L\epsilon_{\cI'_k} \asymp LD_0\sqrt{d/(2^{m_k}L)}$. Moreover, at each $L$-spaced grid, the scale-$m_k$ exploration interval is sampled with probability $2^{-(j+m_k)/2}$. Importantly, note that their ratio satisfies
\[
    \frac{LD_0\sqrt{d/(2^{m_k}L)}}{2^{-(j+m_k)/2}}
    =D_0\sqrt{d2^jL},
\]
which is independent of $m_k$ and thus independent of $\cI'_k$. Without going into technical details, this implies that the exploitation regret incurred on different $\cI'_k$ in $R_{i,j}^{\text{miss}}$ can be analyzed jointly. 

In particular, via an argument based on geometric distribution, \Cref{lem:mcpdp-bernoulli-miss} in the supplement shows that  
\[
    \mathbb P\!\left(
   R^{\rm miss}_{i,j}>
    4D_0\sqrt{d2^jL}\bigl(1+3\log(T)\bigr)
    \right)
    \leq T^{-3}.
\]
Thus $R^{\rm miss}_{i,j}\le\widetilde O(\sqrt{d2^jL})$ uniformly over all blocks with a union bound argument. Combining this with  \eqref{eq:upper-sketch-local-split} and \eqref{bound_step2} yields that, for every block $\mathcal{B}_{i,j}$ 
\[
    \sum_{t\in \mathcal{B}_{i,j}}
    \|\theta_t-\widehat\theta_{i,j-1}\|_{\widetilde\Sigma_z}^2
    \le
    \widetilde O\!\left(
    \left[
    \sqrt{ds_{\mathcal{B}_{i,j}}|\mathcal{B}_{i,j}|}
    \wedge
  \{ \sqrt{d|\mathcal{B}_{i,j}|}+ d^{1/3}|\mathcal{B}_{i,j}|^{2/3}V_{\mathcal{B}_{i,j}}^{1/3}\}
    \right]
    +
    \sqrt{d2^jL}
    \right).
\]

Finally, after summing over all dyadic blocks, and applying the upper bound of epoch number in \eqref{eq:upper-sketch-epoch-variation}, we can obtain 
\[
    R_{T,3}
    \le
    \widetilde O\!\left(
    \sqrt{ds_TT}
    \wedge
    \{\sqrt{dT}+d^{1/3}V_T^{1/3}T^{2/3}\}
    \right).
\]
This completes the proof.

\section{Numerical Experiments}\label{sec:experiments}
In this section, we conduct extensive numerical experiments to investigate the performance of the proposed MCP-DP algorithm for GLM-based dynamic pricing under non-stationarity. \Cref{subsec:generalsetting} discusses the general simulation settings, followed by numerical results in Section~\ref{subsec:num_synthetic}.

\subsection{General Simulation Settings}\label{subsec:generalsetting}

\textbf{Choice of tuning parameters $(c_L, c_\gamma)$.}  There are two key tuning parameters of MCP-DP: (i).\ the block base length $L=c_Ld\log (Td)$, and (ii).\ the change-point detection threshold $\gamma=c_{\gamma} d\log (Td).$ Note that Theorem \ref{thm:mcpdp-upper} holds for all sufficiently large $c_{L}$ and $c_{\gamma}$. In practice, we find that the performance of MCP-DP is robust to the choice of $c_L$ and thus fix $c_L=2$. We recommend setting $\gamma=\left\lceil d\{\log(Td)\}^{1.1}\right\rceil$, which avoids the tuning of $c_{\gamma}$. For any sufficiently large $Td$, this provides valid theoretical results and only inflates the regret by a factor of $\{\log(Td)\}^{0.1}$.

\medskip
\noindent \textbf{Benchmark methods.} We consider two {non-adaptive} methods for handling non-stationary dynamic pricing and use them as benchmarks to showcase the adaptivity and robustness of MCP-DP.
\begin{itemize}
    \item CPDP (change-point DP): CPDP is proposed in \cite{zhao2026high} for contextual dynamic pricing under structured changes. CPDP partitions the horizon $[T]$ into pre-fixed alternating cycles of price experimentation (of length $m$) and price exploitation (of length $n$). At the end of each experimentation cycle, it runs change-point detection on the accumulated price experiments so far to determine whether to continue and conduct greedy pricing in the upcoming exploitation cycle or to restart the algorithm, depending on if a change is detected. With a proper tuning of $m, n$ and a change detection threshold $\gamma$, CPDP can achieve a regret of order $\widetilde O(\sqrt{s_TdT})$ under abrupt changes {without} the knowledge of $s_T.$ In particular, CPDP can achieve the optimal $\widetilde O(\sqrt{dT})$ rate under the stationary setting.

    \item MWDP (moving-window DP): MWDP is proposed in \cite{keskin2017chasing} for covariate-free dynamic pricing under unstructured changes. MWDP also partitions the horizon $[T]$ into alternating cycles of price experimentation (of length $m$) and price exploitation (of length $n$). However, unlike CPDP, it adopts a moving-window strategy to handle non-stationarity, where for each upcoming exploitation cycle, it only uses the most recent $k$ cycles of price experiments for model estimation and greedy pricing. With a proper tuning of $m,n, k$, MWDP can be extended to non-stationary contextual dynamic pricing and achieve a regret of order $\widetilde O(d^{1/3}V_T^{1/3}T^{2/3})$ for a \textit{known} variation budget $V_T$. Note that by design, MWDP will always incur a regret of order $\widetilde O(T^{2/3})$ and is thus not optimal under stationarity.
\end{itemize}
For more details of CPDP and MWDP, we refer to Section \ref{sec:implementation_details} of the supplement.

\medskip
\noindent\textbf{Data generating process.} We consider both a linear demand model and a logistic demand model, where given $(z_t,p_t)$, the consumer demand $y_t$ is a Gaussian or Bernoulli random variable such that
\begin{align*}
    \mathbb E(y_t|z_t,p_t)=\psi'\left(\alpha_t^\top z_t-(\beta_t^\top z_t)\cdot p_t\right),
\end{align*}
where $\psi'(\cdot)$ is the identity or logistic function, respectively. The feature vectors $\{z_t=(z_{t,1},\cdots,z_{t,d})^{\top}\}$ are i.i.d.\ random vectors across time. We consider two simulation scenarios: 
\begin{itemize}
    \item \textbf{(Z1)}: $z_t=(1,z_{t,2},\cdots,z_{t,d})$ where $\{z_{t,i}\}_{i=2}^d$ are i.i.d.\ uniform$(0, 2/\sqrt{d-1})$ random variables;
    \item \textbf{(Z2)}: $z_t$ is uniformly drawn from the $d$ standard orthonormal basis $\{e_1,\ldots, e_d\}$ in $\mathbb R^d.$
\end{itemize}
Note that the behavior of the design matrix $\Sigma_z$ is significantly different under (Z1) and (Z2), where $\lambda_{\max}(\Sigma_z)\asymp 1$ and $\lambda_{\min}(\Sigma_z)\asymp 1/d$ under (Z1), while $\lambda_{\max}(\Sigma_z)=\lambda_{\min}(\Sigma_z) =1/d $ under (Z2). We defer the design of the non-stationarity of $\{\theta_t=(\alpha_t^\top,\beta_t^\top)^\top\}$ to the next subsection.
\medskip

\noindent\textbf{Implementation details.} We set the price range to be $[l,u]=[0.1,10]$ across all settings and methods. For the linear demand, we set $\ell_j=\sqrt{1/2\cdot d\log(dT)\times 2^jL}$ for MCP-DP. Following \cite{zhao2026high}, we set the initial exploration-exploitation tuning to be $m=2d(\log(dT))^{1.1}$ and $n=\sqrt{dT}$, and the change-point detection threshold $\gamma=d(\log(dT))^{1.1}$ for CPDP (see Section \ref{sec:implementation_details} of the supplement for more details). Since $V_T$ is unknown in practice, we use the fixed tuning proxy $V_{\rm tune}=1$ and set $m=2d(\log(dT))^{1.1}$, $n=d^{2/3}T^{1/3}$ and $k=d^{-1/3}T^{1/3}$ for MWDP, which is optimal when  the true variation budget is constant and comparable to $V_{\rm tune}$. For a fair comparison, we set 
$p_t=\text{clip}_{[l+\rho,u-\rho]}(\widehat p_t)+\rho\eta_t$ with $\rho=1$ for all three methods during price experimentation, where $\widehat p_t$ is the estimated optimal price at $t$ and $\eta_t$ is i.i.d.\ Rademacher random variable. For logistic demand, which is more challenging in terms of model estimation, we increase the price exploration length by a factor of $\sqrt{2}$ for all three methods, i.e.\ we set $\ell_j=\sqrt{d\log(dT)\times 2^jL}$ for MCP-DP, set $m=2\sqrt{2}d(\log(dT))^{1.1}$ for CPDP and MWDP. 

\subsection{Experiment Results}\label{subsec:num_synthetic}
We organize the numerical experiments into two schemes of non-stationarity. The first scheme considers baseline settings, where the non-stationarity is moderate and the non-adaptive benchmark methods are expected to remain competitive. The second scheme considers more complex settings, where the environment exhibits richer forms of non-stationarity that are more challenging to handle. We focus on the numerical results under simulation scenario {(Z1)} and provide remarks for its comparison with the results under {(Z2)}, which is detailed in the supplement.

\subsubsection{Baseline settings}\label{subsec:baseline_exp}
We set $\theta^{(1)}=(\alpha^{(1)\top}, \beta^{(1)\top})^\top$ with $\alpha^{(1)}=(0, 2/\sqrt{d-1}\cdot \textbf{1}_{d-1}^\top)^\top$ and $\beta^{(1)}=(0, 1/\sqrt{d-1}\cdot \textbf{1}_{d-1}^\top)^\top$, and set $\theta^{(2)}=(\alpha^{(2)\top}, \beta^{(2)\top})^\top$ with $\alpha^{(2)}=(0, 4/\sqrt{d-1}\cdot \textbf{1}_{d-1}^\top)^\top$ and $\beta^{(2)}=(0, 0.75/\sqrt{d-1}\cdot \textbf{1}_{d-1}^\top)^\top$. Note that $\theta^{(1)}$ denotes a low-demand period with low attractiveness $\mathbb E(\alpha_t^\top z_t)=2$ and high price sensitivity $\mathbb E(\beta_t^\top z_t)=1$, while $\theta^{(2)}$ denotes a high-demand season with high attractiveness $\mathbb E(\alpha_t^\top z_t)=4$ and low sensitivity $\mathbb E(\beta_t^\top z_t)=0.75$. We consider three non-stationarity settings:
\begin{itemize}
    \item \textbf{Op1} (no change): $\theta_t\equiv\theta^{(1)};$
    \item \textbf{Op2} (abrupt change):
    \[
    \theta_t=\begin{cases} 
		\theta^{(1)}, & t\in [1,T/4]\cup (T/2,3T/4],\\
		\theta^{(2)}, & t\in (T/4,T/2]\cup (3T/4,T];\\
	\end{cases}
    \]
    \item \textbf{Op3} (smooth change):
    \[
    \theta_t= (1+\cos(10\pi t/T))/2\cdot \theta^{(1)}
    + (1-\cos(10\pi t/T))/2 \cdot \theta^{(2)}.
    \]
\end{itemize}

Note that Op1 is the stationary case, while Op2 experiences three abrupt changes and is the ideal setting for CPDP, and Op3 undergoes smooth changes with a constant variation budget (i.e.\ $V_T\asymp 1$) and is the ideal setting for MWDP. Therefore, we run all three methods for Op1, MCP-DP and CPDP for Op2, and MCP-DP and MWDP for Op3. For more intuition, \Cref{fig:Theta_Path} of the supplement plots the sequence of true model parameters $\{\theta_t\}_{t=1}^T$ for Op1-Op3.

We set $T \in \{(1, 2,\ldots, 9,10,15,20,\cdots,40)\times 10^4\}$ and vary $d\in \{2,4,6,10,16\}$. Given $(T,d)$, we conduct 200 experiments for each algorithm and record the realized regrets $\{R_{T,d}^{(i)}\}_{i=1}^{200}$. \Cref{fig:linear_regular_case_z1} summarizes the performance for each algorithm under the linear demand model, where we report the sample mean regret $\overline{R}_{T,d}=\sum_{i=1}^{200}R_{T,d}^{(i)}/200$ and the 99\% confidence interval \ $I_{T,d}=[\overline{R}_{T,d}-3S_{T,d}/\sqrt{200}, ~\overline{R}_{T,d}+3S_{T,d}/\sqrt{200}]$, where $S_{T,d}$ is the sample standard deviation of $\{R_{T,d}^{(i)}\}_{i=1}^{200}$.

Several comments are in order. First, for all three settings, MCP-DP gives sublinear regret in $T$ while the regret also increases with $d$. In addition, its regret increases from stationarity (i.e.\ Op1) to structured changes (i.e.\ Op2) to unstructured changes (i.e.\ Op3), which agrees with Theorem \ref{thm:mcpdp-upper}. Second and moreover, MCP-DP provides performance on par with the best non-adaptive algorithm specifically designed for each non-stationary setting, i.e.\ CPDP under Op2 and MWDP under Op3, which showcases its adaptivity and efficiency. Third, under the stationary setting Op1, MCP-DP and CPDP provide similar and much better performance than MWDP, as both methods can achieve optimality under stationarity while MWDP cannot. Fourth, interestingly, the mean regret path of CPDP under Op2 is not always monotone. This is due to that CPDP uses pre-fixed price experimentation cycle, which can impact detection delay depending on the relative location of the experimentation cycles to the true change-points. In contrast, MCP-DP avoids such phenomenon thanks to its multiscale sampling scheme, which utilizes randomly sampled detection intervals.

\begin{figure}[ht]
\centering

\makebox[\textwidth][c]{%
\begin{minipage}{1.0\textwidth}
\centering

\begin{subfigure}[t]{0.43\linewidth}
    \centering
    \includegraphics[width=\linewidth]{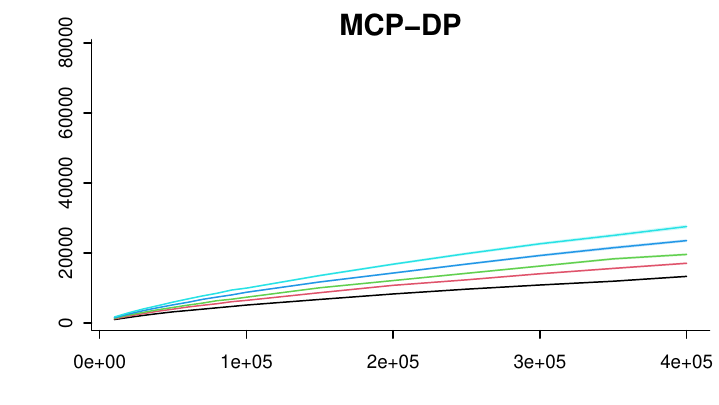}
\end{subfigure}%
\hspace{-1mm}%
\begin{subfigure}[t]{0.43\linewidth}
    \centering
    \includegraphics[width=\linewidth]{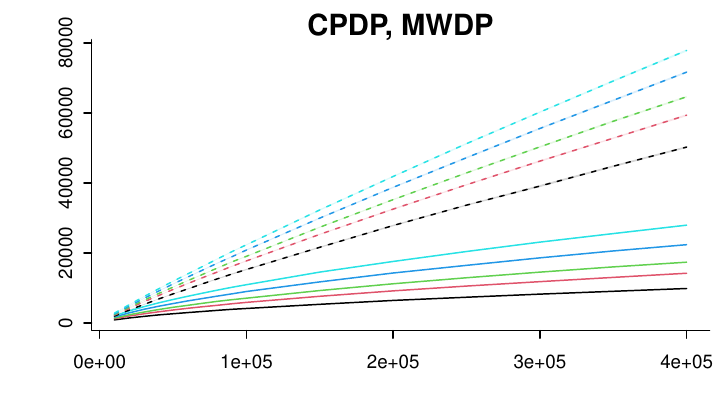}
\end{subfigure}%
\hspace{-1mm}%
\begin{subfigure}[t]{0.12\linewidth}
    \centering
    \includegraphics[width=\linewidth]{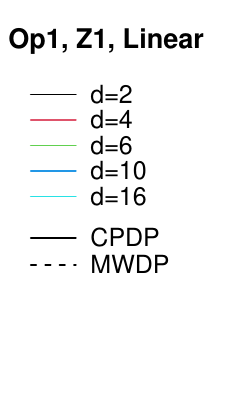}
\end{subfigure}

\vspace{-3mm}

\begin{subfigure}[t]{0.43\linewidth}
    \centering
    \includegraphics[width=\linewidth]{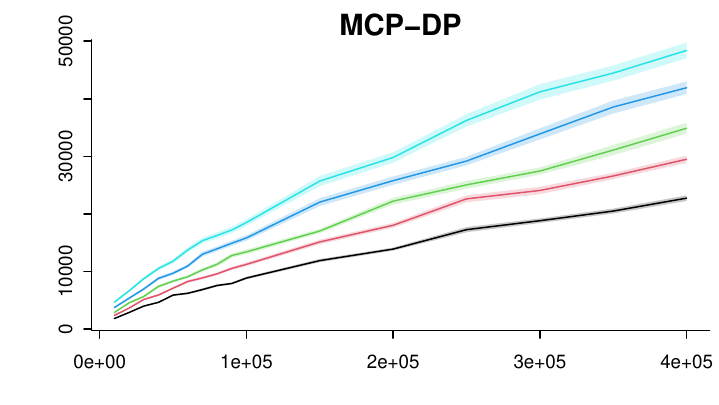}
\end{subfigure}%
\hspace{-1mm}%
\begin{subfigure}[t]{0.43\linewidth}
    \centering
    \includegraphics[width=\linewidth]{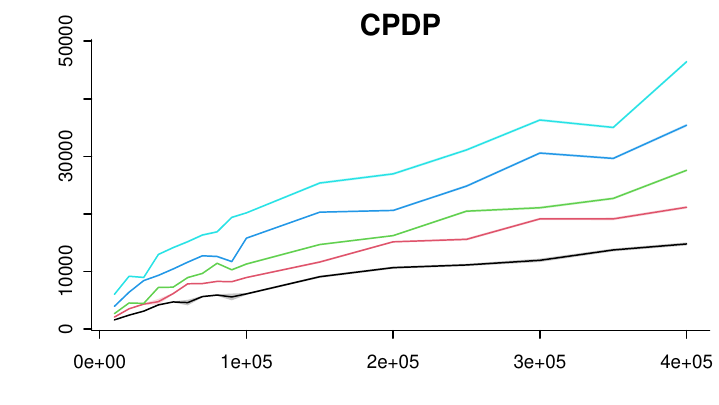}
\end{subfigure}%
\hspace{-1mm}%
\begin{subfigure}[t]{0.12\linewidth}
    \centering
    \includegraphics[width=\linewidth]{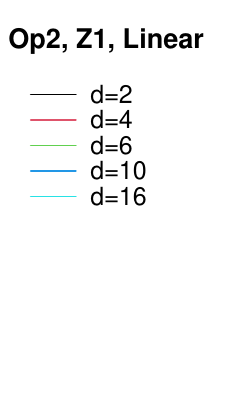}
\end{subfigure}

\vspace{-3mm}

\begin{subfigure}[t]{0.43\linewidth}
    \centering
    \includegraphics[width=\linewidth]{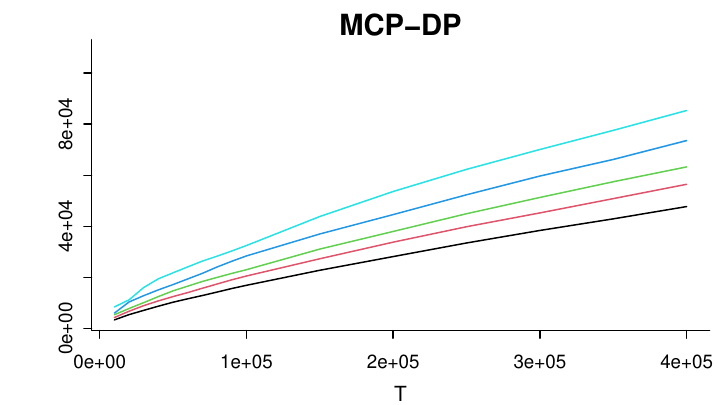}
\end{subfigure}%
\hspace{-1mm}%
\begin{subfigure}[t]{0.43\linewidth}
    \centering
    \includegraphics[width=\linewidth]{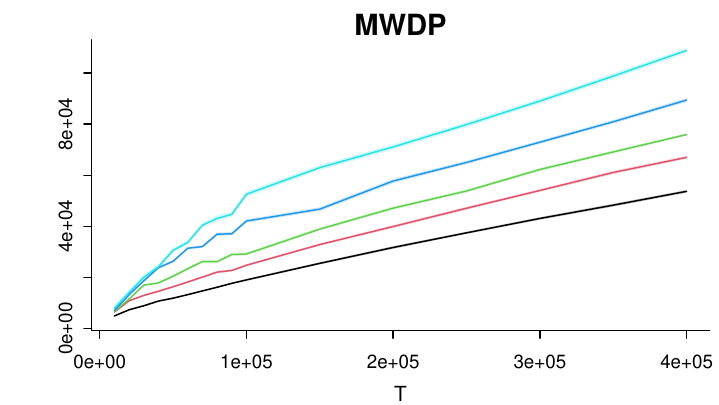}
\end{subfigure}%
\hspace{-1mm}%
\begin{subfigure}[t]{0.12\linewidth}
    \centering
    \includegraphics[width=\linewidth]{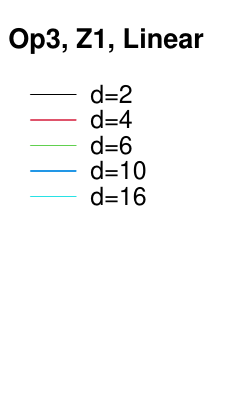}
\end{subfigure}

\end{minipage}%
}

\caption{Performance under baseline settings with linear demand and uniform contexts (Z1) in Op1 (no change), Op2 (abrupt change), Op3 (smooth change). Each row corresponds to a non-stationarity setting, with identical $y$-axis within the row for better visualization.}
\label{fig:linear_regular_case_z1}
\end{figure}

For more intuition of the adaptivity of MCP-DP, \Cref{fig:Regret_Path} plots its average regret path $\{\overline{R}_{t,d}, t\in [T]\}$ under Op1-Op3 for $T=4\times 10^5$ and $d\in \{2,4,6,10,16\}$. Furthermore, Table \ref{tab:mcp-dp} reports the average number of restarts triggered by failed LRTs of MCP-DP under Op1-Op3 for $d=10$. As can be seen, MCP-DP correctly identifies that Op1 is stationary while Op2 has three abrupt changes. On the other hand, to handle the smooth changes in Op3, MCP-DP restarts more often as $T$ increases.

\Cref{fig:logistic_regular_case_z1} summarizes the performance of all methods under the logistic demand, which gives essentially the same observations as that in \Cref{fig:linear_regular_case_z1} under the linear demand, further confirming the robustness of our theoretical results and numerical findings for the GLM setting.


\begin{table}[ht]
\centering
\begin{tabular}{lcccccccccc}
\hline \hline
$T$  & 10000 & 20000 & 40000 & 60000 & 80000 & 100000 & 200000 & 300000 & 400000 \\\hline
Op1 & 0.00 & 0.00 & 0.00 & 0.00 & 0.00 & 0.00 & 0.00 & 0.00 & 0.00 \\ 
Op2 & 3.00 & 3.00 & 3.00 & 3.00 & 3.00 & 3.00 & 3.00 & 3.00 & 3.00 \\ 
Op3 & 9.74 & 10.79 & 18.18 & 19.21 & 19.93 & 20.55 & 26.77 & 29.29 & 30.73\\
\hline \hline
\end{tabular}
\caption{Average number of restarts by MCP-DP under Op1, Op2, Op3 for $d=10$ with linear demand and uniform context (Z1).}
\label{tab:mcp-dp}
\end{table}

\begin{figure}[ht]
\centering

\makebox[\textwidth][c]{%
\begin{minipage}{1.12\textwidth}
\centering

\begin{subfigure}[t]{0.335\linewidth}
    \centering
    \includegraphics[width=\linewidth]{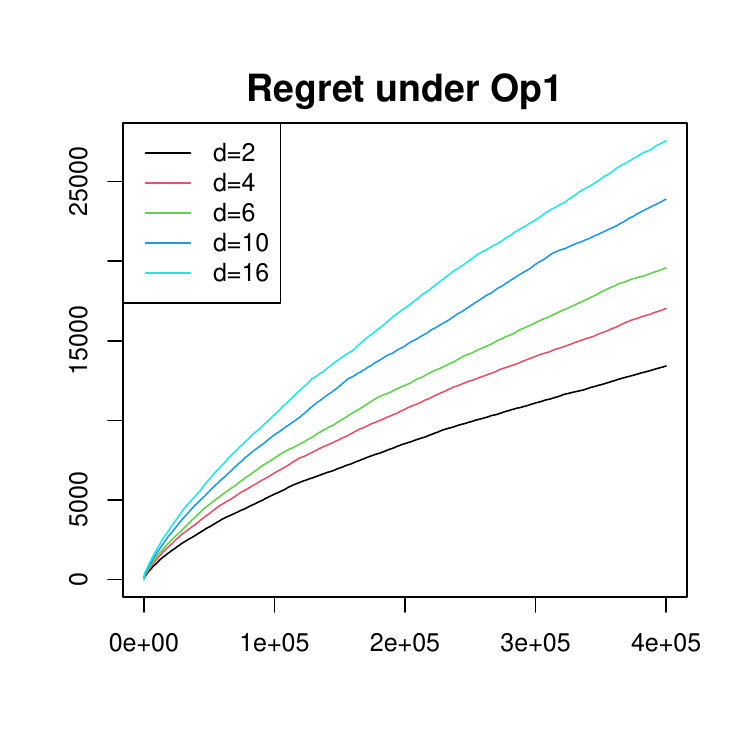}
\end{subfigure}%
\hspace{-7mm}%
\begin{subfigure}[t]{0.335\linewidth}
    \centering
    \includegraphics[width=\linewidth]{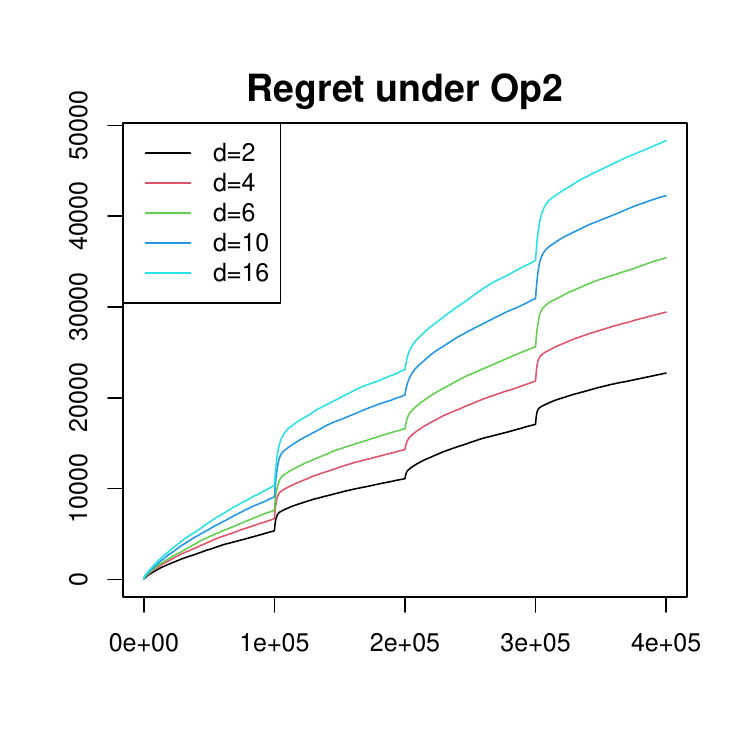}
\end{subfigure}%
\hspace{-7mm}%
\begin{subfigure}[t]{0.335\linewidth}
    \centering
    \includegraphics[width=\linewidth]{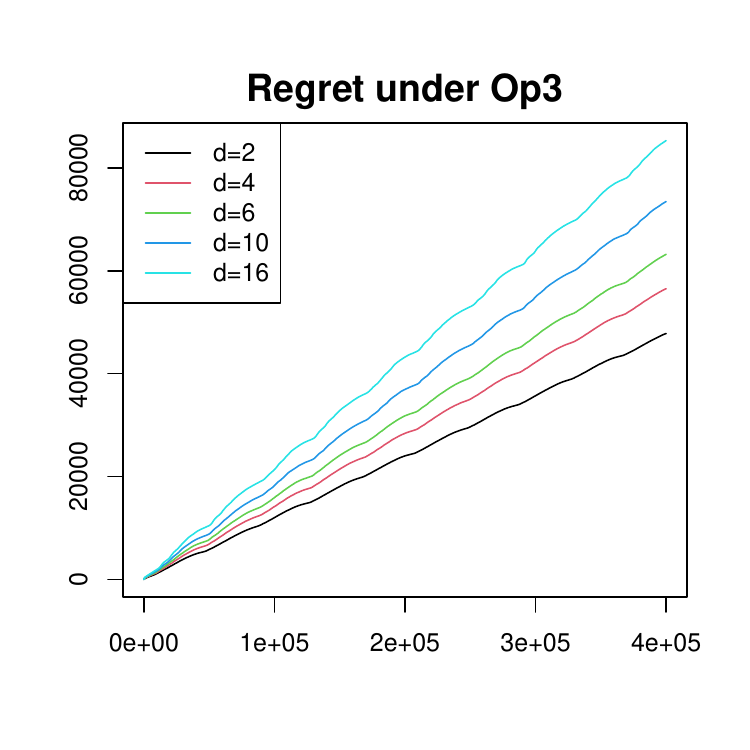}
\end{subfigure}
\end{minipage}%
}
\vspace{-8mm}

\caption{The average regret path of MCP-DP under Op1-Op3 with $T=4\times 10^5$.}
\label{fig:Regret_Path}
\end{figure}

\begin{figure}[ht]
\centering

\makebox[\textwidth][c]{%
\begin{minipage}{1.0\textwidth}
\centering

\begin{subfigure}[t]{0.43\linewidth}
    \centering
    \includegraphics[width=\linewidth]{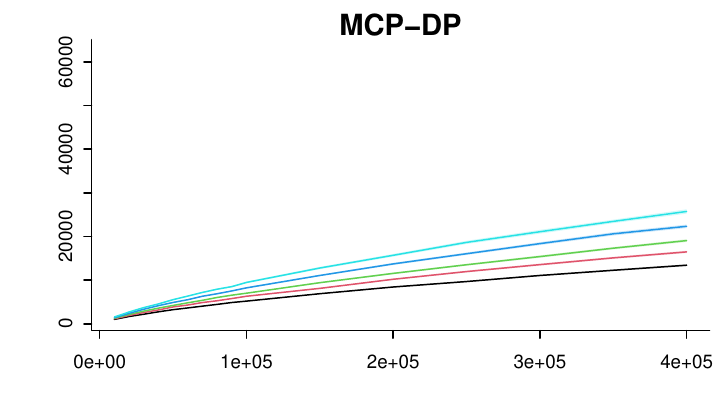}
\end{subfigure}%
\hspace{-1mm}%
\begin{subfigure}[t]{0.43\linewidth}
    \centering
    \includegraphics[width=\linewidth]{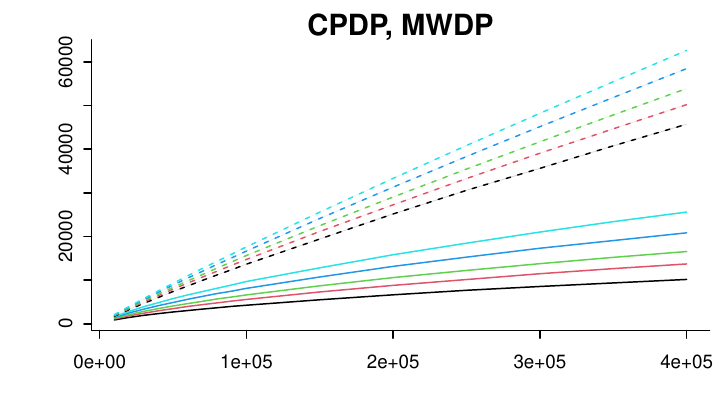}
\end{subfigure}%
\hspace{-1mm}%
\begin{subfigure}[t]{0.12\linewidth}
    \centering
    \includegraphics[width=\linewidth]{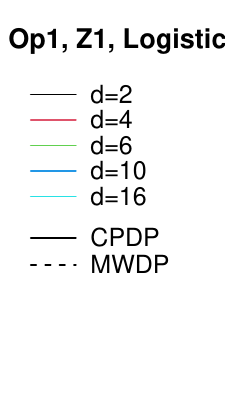}
\end{subfigure}

\vspace{-3mm}

\begin{subfigure}[t]{0.43\linewidth}
    \centering
    \includegraphics[width=\linewidth]{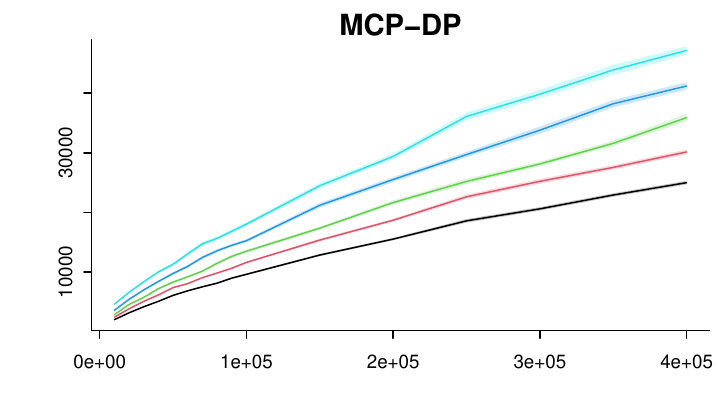}
\end{subfigure}%
\hspace{-1mm}%
\begin{subfigure}[t]{0.43\linewidth}
    \centering
    \includegraphics[width=\linewidth]{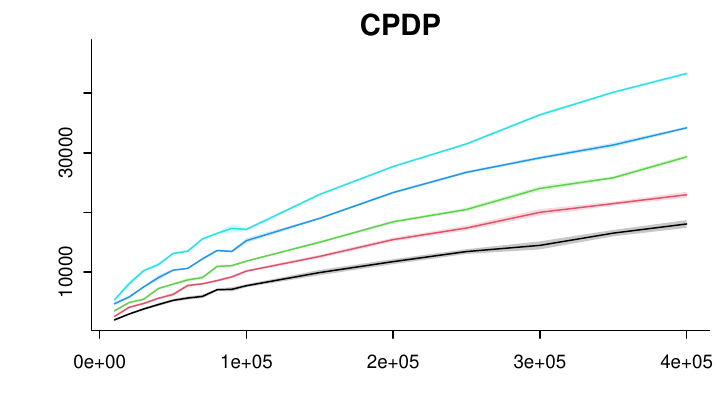}
\end{subfigure}%
\hspace{-1mm}%
\begin{subfigure}[t]{0.12\linewidth}
    \centering
    \includegraphics[width=\linewidth]{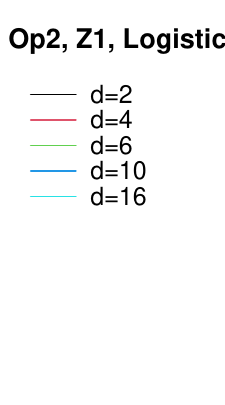}
\end{subfigure}

\vspace{-3mm}

\begin{subfigure}[t]{0.43\linewidth}
    \centering
    \includegraphics[width=\linewidth]{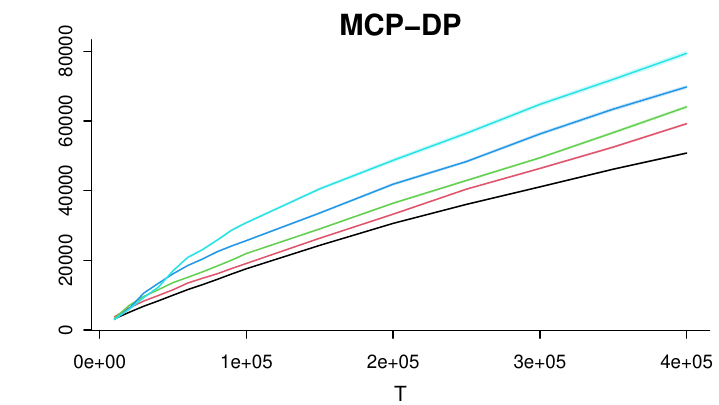}
\end{subfigure}%
\hspace{-1mm}%
\begin{subfigure}[t]{0.43\linewidth}
    \centering
    \includegraphics[width=\linewidth]{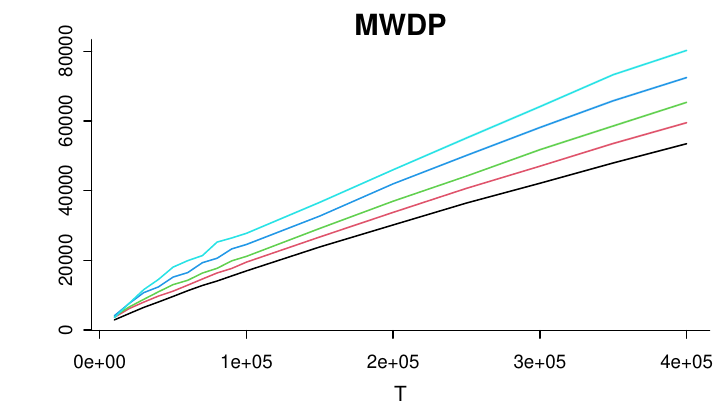}
\end{subfigure}%
\hspace{-1mm}%
\begin{subfigure}[t]{0.12\linewidth}
    \centering
    \includegraphics[width=\linewidth]{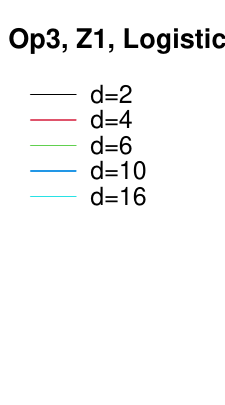}
\end{subfigure}

\end{minipage}%
}

\caption{Performance under baseline settings with logistic demand and uniform contexts (Z1) in Op1 (no change), Op2 (abrupt change), Op3 (smooth change).}
\label{fig:logistic_regular_case_z1}
\end{figure}

\textbf{Numerical results under (Z2).} We set $\theta^{(1)'}=(2\cdot\textbf{1}_d^\top, \textbf{1}_d^\top)^\top$ and $\theta^{(2)'}=(4\cdot\textbf{1}_d^\top, 2\cdot\textbf{1}_d^\top)^\top$ for (Z2). Note that by design, the expected attractiveness and price sensitivity of $\theta^{(1)'}$ and $\theta^{(2)'}$ under (Z2) match that of $\theta^{(1)}$ and $\theta^{(2)}$ under (Z1), respectively. The detailed results under (Z2) can be found in Section \ref{sec:add_simu} of the supplement, where the same phenomenon as that under (Z1) is seen, i.e.\ MCP-DP is adaptive and matches the performance of optimal benchmark methods under both structured changes Op2 and unstructured changes Op3 for linear and logistic demand.

We remark that, together, the results under (Z1) and (Z2) further confirm the validity of our theoretical results in Theorem \ref{thm:mcpdp-upper} and the tightness of our proposed notion of {design-adjusted} variation budget in \Cref{dfn_variation}. In particular, the design matrix $\Sigma_z$ is significantly different under (Z1) and (Z2), where $\lambda_{\max}(\Sigma_z)\asymp 1$ and $\lambda_{\min}(\Sigma_z)\asymp 1/d$ under (Z1), while $\lambda_{\max}(\Sigma_z)=\lambda_{\min}(\Sigma_z) =1/d $ under (Z2). However, the performance of MCP-DP remain stable across (Z1) and (Z2) for all three non-stationarity settings Op1-Op3. Moreover, the $L_2$-based variation budget for Op3 is $O(1)$ under (Z1) but $O(d)$ under (Z2), which cannot explain the stable performance of MCP-DP across (Z1) and (Z2), while in contrast, the {design-adjusted} variation budget for Op3 is $O(1)$ under both (Z1) and (Z2) and thus offers a cleaner theoretical explanation.

\subsubsection{Complex settings} 
We now consider more challenging settings where the number of stationary segments $s_T$ or the variation budget $V_T$ diverges with $T$ to further examine the robustness of MCP-DP. For illustration, we focus on the linear demand model under (Z1). We set $\theta^{(3)}=(\alpha^{(3)\top}, \beta^{(3)\top})^\top$ with $\alpha^{(3)}=(0, 6/\sqrt{d-1}\cdot \textbf{1}_{d-1}^\top)^\top$ and $\beta^{(3)}=(0, 0.5/\sqrt{d-1}\cdot \textbf{1}_{d-1}^\top)^\top$ for a more heightened high-demand season. Denote $m_{d,T}:=2d(\log(dT))^{1.1}$ and $n_{d,T}:=\sqrt{dT}$, which is the initial exploration-exploitation schedule of CPDP. We consider three non-stationarity settings:
\begin{itemize}
    \item \textbf{Op4} (CPDP-blind abrupt change):
    \[
    \theta_t=\begin{cases} 
		\theta^{(1)}, & \{(t-1)\bmod{(m_{d,T}+n_{d,T})}\}+1\leq m_{d,T},\\
		\theta^{(3)}, & \{(t-1)\bmod{(m_{d,T}+n_{d,T})}\}+1 > m_{d,T};\\
	\end{cases}
    \]
    \item \textbf{Op5} (diverging abrupt change):
    \[
    \theta_t=\begin{cases} 
		\theta^{(1)}, & (t-1) \,\operatorname{div}\, 4\sqrt{T} \text{ is even},\\
		\theta^{(3)}, & (t-1) \,\operatorname{div}\, 4\sqrt{T} \text{ is odd};\\
	\end{cases}
    \]
    \item \textbf{Op6} (diverging smooth change):
    \[
    \theta_t= (1+\cos(\pi t/(4\sqrt{T})))/2\cdot \theta^{(1)}
    + (1-\cos(\pi t/(4\sqrt{T})))/2 \cdot \theta^{(3)}.
    \]
\end{itemize}

Op4 is designed as an adversarial setting for CPDP, where the stationary segments coincide with the {pre-fixed} experimentation-exploitation cycle of CPDP, making the process stationary on the price experimentation cycles, hence CPDP-blind. Op5 is a more challenging case of Op2, where the number of change-points $s_T \asymp\sqrt{T}$ instead of being constant. Op6 is a more challenging case of Op3, where the variation budget $V_T\asymp\sqrt{T}$ instead of being constant. For more intuition, \Cref{fig:Theta_Path} of the supplement plots the sequence of true model parameters $\{\theta_t\}_{t=1}^T$ for Op4-Op6.

\Cref{fig:linear_extreme_case} summarizes the performance for each algorithm under Op4-Op6. First, as is expected, CPDP completely fails Op4, as it cannot detect any change-points, while MCP-DP provides much better performance thanks to the robustness of its multiscale sampling scheme. Second, under Op5 where $s_T\asymp \sqrt{T}$, MCP-DP gives notably better performance than CPDP, especially for large $d$, where its regret is 25-45\% less than that of CPDP. The reason is that the pre-fixed detection schedule of CPDP may cause longer detection delay for change-points, which has a greater impact on regret  when there is a large number of change-points. Third, under Op6 where $V_T\asymp \sqrt{T}$, the performance of MWDP is clearly sub-optimal. This is to be expected as MWDP is non-adaptive and assumes the unknown $V_T$ to be 1 in its implementation, mismatching the true $V_T$, while MCP-DP is adaptive and does not require the knowledge of $V_T$, thus providing robust performance.

\begin{figure}[ht]
\centering

\makebox[\textwidth][c]{%
\begin{minipage}{1.0\textwidth}
\centering

\begin{subfigure}[t]{0.43\linewidth}
    \centering
    \includegraphics[width=\linewidth]{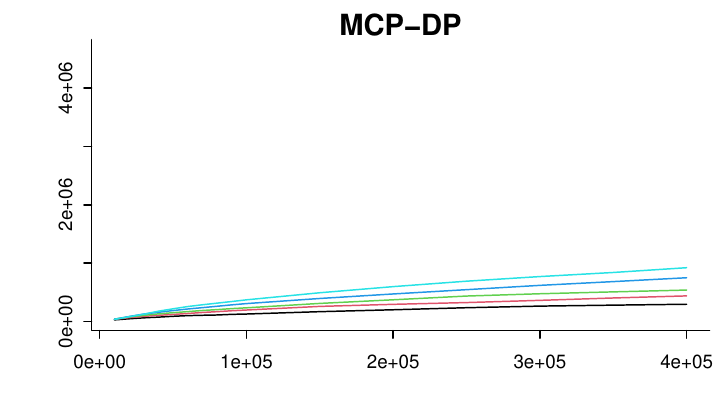}
\end{subfigure}%
\hspace{-1mm}%
\begin{subfigure}[t]{0.43\linewidth}
    \centering
    \includegraphics[width=\linewidth]{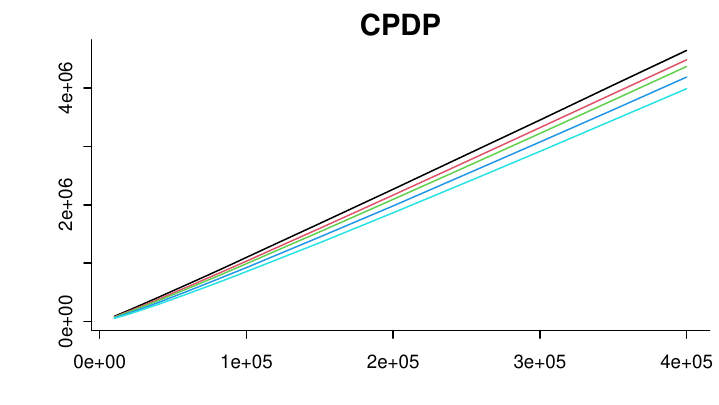}
\end{subfigure}%
\hspace{-1mm}%
\begin{subfigure}[t]{0.12\linewidth}
    \centering
    \includegraphics[width=\linewidth]{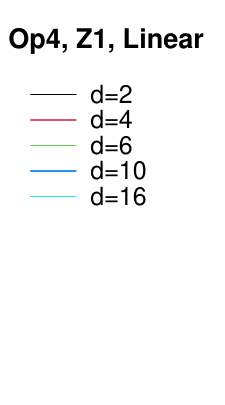}
\end{subfigure}

\vspace{-3mm}

\begin{subfigure}[t]{0.43\linewidth}
    \centering
    \includegraphics[width=\linewidth]{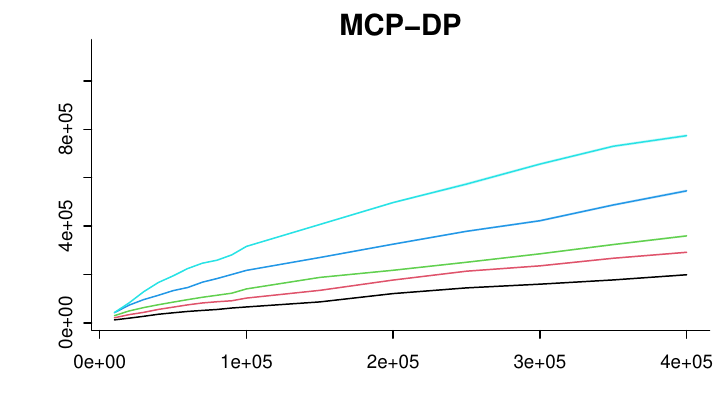}
\end{subfigure}%
\hspace{-1mm}%
\begin{subfigure}[t]{0.43\linewidth}
    \centering
    \includegraphics[width=\linewidth]{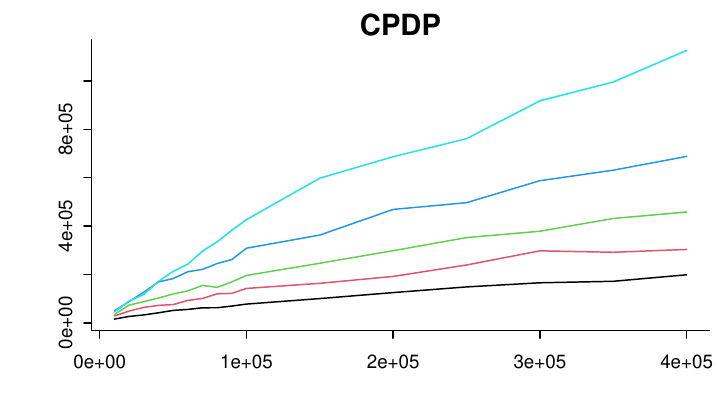}
\end{subfigure}%
\hspace{-1mm}%
\begin{subfigure}[t]{0.12\linewidth}
    \centering
    \includegraphics[width=\linewidth]{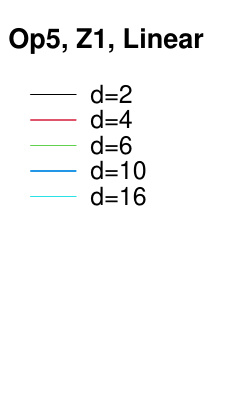}
\end{subfigure}

\vspace{-3mm}

\begin{subfigure}[t]{0.43\linewidth}
    \centering
    \includegraphics[width=\linewidth]{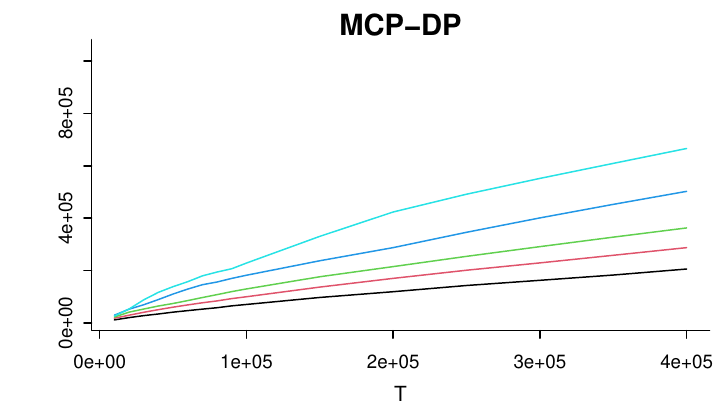}
\end{subfigure}%
\hspace{-1mm}%
\begin{subfigure}[t]{0.43\linewidth}
    \centering
    \includegraphics[width=\linewidth]{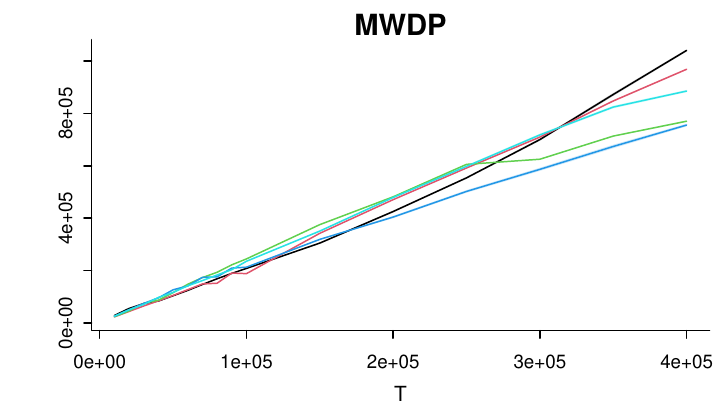}
\end{subfigure}%
\hspace{-1mm}%
\begin{subfigure}[t]{0.12\linewidth}
    \centering
    \includegraphics[width=\linewidth]{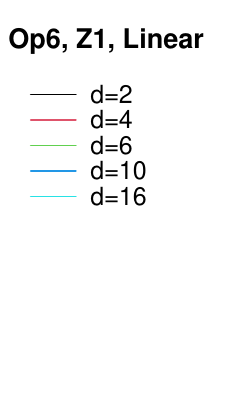}
\end{subfigure}

\end{minipage}%
}

\caption{Performance under complex settings with linear demand and uniform contexts in Op4 (CPDP-blind abrupt change), Op5 (diverging abrupt change), Op6 (diverging smooth change).}
\label{fig:linear_extreme_case}
\end{figure}


\FloatBarrier
\section{Conclusion}\label{sec:conclusion}

In this paper, we studied the non-stationary dynamic pricing problem with a GLM demand model and a stochastic context of dimension $d$. Based on a novel LRT test, we propose MCP-DP, a multiscale change-point detection based algorithm that achieves the optimal regret of order $\widetilde{O}(\sqrt{s_TdT}\wedge d^{1/3}V_T^{1/3}T^{2/3})$. Importantly, MCP-DP is the first algorithm that is adaptive to both structured and unstructured changes without requiring the knowledge of $s_T$ or $V_T.$ We establish a minimax lower bound and illustrate the utility of MCP-DP via extensive numerical experiments.

We note that MCP-DP can be readily extended to the high-dimensional dynamic pricing setting when equipped with a well-designed high-dimensional LRT test. In addition, we conjecture that MCP-DP can be extended to the non-stationary contextual bandit with varying contexts given the commonly used covariate-diversity condition. Another interesting future direction is to further relax the stochastic context assumption (i.e.\ $\lambda_{\min}(\Sigma_z)>0)$ and study adversarial contexts. However, this assumption is central to the success of the LRT test, thus relaxing it would require fundamentally different upper bound algorithms (and possibly lead to different lower bounds). Therefore, we leave a thorough investigation for future work.

\FloatBarrier
\bibliographystyle{apalike}
\bibliography{reference}

\clearpage
\setcounter{section}{0}
\setcounter{equation}{0}
\setcounter{table}{0}
\setcounter{figure}{0}
\setcounter{algorithm}{0}
\renewcommand{\thesection}{S.\arabic{section}}
\renewcommand{\theequation}{S.\arabic{equation}}
\renewcommand{\thetable}{S.\arabic{table}}
\renewcommand{\thefigure}{S.\arabic{figure}}
\renewcommand{\thealgorithm}{S.\arabic{algorithm}}
\renewcommand{\theHsection}{supp.\arabic{section}}
\renewcommand{\theHequation}{supp.\arabic{equation}}
\renewcommand{\theHtable}{supp.\arabic{table}}
\renewcommand{\theHfigure}{supp.\arabic{figure}}
\providecommand{\theHalgorithm}{}
\renewcommand{\theHalgorithm}{supp.\arabic{algorithm}}

\phantomsection
\pdfbookmark[0]{Supplementary Material}{supplementary-material}
\begin{center}
{\LARGE\bfseries Supplementary Material -- On non-stationary dynamic pricing: adaptivity and optimality\par}
\vspace{1.5em}
{\large Feiyu Jiang\textsuperscript{1} ~and Zifeng Zhao\textsuperscript{2}\par}
\vspace{0.5em}
{\normalsize \textsuperscript{1}School of Management, Fudan University\par}
{\normalsize \textsuperscript{2}Mendoza College of Business, University of Notre Dame\par}
\vspace{0.5em}
{\normalsize \today\par}
\end{center}

The supplementary material is organized as follows.  \Cref{sec:add_numerical} provides additional results for simulation.  \Cref{sec:proof_of_lb} provides technical details for the minimax lower bound results in \Cref{sec:lb} of the main text. \Cref{sec:proof1} provides formal proofs for technical details in Section \ref{subsec:mcpdp-detection}. \Cref{sec:proof2} provides upper bound results for MCP-DP in \Cref{thm:mcpdp-upper}.


\section{Additional Numerical Results}\label{sec:add_numerical}

\subsection{More details of CPDP and MWDP}\label{sec:implementation_details}

\noindent\textbf{CPDP}: To be adaptive to the number of change-points $s_T-1$, CPDP in \cite{zhao2026high} fixes the experimentation length $m=c_m d(\log (dT))^{1.1}$ but varies the exploitation length $n.$ In particular, denote $k_t$ as the detected number of change-points up to time point $t$ (i.e.\ how many times CPDP has restarted), it sets $n_t=\sqrt{dT/(k_t+1)}$ for the exploitation cycle that starts at $t.$ Note that before any change-point is detected, $k_t=0$ and thus $n_t=\sqrt{dT}$. Thus, the initial exploitation length is $\sqrt{dT}$ but then shortens as more change-points $k_t$ are detected (i.e.\ the exploration-exploitation ratio increases as $k_t$ increases). We refer to \cite{zhao2026high} for more details of CPDP.

\textbf{MWDP}: MWDP is originally proposed in \cite{keskin2017chasing} and uses fixed length of experimentation and exploitation throughout $[T]$. In particular, given the knowledge of the true variation budget $V_T$, if we set $m=c_md(\log(dT))^{1.1}$, $n=d^{2/3}T^{1/3}/V_T^{1/3}$, and $k=d^{-1/3}V_T^{-1/3}T^{1/3}$, it achieves the optimal regret of order $\widetilde O(d^{1/3}V_T^{1/3}T^{2/3})$. However, since $V_T$ is unknown in practice, our implementation replaces it  by the  proxy $V_{\rm tune}=1$, which can be suboptimal when  $V_T$ diverges.

\subsection{Additional simulation results}\label{sec:add_simu}
\vspace{-5mm}
\begin{figure}[H]
\centering

\makebox[\textwidth][c]{%
\begin{minipage}{1.12\textwidth}
\centering

\begin{subfigure}[t]{0.335\linewidth}
    \centering
    \includegraphics[width=\linewidth]{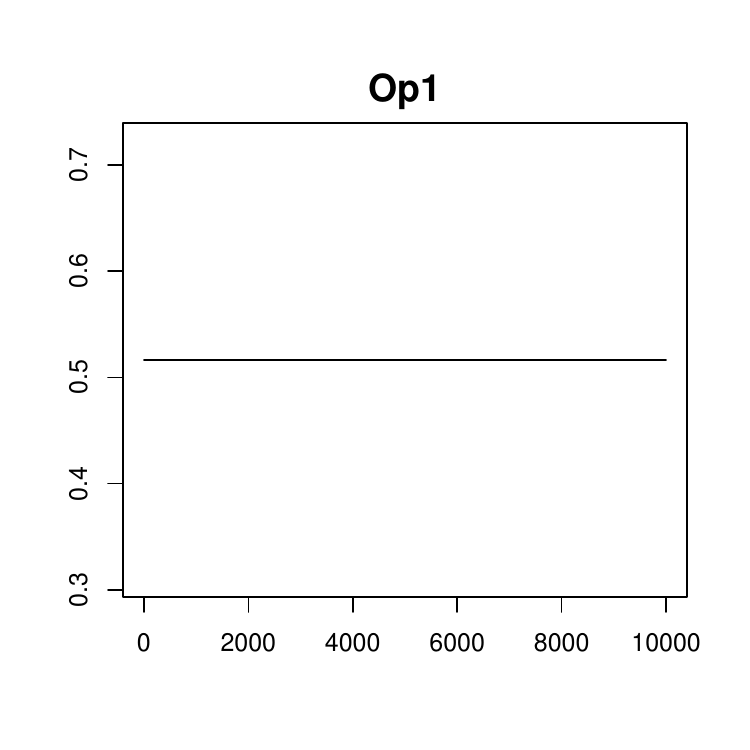}
\end{subfigure}%
\hspace{-5mm}%
\begin{subfigure}[t]{0.335\linewidth}
    \centering
    \includegraphics[width=\linewidth]{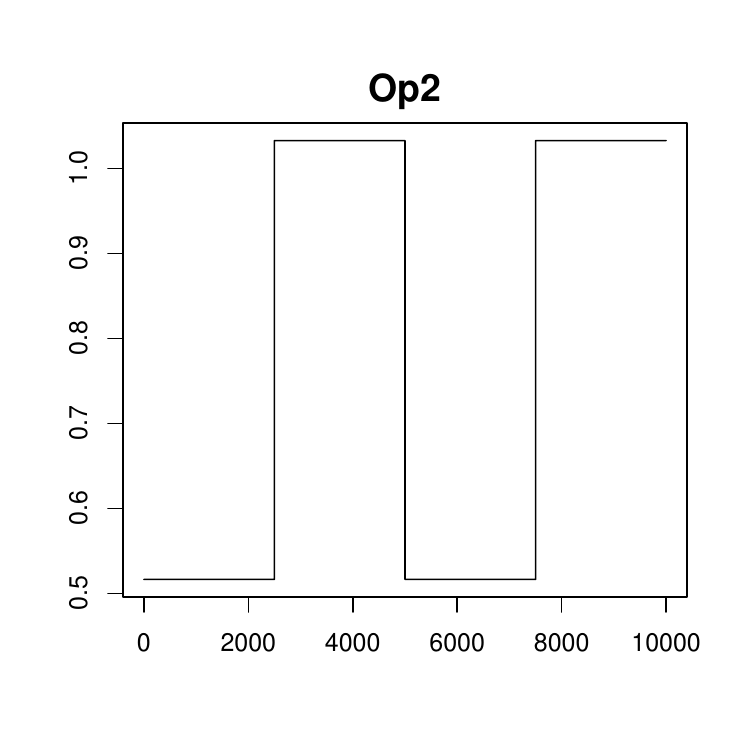}
\end{subfigure}%
\hspace{-5mm}%
\begin{subfigure}[t]{0.335\linewidth}
    \centering
    \includegraphics[width=\linewidth]{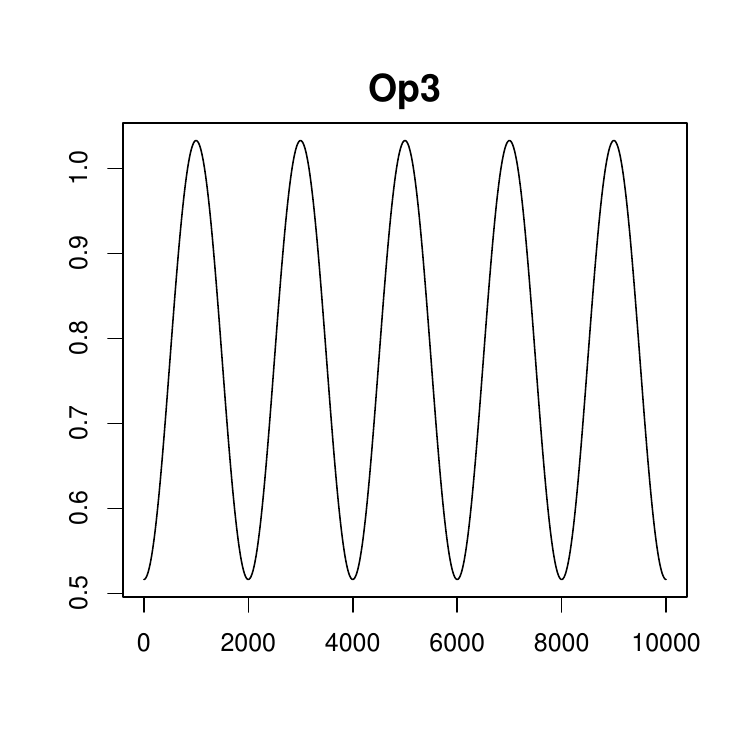}
\end{subfigure}

\vspace{-10mm}

\begin{subfigure}[t]{0.335\linewidth}
    \centering
    \includegraphics[width=\linewidth]{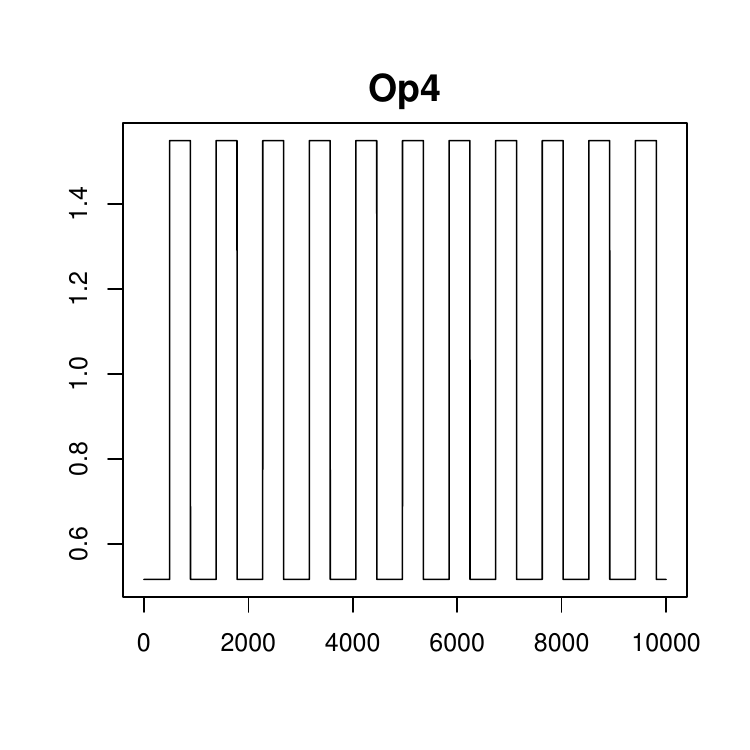}
\end{subfigure}%
\hspace{-5mm}%
\begin{subfigure}[t]{0.335\linewidth}
    \centering
    \includegraphics[width=\linewidth]{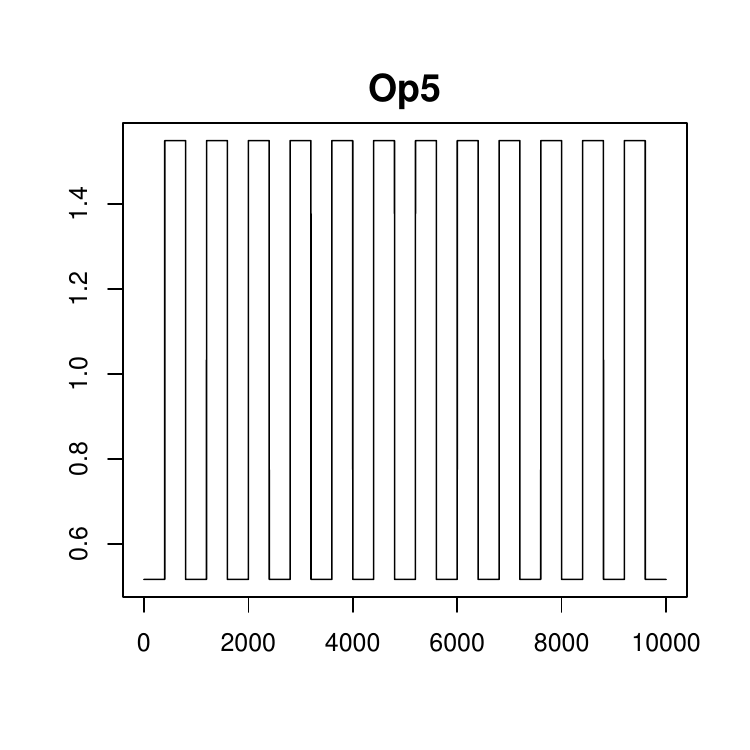}
\end{subfigure}%
\hspace{-5mm}%
\begin{subfigure}[t]{0.335\linewidth}
    \centering
    \includegraphics[width=\linewidth]{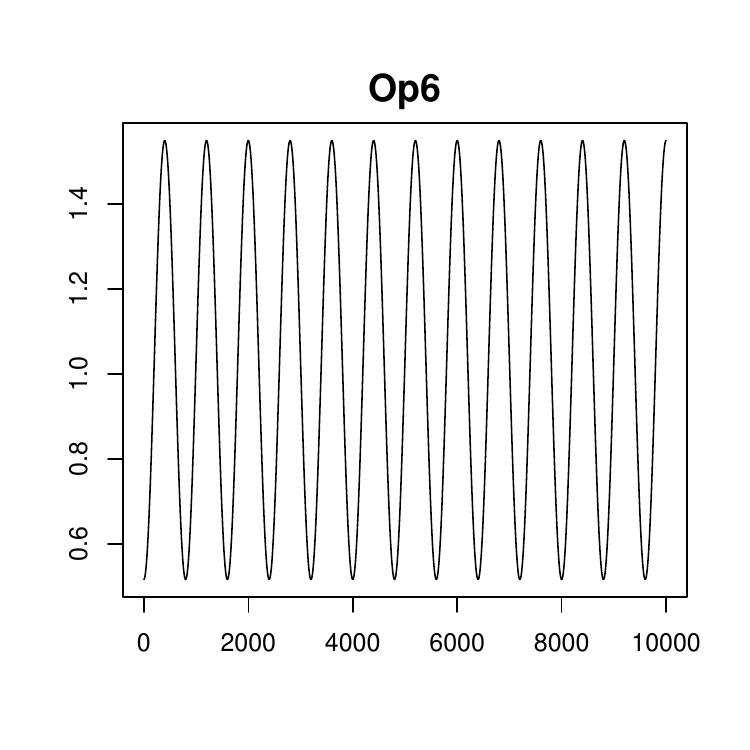}
\end{subfigure}

\end{minipage}%
}
\vspace{-8mm}

\caption{The sequence of $\{\theta_{t,2}\}_{t=1}^T$ for $d=16$ and $T=10^4$ under Op1-Op6.}
\label{fig:Theta_Path}
\end{figure}

\begin{figure}[!htbp]
\centering

\makebox[\textwidth][c]{%
\begin{minipage}{1.0\textwidth}
\centering

\begin{subfigure}[t]{0.43\linewidth}
    \centering
    \includegraphics[width=\linewidth]{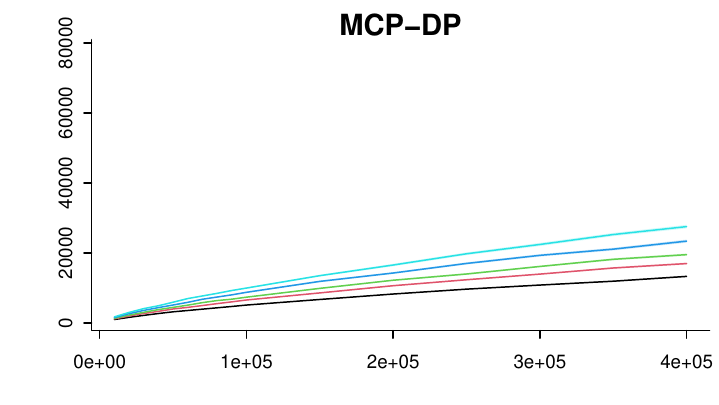}
\end{subfigure}%
\hspace{-1mm}%
\begin{subfigure}[t]{0.43\linewidth}
    \centering
    \includegraphics[width=\linewidth]{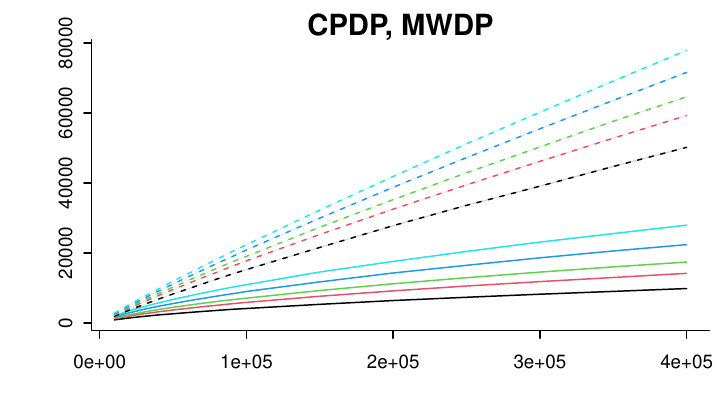}
\end{subfigure}%
\hspace{-1mm}%
\begin{subfigure}[t]{0.12\linewidth}
    \centering
    \includegraphics[width=\linewidth]{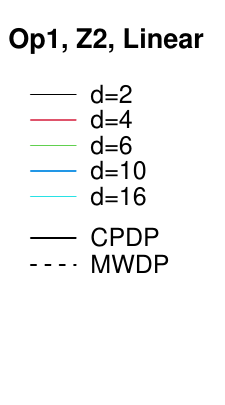}
\end{subfigure}

\vspace{-3mm}

\begin{subfigure}[t]{0.43\linewidth}
    \centering
    \includegraphics[width=\linewidth]{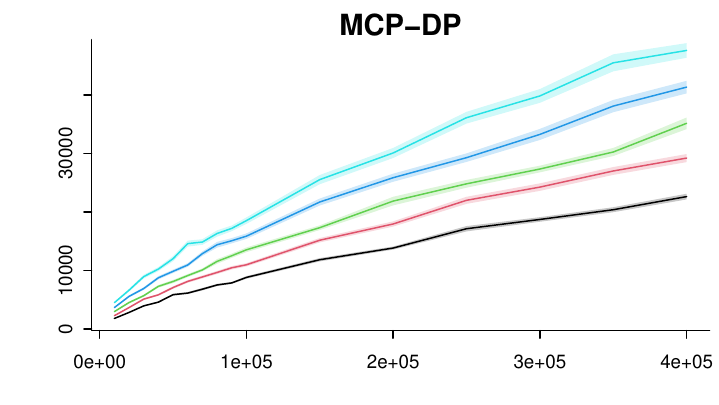}
\end{subfigure}%
\hspace{-1mm}%
\begin{subfigure}[t]{0.43\linewidth}
    \centering
    \includegraphics[width=\linewidth]{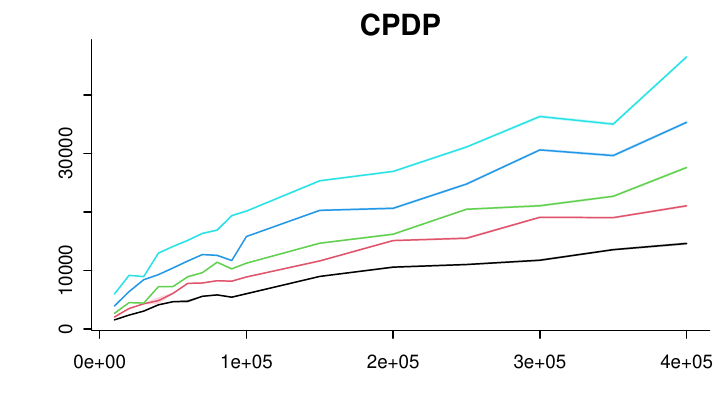}
\end{subfigure}%
\hspace{-1mm}%
\begin{subfigure}[t]{0.12\linewidth}
    \centering
    \includegraphics[width=\linewidth]{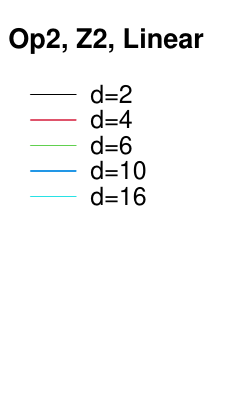}
\end{subfigure}

\vspace{-3mm}

\begin{subfigure}[t]{0.43\linewidth}
    \centering
    \includegraphics[width=\linewidth]{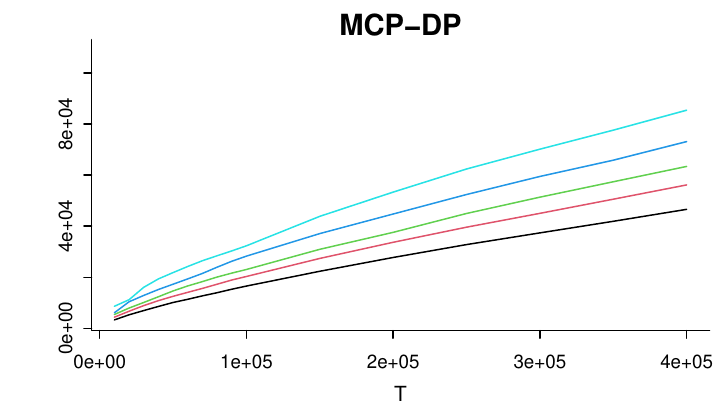}
\end{subfigure}%
\hspace{-1mm}%
\begin{subfigure}[t]{0.43\linewidth}
    \centering
    \includegraphics[width=\linewidth]{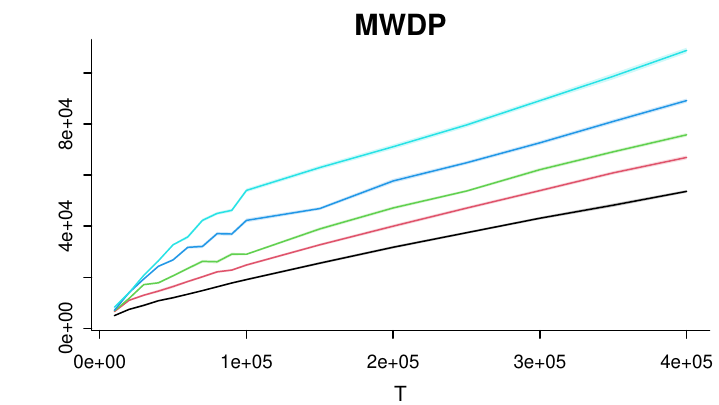}
\end{subfigure}%
\hspace{-1mm}%
\begin{subfigure}[t]{0.12\linewidth}
    \centering
    \includegraphics[width=\linewidth]{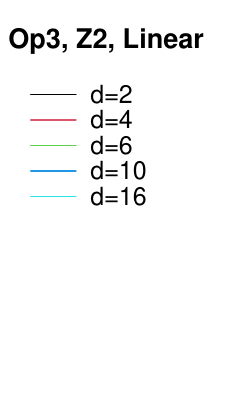}
\end{subfigure}

\end{minipage}%
}

\caption{Performance under baseline settings with linear demand and multinomial contexts (Z2) in Op1 (no change), Op2 (abrupt changes), Op3 (smooth changes).}
\label{fig:linear_regular_case_z2}
\end{figure}

\begin{figure}[!htbp]
\centering

\makebox[\textwidth][c]{%
\begin{minipage}{1.0\textwidth}
\centering

\begin{subfigure}[t]{0.43\linewidth}
    \centering
    \includegraphics[width=\linewidth]{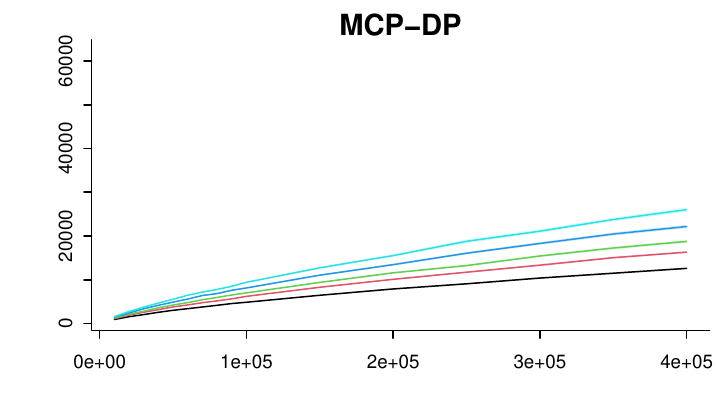}
\end{subfigure}%
\hspace{-1mm}%
\begin{subfigure}[t]{0.43\linewidth}
    \centering
    \includegraphics[width=\linewidth]{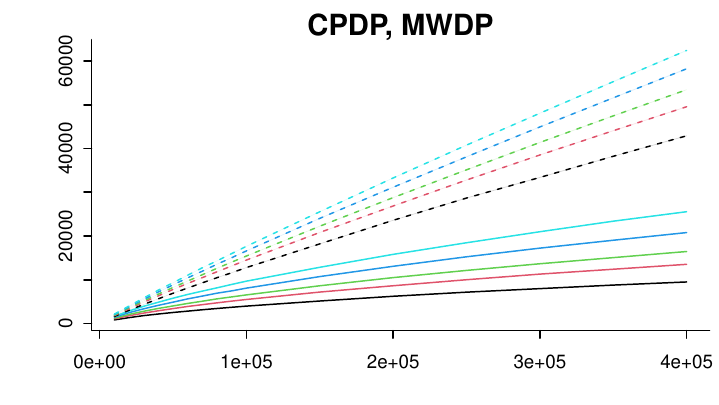}
\end{subfigure}%
\hspace{-1mm}%
\begin{subfigure}[t]{0.12\linewidth}
    \centering
    \includegraphics[width=\linewidth]{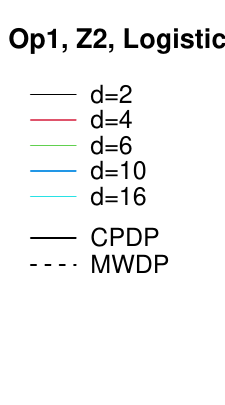}
\end{subfigure}

\vspace{-3mm}

\begin{subfigure}[t]{0.43\linewidth}
    \centering
    \includegraphics[width=\linewidth]{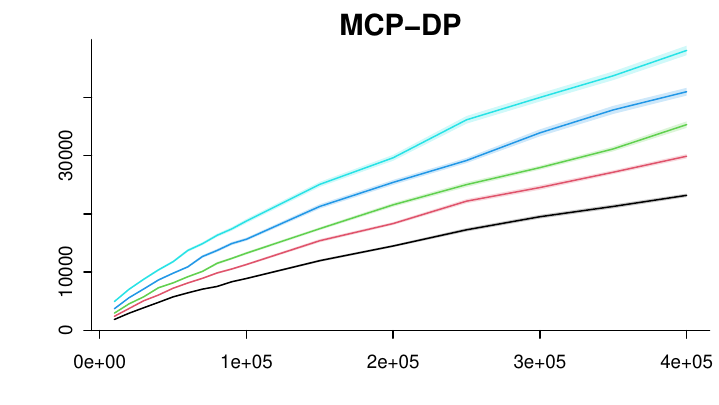}
\end{subfigure}%
\hspace{-1mm}%
\begin{subfigure}[t]{0.43\linewidth}
    \centering
    \includegraphics[width=\linewidth]{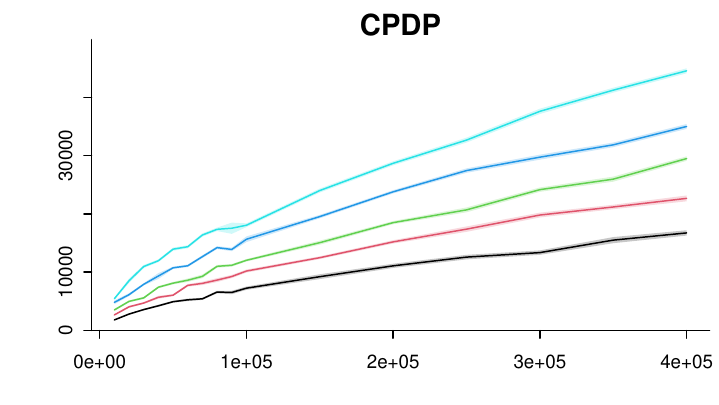}
\end{subfigure}%
\hspace{-1mm}%
\begin{subfigure}[t]{0.12\linewidth}
    \centering
    \includegraphics[width=\linewidth]{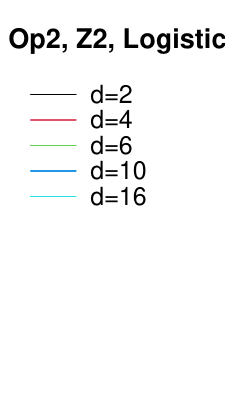}
\end{subfigure}

\vspace{-3mm}

\begin{subfigure}[t]{0.43\linewidth}
    \centering
    \includegraphics[width=\linewidth]{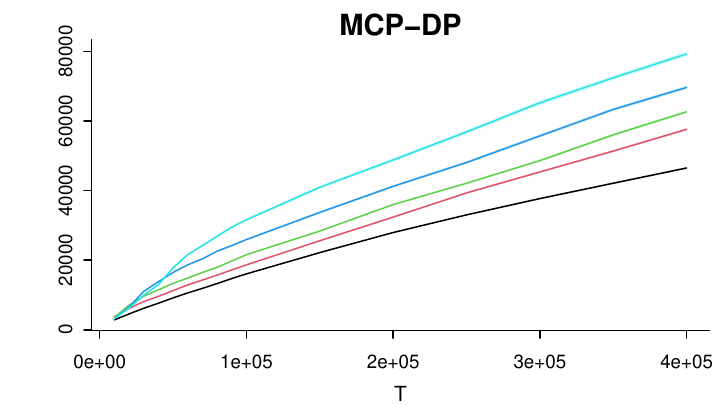}
\end{subfigure}%
\hspace{-1mm}%
\begin{subfigure}[t]{0.43\linewidth}
    \centering
    \includegraphics[width=\linewidth]{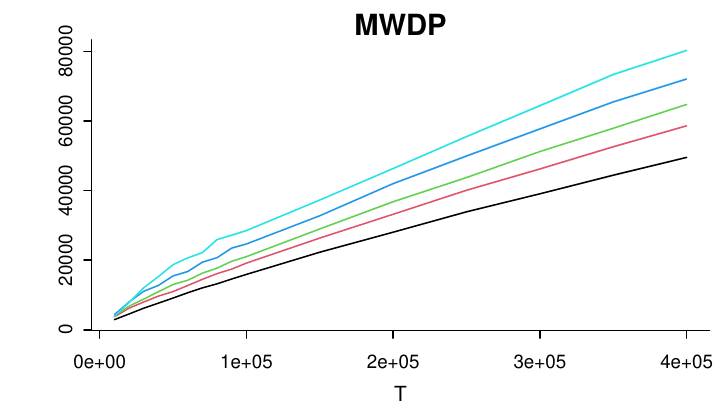}
\end{subfigure}%
\hspace{-1mm}%
\begin{subfigure}[t]{0.12\linewidth}
    \centering
    \includegraphics[width=\linewidth]{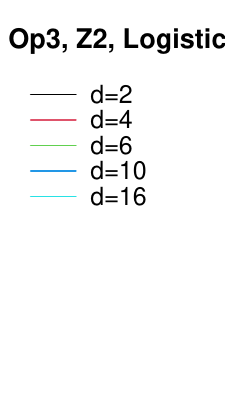}
\end{subfigure}

\end{minipage}%
}

\caption{Performance under baseline settings with logistic demand and multinomial contexts (Z2) in Op1 (no change), Op2 (abrupt changes), Op3 (smooth changes).}
\label{fig:logistic_regular_case_z2}
\end{figure}

\FloatBarrier
\section{Proof of Theorem \ref{thm:lb}}\label{sec:proof_of_lb}
 \paragraph{Step 0: Lower bound construction.}

Fix integers $T,d,s\ge 1$. 
Assume for simplicity that $N$ and $n$ are positive integers, where 
\[
N:=T/s,
\qquad
n:=N/d = \frac{T}{sd}.
\]
Consider the following subclass $\mathcal{M}(T,d,s)$ of piecewise-stationary contextual pricing problems.

{\bf Contexts:} For each segment $k\in[s]:=\{1,\dots,s\}$, define
\[
I_k:=\{(k-1)N+1,\dots,kN\}.
\]
For each $t\in[T]$, let 
\begin{equation}\label{lb_zt}
    i_t \stackrel{i.i.d.}{\sim} \mathrm{Unif}([d]),
\qquad
z_t=e_{i_t},
\end{equation}
where $e_i$ is the $i$th canonical basis vector of $\mathbb{R}^d$ for each
$i\in[d]$.  Hence the contexts are i.i.d. and uniformly sampled over the
canonical basis, satisfying $\Sigma_z=I_d/d.$

{\bf Demand Model:} Let
\[
\Omega:=\{0,1\}^{s\times(d-1)},
\qquad
\omega=(\omega_{k,j})_{k\in[s],\,j\in[d-1]}\in\Omega.
\]
For each $\omega\in\Omega$, define
\[
\widetilde\omega_{k,1}:=\mathbb I\{k \text{ is odd}\},
\qquad
\widetilde\omega_{k,i}:=\omega_{k,i-1},\quad i=2,\dots,d.
\]
For $t\in I_k$ and $z=e_i$, define
\[
\lambda_{\omega,t}(z,p)
:=
2-p+\Delta\widetilde\omega_{k,i}(1-p).
\]
Equivalently,
\[
\lambda_{\omega,t}(z,p)
=
2-p+\Delta\sum_{k=1}^{s}\sum_{i=1}^d
\widetilde\omega_{k,i}(1-p)\mathbb I\{t\in I_k,\ z=e_i\}.
\]
This is of the same form as the main model, since on segment $k$ it can be
written as
$z^\top\alpha^{(k)}-(z^\top\beta^{(k)})p$ with
$\alpha_i^{(k)}=2+\Delta\widetilde\omega_{k,i}$ and
$\beta_i^{(k)}=1+\Delta\widetilde\omega_{k,i}$.
At time $t$, after observing $z_t$, the policy chooses a price
$p_t\in[1/2,3/2]$ and then observes demand
\[
y_t
=\lambda_{\omega,t}(z_t,p_t)+\varepsilon_t,
\qquad
\varepsilon_t\stackrel{i.i.d.}{\sim}\mathcal{N}(0,1),
\]
where $\Delta>0$ is chosen later. 
In other words, if $t\in I_k$ and $z_t=e_i$, then
$$
y_t
=2-p_t+\Delta \widetilde\omega_{k,i}(1-p_t)+
\varepsilon_t.
$$
The instantaneous revenue function, conditional on $z_t$ and $p_t$, is
$$r_t(p_t,z_t,\theta_t)=p_t\lambda_{\omega,t}(z_t,p_t).$$
Note that $\widetilde\omega_{k,i}\in\{0,1\}$.  To facilitate analysis, for $u\in\{0,1\}$, define
\[
\mu_u(p):=p\{2+\Delta u-(1+\Delta u)p\}.
\]
Then the oracle price under state $u$ is
\[
q_u:=\argmax_{p\in[1/2,3/2]}\mu_u(p)=\frac{2+\Delta u}{2(1+\Delta u)},
\]
so that we can show,
\[
q_0=1,\qquad q_1=\frac{2+\Delta}{2+2\Delta},\qquad q_0-q_1=\frac{\Delta}{2(1+\Delta)}.
\]
Moreover, we have 
\[
\mu_u(q_u)-\mu_u(p)=(1+\Delta u)(p-q_u)^2.
\]

\paragraph{Step 1: Exact number of stationary pieces.}
Because
\[
\widetilde\omega_{k,1}=\mathbb I\{k \text{ is odd}\},
\]
the parameter attached to coordinate $i=1$ changes at every segment boundary.
Hence the demand model changes at each boundary between $I_k$ and $I_{k+1}$,
$k=1,\dots,s-1$. Therefore every model in the family has exactly $s$
stationary pieces, or equivalently $s-1$ change-points.  


\paragraph{Step 2: Good event for stochastic contexts.}
For each $(k,i)\in[s]\times[d]$, define
\[
\mathcal T_{k,i}:=\{t\in I_k:z_t=e_i\},
\qquad
M_{k,i}:=|\mathcal T_{k,i}|.
\]
Then we can see that $M_{k,i}\sim \mathrm{Binom}(N,1/d)$ with $ \E(M_{k,i})=n$.
Define the good event
\[
\mathcal A
:=
\bigcap_{k=1}^{s}\bigcap_{i=2}^d
\left\{
\frac n2\le M_{k,i}\le \frac{3n}{2}
\right\}.
\]
By the Chernoff bounds, for each fixed $(k,i)$ with $i\ge 2$,
\[
\mathbb{P}\left(|M_{k,i}-n|>\frac n2\right)\le 2 \exp(-n/12),
\]
and by a union bound argument, 
\[
\mathbb{P}(\mathcal A^c)\le 2sd\exp(-n/12).
\]
Therefore, if $n\geq 12\log(32sd)$, then  $\mathbb{P}(\mathcal A^c)\leq 1/16.$
\paragraph{Step 3: Regret implies classification error.}
Fix $(k,j)\in[s]\times[d-1]$, and recall that the corresponding context coordinate is $i=j+1$.
Define
\[
L_{k,j}(u):=\sum_{t\in\mathcal T_{k,j+1}}(p_t-q_u)^2,
\qquad u\in\{0,1\},
\]
and let
\[
\widehat\omega_{k,j}:=\arg\min_{u\in\{0,1\}}L_{k,j}(u).
\]
Recall that $\delta:=q_0-q_1=\Delta/{[2(1+\Delta)]}.$

Note that for any real number $a$,
\[
(a-q_0)^2+(a-q_1)^2
=
2\Bigl(a-\frac{q_0+q_1}{2}\Bigr)^2+\frac{\delta^2}{2}
\ge \frac{\delta^2}{2}.
\]
Applying this with $a=p_t$ and summing over $t\in\mathcal T_{k,j+1}$ yields
\[
L_{k,j}(0)+L_{k,j}(1)\ge \frac{\delta^2}{2}M_{k,j+1}.
\]
If $L_{k,j}(0)>L_{k,j}(1)$, then $\widehat\omega_{k,j}=1$. Hence, on the event $\{\omega_{k,j}=0\}$,
\[
L_{k,j}(\omega_{k,j})=L_{k,j}(0)
\ge \frac{L_{k,j}(0)+L_{k,j}(1)}{2}
\ge \frac{\delta^2}{4}M_{k,j+1}.
\]
Similarly, if $L_{k,j}(1)>L_{k,j}(0)$, then $\widehat\omega_{k,j}=0$. Hence, on the event $\{\omega_{k,j}=1\}$,
\[
L_{k,j}(\omega_{k,j})=L_{k,j}(1)
\ge \frac{L_{k,j}(0)+L_{k,j}(1)}{2}
\ge \frac{\delta^2}{4}M_{k,j+1}.
\]
Therefore, on the event $\{\widehat\omega_{k,j}\neq \omega_{k,j}\}$, we must have
\[
L_{k,j}(\omega_{k,j})\ge \frac{L_{k,j}(0)+L_{k,j}(1)}{2}
\ge \frac{\delta^2}{4}M_{k,j+1}.
\]
In other words, 
\[
L_{k,j}(\omega_{k,j})
\ge
\frac{\delta^2}{4}M_{k,j+1}\mathbb I\{\widehat\omega_{k,j}\neq \omega_{k,j}\}.
\]
Since $
\mu_u(q_u)-\mu_u(p)=(1+\Delta u)(p-q_u)^2\ge (p-q_u)^2, $
we obtain that
\[
\sum_{t\in\mathcal T_{k,j+1}}
\Bigl(\mu_{\omega_{k,j}}(q_{\omega_{k,j}})-\mu_{\omega_{k,j}}(p_t)\Bigr)
\ge
L_{k,j}(\omega_{k,j})
\ge
\frac{\delta^2}{4}M_{k,j+1}\mathbb I\{\widehat\omega_{k,j}\neq \omega_{k,j}\}.
\]
On the good event $\mathcal A$, we have $M_{k,j+1}\ge n/2$, and therefore
\[
\sum_{t\in\mathcal T_{k,j+1}}
\Bigl(\mu_{\omega_{k,j}}(q_{\omega_{k,j}})-\mu_{\omega_{k,j}}(p_t)\Bigr)
\ge
\frac{n\delta^2}{8}\mathbb I\{\widehat\omega_{k,j}\neq \omega_{k,j}\}.
\]
Summing over $(k,j)\in[s]\times[d-1]$ gives
\begin{equation}\label{eq:regret-class}
	R_T^\pi(\omega)
	\ge
	\frac{n\delta^2}{8}
	\sum_{k=1}^{s}\sum_{j=1}^{d-1}
	\mathbb I\{\widehat\omega_{k,j}\neq \omega_{k,j},\mathcal A\}.   
\end{equation}

\paragraph{Step 4: Assouad reduction.}
Let
$
o_t:=(z_t,p_t,y_t)
$, and $
o_{1:T}:=(o_1,\dots,o_T).$
Then $P_\omega^\pi$ denotes the law of $o_{1:T}$ under model $\omega$ and policy $\pi$ (recall that by design $\widetilde \omega_{k,1}$ is fixed, thus we skip it in the notation). For each free bit $(k,j)$, define the mixture distributions
\[
P_{+(k,j)}^\pi:=\frac{1}{2^{s(d-1)-1}}\sum_{\omega:\,\omega_{k,j}=1}P_\omega^\pi,
\qquad
P_{-(k,j)}^\pi:=\frac{1}{2^{s(d-1)-1}}\sum_{\omega:\,\omega_{k,j}=0}P_\omega^\pi.
\]
By Assouad's lemma, Pinsker's inequality, and Cauchy--Schwarz,
\begin{align}\label{eq:assouad}
	\begin{split}
		&\sup_{\omega\in\Omega}
		\E_\omega^\pi\!\left[
		\sum_{k=1}^{s}\sum_{j=1}^{d-1}\mathbb I\{\widehat\omega_{k,j}\neq \omega_{k,j}\}
		\right]
		\geq \frac{1}{2}\sum_{k=1}^{s}\sum_{j=1}^{d-1}  (1-\mathrm{TV}(P_{+(k,j)}^\pi,P_{-(k,j)}^\pi)
		\\\geq& \frac{1}{2}\sum_{k=1}^{s}\sum_{j=1}^{d-1}  (1-\sqrt{D_{\rm KL}^{\rm sy}(P_{+(k,j)}^\pi,P_{-(k,j)}^\pi)/4})\\\geq& 
		\frac{s(d-1)}{2}
		\left(
		1-\sqrt{
			\frac{1}{4s(d-1)}
			\sum_{k=1}^{s}\sum_{j=1}^{d-1}
			D_{\rm KL}^{\rm sy}\!\left(P_{+(k,j)}^\pi,P_{-(k,j)}^\pi\right)
		}
		\right),
	\end{split}
\end{align}
where $D_{\rm KL}^{sy}(P,Q)=D_{\rm KL}(P|Q)+D_{\rm KL}(Q|P)$ is the symmetric
Kullback--Leibler (KL) divergence between probability measures $P$ and $Q$.

\paragraph{Step 5: Bound the total symmetric KL.}
For notational convenience, let $b=(k,j)$ denote a free bit, and write
$  
\omega\oplus b$
for the parameter vector obtained from $\omega$ by flipping only the bit $\omega_{k,j}$.

We first upper bound the symmetric KL divergence by averaging over neighboring pairs. By convexity of KL divergence in each argument,
\begin{align}
	D_{\mathrm{KL}}^{\mathrm{sy}}\!\left(P_{+b}^\pi,P_{-b}^\pi\right)
	&=
	D_{\mathrm{KL}}(P_{+b}^\pi|P_{-b}^\pi)
	+
	D_{\mathrm{KL}}(P_{-b}^\pi|P_{+b}^\pi)
	\nonumber\\
	&\le
	\frac{1}{2^{s(d-1)-1}}
	\sum_{\omega:\,\omega_b=1}
	\Bigl\{
	D_{\mathrm{KL}}(P_\omega^\pi|P_{\omega\oplus b}^\pi)
	+
	D_{\mathrm{KL}}(P_{\omega\oplus b}^\pi|P_\omega^\pi)
	\Bigr\}.
	\label{eq:KL-convexity}
\end{align}
Summing \eqref{eq:KL-convexity} over all free bits $b=(k,j)$ yields
\begin{align}
	\sum_{k=1}^{s}\sum_{j=1}^{d-1}
	D_{\mathrm{KL}}^{\mathrm{sy}}\!\left(P_{+(k,j)}^\pi,P_{-(k,j)}^\pi\right)
	\le
	\frac{1}{2^{s(d-1)}}
	\sum_{\omega\in\Omega}
	\sum_{k=1}^{s}\sum_{j=1}^{d-1}
	\Bigl\{
	D_{\mathrm{KL}}(P_\omega^\pi|P_{\omega\oplus(k,j)}^\pi)
	+
	D_{\mathrm{KL}}(P_{\omega\oplus(k,j)}^\pi|P_\omega^\pi)
	\Bigr\}.
	\label{eq:sumKL-average-pairs}
\end{align}

Next we bound each pairwise KL divergence using the chain rule. Fix $\omega$ and a bit $b=(k,j)$.
By the chain rule for KL divergence,
\[
D_{\mathrm{KL}}(P_\omega^\pi|P_{\omega\oplus b}^\pi)
=
\sum_{t=1}^T
\E_\omega^\pi\!\left[
D_{\mathrm{KL}}
\Bigl(
P_\omega^\pi(o_t\mid o_{<t}),
P_{\omega\oplus b}^\pi(o_t\mid o_{<t})
\Bigr)
\right].
\]
Since the policy is non-anticipating, conditional on $o_{<t}$ the price $p_t$ is the same measurable
function under both models; moreover, the context distribution is also the same under both models.
Hence the two conditional laws can differ only through the conditional law of $y_t$ given $(z_t,p_t)$.

Now the two models $\omega$ and $\omega\oplus b$ differ only on the block corresponding to segment $k$
and coordinate $j+1$. Thus, if either $t\notin I_k$ or $z_t\neq e_{j+1}$, then the conditional laws are identical and the one-step KL contribution is zero. 

Suppose now that $t\in I_k$ and $z_t=e_{j+1}$. Under our demand construction,
the conditional mean difference between $\omega$ and $\omega\oplus(k,j)$ is
$\Delta(1-p_t)$. Moreover, since both conditional laws are Gaussian with
variance $1$, the one-step KL divergence equals
\[
\frac12\bigl(\Delta(1-p_t)\bigr)^2
=
\frac{\Delta^2}{2}(1-p_t)^2.
\]
Hence
\[
D_{\mathrm{KL}}(P_\omega^\pi,P_{\omega\oplus(k,j)}^\pi)
=
\frac{\Delta^2}{2}\,
\E_\omega^\pi\!\left[
\sum_{t=1}^T
(1-p_t)^2
\mathbb I\{t\in I_k,\ z_t=e_{j+1}\}
\right].
\]
By the same argument, with the two models reversed,
\[
D_{\mathrm{KL}}(P_{\omega\oplus(k,j)}^\pi,P_\omega^\pi)
=
\frac{\Delta^2}{2}\,
\E_{\omega\oplus(k,j)}^\pi\!\left[
\sum_{t=1}^T
(1-p_t)^2
\mathbb I\{t\in I_k,\ z_t=e_{j+1}\}
\right].
\]
Substituting these into \eqref{eq:sumKL-average-pairs},  we obtain
\begin{align}
	&\sum_{k=1}^{s}\sum_{j=1}^{d-1}
	D_{\mathrm{KL}}^{\mathrm{sy}}\!\left(P_{+(k,j)}^\pi,P_{-(k,j)}^\pi\right)
	\nonumber\\
	&\le
	\frac{\Delta^2}{2^{s(d-1)}}
	\sum_{\omega\in\Omega}
	\sum_{k=1}^{s}\sum_{j=1}^{d-1}
	\E_\omega^\pi\!\left[
	\sum_{t=1}^T
	(1-p_t)^2
	\mathbb I\{t\in I_k,\ z_t=e_{j+1}\}
	\right]
	\nonumber\\
	&=
	\frac{\Delta^2}{2^{s(d-1)}}\sum_{\omega\in\Omega}
	\E_\omega^\pi\!\left[
	\sum_{t=1}^T
	(1-p_t)^2
	\sum_{k=1}^{s}\sum_{j=1}^{d-1}\mathbb I\{t\in I_k,\ z_t=e_{j+1}\}
	\right].
	\label{eq:sumKL-before-collapse}
\end{align}
For each fixed $t$, there is exactly one segment $k$ such that $t\in I_k$, and among
$j=1,\dots,d-1$, the indicator $\mathbb I\{z_t=e_{j+1}\}$ equals $1$ if and only if $z_t\neq e_1$.
Therefore
\[
\sum_{k=1}^{s}\sum_{j=1}^{d-1}\mathbb I\{t\in I_k,\ z_t=e_{j+1}\}
=
\mathbb I\{z_t\neq e_1\}.
\]
Thus \eqref{eq:sumKL-before-collapse} simplifies to
\begin{align}\label{eq:sumKL-final-raw}
	\begin{split}
		\sum_{k=1}^{s}\sum_{j=1}^{d-1}
		D_{\mathrm{KL}}^{\mathrm{sy}}\!\left(P_{+(k,j)}^\pi,P_{-(k,j)}^\pi\right)
		\le &
		\frac{\Delta^2}{2^{s(d-1)}} \sum_{\omega\in\Omega}
		\E_\omega^\pi\!\left[
		\sum_{t=1}^T
		(1-p_t)^2\mathbb I\{z_t\neq e_1\}
		\right]\\
		\leq &\Delta^2\sup_{\omega\in\Omega}\E_{{\omega}}^\pi
		\sum_{t=1}^T
		(1-p_t)^2.
	\end{split}
\end{align}
Finally, we connect  \eqref{eq:sumKL-final-raw} to the regret. For any free bit state $u\in\{0,1\}$, we have 
\[
\mu_u(q_u)-\mu_u(p)=(1+\Delta u)(p-q_u)^2\ge (p-q_u)^2,\quad \text{and}\quad (1-p)^2
\le
2(p-q_u)^2+2(1-q_u)^2.
\]
Since
$q_0=1, q_1={(2+\Delta)}/({2+2\Delta})$, we have 
$1-q_u\le q_0-q_1={\Delta}/{2(1+\Delta)}\le {\Delta}/{2},
$
hence 
\[
(1-p)^2
\le
2\{\mu_u(q_u)-\mu_u(p)\}+\frac{\Delta^2}{2}.
\]
Therefore, applying the above inequality to \eqref{eq:sumKL-final-raw}, we obtain
\begin{align}
	\sum_{k=1}^{s}\sum_{j=1}^{d-1}
	D_{\mathrm{KL}}^{\mathrm{sy}}\!\left(P_{+(k,j)}^\pi,P_{-(k,j)}^\pi\right)
	&\le
	2\Delta^2 \sup_{\omega\in\Omega} \mathbb{E}_{\omega}^{\pi} R_T(\omega) + \frac{T\Delta^4}{2}.
	\label{eq:sumKL-regret}
\end{align}

\paragraph{Step 6: Close the argument.}
Using \eqref{eq:regret-class}, $\mathbb P(\mathcal A^c)\le 1/16$, and the fact that
\[
\sum_{k,j}\mathbb I\{\widehat\omega_{k,j}\neq \omega_{k,j},\mathcal A\}
\ge
\sum_{k,j}\mathbb I\{\widehat\omega_{k,j}\neq \omega_{k,j}\}-s(d-1)\mathbb I_{\mathcal A^c},
\]
we obtain  
\[
\sup_{\omega\in\Omega}\mathbb{E}_{\omega}^{\pi}R_T^\pi(\omega)
\ge
\frac{n\delta^2}{8}
\left(
\sup_{\omega\in\Omega}\E_\omega^\pi\sum_{k=1}^{s}\sum_{j=1}^{d-1}\mathbb I\{\widehat\omega_{k,j}\neq \omega_{k,j}\}
-\frac{s(d-1)}{16}
\right).
\]

Combining this with \eqref{eq:assouad} and \eqref{eq:sumKL-regret} yields
\begin{align}\label{reg_LB}
	\begin{split}
		\sup_{\omega\in\Omega}\mathbb{E}_{\omega}^{\pi}R_T^\pi(\omega)
		\ge&
		\frac{n\delta^2 s(d-1)}{16}
		\left(
		1-\sqrt{
			\frac{\Delta^2\sup_{\omega\in\Omega}\mathbb{E}_{\omega}^{\pi}R_T^\pi(\omega)}{2s(d-1)}
			+\frac{T\Delta^4}{8 s(d-1)}}
		\right)
		-\frac{n\delta^2 s(d-1)}{128}
		\\\geq & \frac{T\Delta^2}{288}\left(
		1-\sqrt{
			\frac{\Delta^2\sup_{\omega\in\Omega}\mathbb{E}_{\omega}^{\pi}R_T^\pi(\omega)}{sd}
			+\frac{T\Delta^4}{4sd}}
		\right)- \frac{T\Delta^2}{{512}}, 
	\end{split}
\end{align}
where the second inequality holds because $ns(d-1)\geq nsd/2=T/2$ for
$d\geq 2$, and
$\delta=\Delta/[2(1+\Delta)]\in[\Delta/3,\Delta/2]$ for $\Delta\leq 1/2$.
The case $d=1$ is the covariate-free case and follows from
\cite{zhao2026high}.

We claim that there exist constants $c_0,\eta_0>0$ such that, whenever
$T\Delta^4/(sd)\leq \eta_0$,
\[
\sup_{\omega\in\Omega}\mathbb{E}_{\omega}^{\pi}R_T^\pi(\omega)
\geq c_0T\Delta^2 .
\]
Indeed, suppose for contradiction that
\[
\sup_{\omega\in\Omega}\mathbb{E}_{\omega}^{\pi}R_T^\pi(\omega)
\leq T\Delta^2/{1024} .
\]
Then
\[
\frac{
\Delta^2\sup_{\omega\in\Omega}\mathbb{E}_{\omega}^{\pi}R_T^\pi(\omega)
}{sd}
\leq
\frac{\eta_0}{{1024}}.
\]
Substituting this bound into \eqref{reg_LB} gives
\[
\sup_{\omega\in\Omega}\mathbb{E}_{\omega}^{\pi}R_T^\pi(\omega)
\geq
\frac{T\Delta^2}{{288}}
\left(1-\sqrt{\eta_0/1024+\eta_0/{4}}\right)
-
\frac{T\Delta^2}{512}.
\]
Choosing $\eta_0>0$ sufficiently small makes the right-hand side strictly
larger than $T\Delta^2/{1024}$, a contradiction.  Hence the claim follows.  In
particular, taking $\Delta^2\asymp\sqrt{sd/T}$ yields
\[
\sup_{\omega\in\Omega}\mathbb{E}_{\omega}^{\pi}R_T^\pi(\omega)
\geq c\sqrt{sdT}.
\]

\paragraph{Step 7: Embedding the construction into the variation class.}
First, we note that  for any parameter displacement $h\in\mathbb{R}^{2d}$, 
$h^\top\widetilde{\Sigma}_z h$ can be written as
\[
h^\top\widetilde{\Sigma}_z h
=
(\widetilde{\Sigma}_z^{1/2}h)^\top (\widetilde{\Sigma}_z^{1/2}h).
\]
This implies that the design-adjusted variation can be transformed to the  Euclidean quadratic variation
after the linear change of $\widetilde{x}_t=\widetilde{\Sigma}_z^{-1/2}x_t,$ or equivalently, $\widetilde{z}_t={\Sigma}_z^{-1/2}z_t$. Hence, we shall focus on the context design in \eqref{lb_zt}.

We now calibrate the hard family for a given variation budget upper bound
$V$.  Indeed,  we can set $s^*=s\wedge V^{2/3}T^{1/3}d^{-1/3}$ and choose a sufficiently small signal
$\Delta^2\asymp\sqrt{s_\star d/T}$.  Then the constructed family has at most
$s$ stationary segments and satisfies
\[
    V_T\lesssim s^*\Delta^2
    \asymp s^{*3/2}\sqrt{d/T}
    \le V .
\]
If $s^*=1$ because $(V^2T/d)^{1/3}<1$, we instead use the stationary
subclass, for which $V_T=0\le V$ and the stationary lower bound
$\sqrt{dT}$ dominates $d^{1/3}V^{1/3}T^{2/3}$. 
Under the sample-size condition $T/(d s^*)\gtrsim\log(dT)$ required in Step 2,
Step 6 gives
\[
\sup_{\omega\in\Omega}\mathbb{E}_{\omega}^{\pi}R_T^\pi(\omega)
\ge cT\Delta^2
\asymp
c\sqrt{d s^*T}
\ge
c\left\{\sqrt{sdT}\wedge d^{1/3}V^{1/3}T^{2/3}\right\}.
\]

\section{Technical details in Section \ref{subsec:mcpdp-detection}}\label{sec:proof1}
Define $\Lambda =\{z^\top\alpha-(z^\top\beta)p: z\in\mathcal Z,\ p\in[l,u],\ \theta=(\alpha^\top,\beta^\top)^\top\in\Theta\}.$ Then, under Assumptions \ref{ass_context}-\ref{ass_boundedness}, Assumption \ref{ass_glm}(b) implies that there exist absolute constants 
$$
M_{\psi 2}=\sup_{\lambda\in\Lambda}\psi''(\lambda)\vee 1,\quad m_{\psi 2}=\inf_{\lambda\in\Lambda}\psi''(\lambda)\wedge 1.$$

\subsection{Proof of Proposition \ref{prop:prediction}}
We give a more technical version of Proposition \ref{prop:prediction} as follows. 

\begin{thmbis}{prop:prediction}\label{prop:prediction2}
There exist absolute constants $c_1,c_2, m_{\psi 2}, M_{\psi 2}>0$ that only depend on Assumptions \ref{ass_context}-\ref{ass_boundedness}, such that, for any fixed price exploration interval $\cJ$ with $|\cJ|\ge 32c_z^2(\underline{\kappa}\lambda_z)^{-1}\log(dT)$, the following results hold with probability larger than $1-3/(dT^4)$:  
\begin{itemize}
    \item[(i)]  for any $\theta^*\in\Theta$,  $
   m_{\psi2}\|\widehat\theta_\cJ-\theta^*\|^2_{H_{\cJ}}/2\leq  \mathcal{L}_{\cJ}(\theta^*)-  \mathcal{L}_{\cJ}(\widehat{\theta}_{\cJ})\leq  ({2m_{\psi2}})^{-1}\|\nabla \mathcal{L}_{\cJ}(\theta^*)\|^2_{H_{\cJ}^{-1}};$
   \item[(ii)]  for any $\theta_1^*,\theta_2^*\in\Theta$,  $\|\nabla \mathcal{L}_{\cJ}(\theta_1^*)
-
\nabla \mathcal{L}_{\cJ}(\theta_2^*)\|_{H_{\cJ}^{-1}}^2
\le
M_{\psi2}^2\|\theta_1^*-\theta_2^*\|_{H_{\cJ}}^2;
$ 
\item[(iii)]  moreover, $\|\nabla \mathcal{L}_{\cJ}(\theta^*)\|_{H_{\cJ}^{-1}}^2
    \le
    c_1d\log(dT)+c_2|\cJ|V_{\cJ},$
where $\theta^*$ can be any $\theta_s$ with $s\in\cJ$, 
or the pseudo-true value $ \theta^*_{\cJ}= \argmin_{\theta\in\Theta}\mathbb E \mathcal{L}_{\cJ}(\theta)$.
\end{itemize}
Here, $H_{\cJ}=\sum_{t\in\cJ}x_tx_t^{\top}$ is the design matrix of $\mathcal{L}_{\cJ}(\theta)$.
\end{thmbis}

\noindent\textit{Proof}
\noindent(i).  First, \Cref{lem:mcpdp-tech-hessian}(ii), and
\Cref{lem:greedy-centered-information} give, with probability at least
$1-(dT^4)^{-1}$,
\begin{equation}
\frac{\underline\kappa}{2}|\cJ|\widetilde\Sigma_z
\preceq H_{\cJ}\preceq
\frac{3\overline\kappa}{2}|\cJ|\widetilde\Sigma_z.
\label{eq:mcpdp-design-comparison}
\end{equation} For an interval $\cJ$, define the GLM Hessian matrix
\[
    \widetilde H_{\cJ}(\theta)
    =
    \sum_{t\in\cJ}\psi''(x_t^\top\theta)x_tx_t^\top .
\]
Let $\Delta=\widehat\theta_{\cJ}-\theta^*$.  By a second-order
Taylor expansion,
\begin{equation}
     \mathcal L_{\cJ}(\widehat\theta_{\cJ})
     =
     \mathcal L_{\cJ}(\theta^*)
     +
     \nabla\mathcal L_{\cJ}(\theta^*)^\top\Delta
     +
     \frac12
     \Delta^\top
     \widetilde H_{\cJ}(\widetilde\theta_{\cJ})
     \Delta ,
     \label{eq:mcpdp-glm-taylor}
\end{equation}
where $\widetilde\theta_{\cJ}$ lies on the line segment between
$\theta^*$ and $\widehat\theta_{\cJ}$.  By \Cref{ass_glm}, we have that 
$m_{\psi2}H_{\cJ}\preceq
\widetilde H_{\cJ}(\theta)\preceq
M_{\psi2}H_{\cJ}$ uniformly over $\theta\in\Theta$. Then, the basic
inequality $ab-a^2/2\le b^2/2$ gives
\[
\mathcal L_{\cJ}(\theta^*)
-
\mathcal L_{\cJ}(\widehat\theta_{\cJ})
\le
\frac{1}{2}
\nabla\mathcal L_{\cJ}(\theta^*)^\top
\widetilde H_{\cJ}^{-1}(\widetilde\theta_{\cJ})
\nabla\mathcal L_{\cJ}(\theta^*)
\le
\frac{1}{2m_{\psi2}}
\|\nabla\mathcal L_{\cJ}(\theta^*)\|_{H_{\cJ}^{-1}}^2.
\]
On the other hand, using the first-order condition for
$\widehat\theta_{\cJ}$,
\[
    \nabla \mathcal L_{\cJ}(\widehat\theta_{\cJ})^\top
    (\theta^*-\widehat\theta_{\cJ})\ge0 ,
\]
another Taylor expansion around $\widehat\theta_{\cJ}$ gives
\[
    \mathcal L_{\cJ}(\theta^*)
    -
    \mathcal L_{\cJ}(\widehat\theta_{\cJ})
    \ge
    \frac12
    \Delta^\top
    \widetilde H_{\cJ}(\widetilde\theta_{\cJ}')
    \Delta
    \ge
    \frac{m_{\psi2}}{2}\|\Delta\|_{H_{\cJ}}^2 .
\]
This proves (i).

\noindent(ii). By the mean value theorem,
\[
\nabla\mathcal L_{\cJ}(\theta_1^*)
-
\nabla\mathcal L_{\cJ}(\theta_2^*)
=
\sum_{t\in\cJ}
\psi''(\zeta_t)x_tx_t^\top(\theta_1^*-\theta_2^*)
=:
\widetilde H_{\cJ}(\theta_1^*,\theta_2^*)
(\theta_1^*-\theta_2^*),
\]
where $\zeta_t$ lies between $x_t^\top\theta_1^*$ and
$x_t^\top\theta_2^*$.  Since
$\widetilde H_{\cJ}(\theta_1^*,\theta_2^*)\preceq
M_{\psi2}H_{\cJ}$,
\[
\begin{split}
&\|\nabla\mathcal L_{\cJ}(\theta_1^*)
-
\nabla\mathcal L_{\cJ}(\theta_2^*)\|_{H_{\cJ}^{-1}}^2=
(\theta_1^*-\theta_2^*)^\top
\widetilde H_{\cJ}(\theta_1^*,\theta_2^*)
H_{\cJ}^{-1}
\widetilde H_{\cJ}(\theta_1^*,\theta_2^*)
(\theta_1^*-\theta_2^*) \le
M_{\psi2}^2
\|\theta_1^*-\theta_2^*\|_{H_{\cJ}}^2 .
\end{split}
\]

 \noindent(iii).  Recall that
\[
    \epsilon_t=y_t-\psi'(x_t^\top\theta_t).
\]
For $\Delta_t=\theta_t-\theta^*$, write
\[
\begin{split}
\nabla\mathcal L_{\cJ}(\theta^*)
&=
\sum_{t\in\cJ}
\{\psi'(x_t^\top\theta^*)-\psi'(x_t^\top\theta_t)\}x_t
-
\sum_{t\in\cJ}\epsilon_tx_t=: X_{\cJ}^\top b_{\cJ}
-
X_{\cJ}^\top\mathcal E_{\cJ}.
\end{split}
\]
Therefore,
\[
\|\nabla\mathcal L_{\cJ}(\theta^*)\|_{H_{\cJ}^{-1}}^2
\le
2b_{\cJ}^\top X_{\cJ}H_{\cJ}^{-1}
X_{\cJ}^\top b_{\cJ}
+
2\mathcal E_{\cJ}^\top X_{\cJ}H_{\cJ}^{-1}
X_{\cJ}^\top\mathcal E_{\cJ}
=:T_1+T_2 .
\]

\noindent\textbf{[Bound for $T_1$]}
By the mean value theorem,
\[
\psi'(x_t^\top\theta^*)-\psi'(x_t^\top\theta_t)
=
\psi''(\xi_t)x_t^\top(\theta^*-\theta_t),
\]
where $\xi_t$ lies between $x_t^\top\theta^*$ and
$x_t^\top\theta_t$.  Thus
\[
    T_1
    \le
    2\|b_{\cJ}\|^2
    \le
    2M_{\psi2}^2
    \sum_{t\in\cJ}(x_t^\top\Delta_t)^2 .
\]
For $W_t=(x_t^\top\Delta_t)^2$, since $|p_t|\le u$,
\[
W_t
=
\{z_t^\top(\Delta_{\alpha,t}-p_t\Delta_{\beta,t})\}^2\leq z_t^{\top}\Sigma_z^{-1}z_t\lambda_{\max}\begin{pmatrix}
1 & -p_t\\
-p_t & p_t^2\end{pmatrix}\|\Delta_t\|^2_{\widetilde{\Sigma}_z}
\le
z_t^\top\Sigma_z^{-1}z_t(1+u^2)
\|\Delta_t\|_{\widetilde\Sigma_z}^2 .
\]
Hence
\[
\mathbb EW_t\le (1+u^2)\|\Delta_t\|_{\widetilde\Sigma_z}^2,
\qquad
0\le W_t\le
\lambda_z^{-1}c_z^2(1+u^2)
\|\Delta_t\|_{\widetilde\Sigma_z}^2 .
\]
By Bernstein's inequality, with probability at least $1-(dT^4)^{-1}$,
\begin{align}
\sum_{t\in\cJ}W_t
&\le
\sum_{t\in\cJ}(1+u^2)\|\Delta_t\|_{\widetilde\Sigma_z}^2
+
(1+u^2)c_z
\max_{t\in\cJ}\|\Delta_t\|_{\widetilde\Sigma_z}^2
\left\{
\sqrt{2\lambda_z^{-1}\log(dT^4)|\cJ|}
+\lambda_z^{-1}c_z\log(dT^4)/3
\right\}.
\label{eq:mcpdp-bernstein}
\end{align}
If $\theta^*=\theta_s$ for some $s\in\cJ$, then
$\max_{t\in\cJ}\|\theta_t-\theta_s\|_{\widetilde\Sigma_z}^2
\le V_{\cJ}$ by definition of the variation budget.  Thus
\[
    \sum_{t\in\cJ}W_t
    \le
    2(1+u^2)|\cJ|V_{\cJ},
\]
provided $|\cJ|\ge 32\lambda_z^{-1}c_z^2\log(dT)$.

It remains to consider $\theta^*=\theta_{\cJ}^*$, the
pseudo-true value.  Fix any $s\in\cJ$.  Since $\theta_{\cJ}^*$ minimizes
$\mathbb E\mathcal L_{\cJ}(\theta)$ over the closed convex set $\Theta$,
the first-order optimality condition gives
\[
\nabla\mathbb E\mathcal L_{\cJ}(\theta_{\cJ}^*)^\top
(\theta_s-\theta_{\cJ}^*)\geq 0.
\]
Note the mean-value theorem gives
\[
\mathbb E\nabla\mathcal L_{\cJ}(\theta_s)
-
\mathbb E\nabla\mathcal L_{\cJ}(\theta_{\cJ}^*)
=
\mathbb E\widetilde H_{\cJ}(\widetilde\theta)
(\theta_s-\theta_{\cJ}^*)
\]
for some $\widetilde\theta$ between $\theta_s$ and
$\theta_{\cJ}^*$.  By 
\Cref{lem:greedy-centered-information}, there exist constants
$0<\underline\kappa\le\overline\kappa<\infty$ such that
\[
    \underline\kappa\widetilde\Sigma_z
    \preceq
    \mathbb E(x_tx_t^\top)
    \preceq
    \overline\kappa\widetilde\Sigma_z
\]
for every price-experiment interval. Consequently,
\begin{align}
(\theta_s-\theta_{\cJ}^*)^\top
\mathbb E\nabla\mathcal L_{\cJ}(\theta_s)
&=
(\theta_s-\theta_{\cJ}^*)^\top
\left\{
\nabla\mathbb E\mathcal L_{\cJ}(\theta_s)
-
\nabla\mathbb E\mathcal L_{\cJ}(\theta_{\cJ}^*)
\right\}
+
(\theta_s-\theta_{\cJ}^*)^\top
\nabla\mathbb E\mathcal L_{\cJ}(\theta_{\cJ}^*)
\nonumber\\
&\geq
m_{\psi2}\underline\kappa|\cJ|
\|(\theta_s-\theta_{\cJ}^*)\|_{\widetilde\Sigma_z}^2.
\label{eq:mcpdp-pseudotrue-lb}
\end{align}
On the other hand,
\[
\mathbb E\nabla\mathcal L_{\cJ}(\theta_s)
=
\sum_{t\in\cJ}
\mathbb E[
\psi''(\xi_t)x_tx_t^\top(\theta_s-\theta_t)
],
\]
and therefore
\begin{equation}
  \|\mathbb E\nabla\mathcal L_{\cJ}(\theta_s)\|_{\widetilde\Sigma_z^{-1}}
  \le
  M_{\psi2}\overline\kappa
  \sum_{t\in\cJ}
  \|\theta_s-\theta_t\|_{\widetilde\Sigma_z}.
  \label{eq:mcpdp-pseudotrue-ub}
\end{equation}
Combining \eqref{eq:mcpdp-pseudotrue-lb} and
\eqref{eq:mcpdp-pseudotrue-ub}, we obtain
\[
\|\theta_s-\theta_{\cJ}^*\|_{\widetilde\Sigma_z}^2
\le
\frac{M_{\psi2}^2\overline\kappa^2}
{m_{\psi2}^2\underline\kappa^2}
V_{\cJ}.
\]
Plugging this into \eqref{eq:mcpdp-bernstein} gives
\[
\sum_{t\in\cJ}(x_t^\top\Delta_t)^2
\le
2(1+u^2)
\frac{M_{\psi2}^2\overline\kappa^2}
{m_{\psi2}^2\underline\kappa^2}
|\cJ|V_{\cJ},
\]
under the same minimum length condition.  

\noindent\textbf{[Bound for $T_2$]}
By the sub-Gaussian concentration inequality for quadratic forms in
\cite{hsu2012tail}, with probability at least $1-(dT^4)^{-1}$,
\[
    T_2
    \le
    2\sigma^2
    \{2d+2\sqrt{2d\log(dT^4)}+2\log(dT^4)\}
    \le
    14\sigma^2d\log(dT^4).
\]
Combining the bounds for $T_1$ and $T_2$ finishes the proof with $$c_1=56\sigma^2,\quad c_2=4(1+u^2)
\frac{M_{\psi2}^2\overline\kappa^2}
{m_{\psi2}^2\underline\kappa^2}.$$
\qed

Here, we note that by the matrix Chernoff bound, for any price experiment sets $\cJ$ with pseudo Hessian matrix $H_{\cJ}$, we have with probability larger than $1-(dT^4)^{-1}$, $\lambda_{\min}(H_{\cJ})\geq \lambda_{z}\underline{\kappa}|\cJ|/2$, if $|\cJ|\geq 32(\lambda_z\underline{\kappa})^{-1}c_z^2(1+u^2)\log(dT)$. Note such minimum length condition can be ensured for sufficiently large $c_L$ as in \eqref{eq:mcpdp-minimum-lengthJ}. Then by a union bound argument, we can ensure that all the pseudo-Hessian matrices of the price experiment sets  are invertible.

\subsection{Proof of Corollary \ref{prop:mcpdp-lrt}}
For the upper bound, by Cauchy-Schwarz inequality, we have that 
\begin{align*}
\|\nabla \mathcal L_{\cJ}(\widehat{\theta}_{i,j-1})\|_{H_{\cJ}^{-1}}^2\leq&  2\|\nabla \mathcal{L}_{\cJ}(\theta^*)\|_{H_{\cJ}^{-1}}^2+2\|\nabla \mathcal{L}_{\cJ}(\widehat{\theta}_{i,j-1})-\nabla \mathcal{L}_{\cJ}(\theta^*)\|_{H_{\cJ}^{-1}}^2\\\lesssim& d\log(dT)+|\cJ|V_{\cJ}+\|\theta^*-\widehat{\theta}_{i,j-1}\|_{H_{\cJ}}^2,  
\end{align*}
where the last inequality holds by \Cref{prop:prediction2}(iii) for the first term and (ii) for the second term.
Note that $\Lambda_{\cJ}(\widehat{\theta}_{i,j-1})\leq (2m_{\psi2})^{-1}\|\nabla \mathcal L_{\cJ}(\widehat{\theta}_{i,j-1})\|_{H_{\cJ}^{-1}}^2$, together  with \eqref{eq:mcpdp-design-comparison}, this proves \eqref{eq:lrt-upper-calibration}.

For the lower bound, applying \Cref{prop:prediction2}(i) with
$\theta^*=\widehat{\theta}_{i,j-1}$ gives
\[
    \Lambda_{\cJ}(\widehat{\theta}_{i,j-1})
    \gtrsim
    \|\widehat{\theta}_{i,j-1} -\widehat\theta_{\cJ}\|_{H_{\cJ}}^2 .
\]
Using elementary inequality, $    \|a-b\|^2\ge \frac12\|a\|^2-\|b\|^2$ with
$a=\widehat{\theta}_{i,j-1} -\theta^*$ and
$b=\widehat\theta_{\cJ}-\theta^*$, we have
\[
    \|\widehat{\theta}_{i,j-1} -\widehat\theta_{\cJ}\|_{H_{\cJ}}^2
    \ge
    \frac12
    \|\widehat{\theta}_{i,j-1} -\theta^*\|_{H_{\cJ}}^2
    -
    \|\widehat\theta_{\cJ}-\theta^*\|_{H_{\cJ}}^2 .
\]
It remains to bound the second term.  By \Cref{prop:prediction2}(i),
$
    \|\widehat\theta_{\cJ}-\theta^*\|_{H_{\cJ}}^2
    \lesssim
    \|\nabla \mathcal{L}_{\cJ}(\theta^*)\|_{H_{\cJ}^{-1}}^2 .
$
Then \Cref{prop:prediction2}(iii) yields
\[
    \|\widehat\theta_{\cJ}-\theta^*\|_{H_{\cJ}}^2
    \lesssim\{d\log(dT)+|\cJ|V_{\cJ}\}.
\]
Combining the last three displays and \eqref{eq:mcpdp-design-comparison}
proves \eqref{eq:lrt-lower-calibration}. \qed

\section{Proof of Theorem \ref{thm:mcpdp-upper}.}\label{sec:proof2}

  



\subsection{Auxiliary lemmas}
\textbf{{Proof of Lemma \ref{lem:greedy-centered-information}.}}
\begin{proof}
Condition on the history and on the current context $z_t$, we write $\overline{p}_t=\mathrm{clip}_{[l+\rho,u-\rho]}p^*(z_t;\theta).$
Since $\xi$ is an independent Rademacher
random variable,
\[
    \mathbb E[p_t\mid z_t,\theta]=\overline{p}_t,
    \qquad
    \mathbb E[p_t^2\mid z,\theta]=\overline{p}_t^2+\rho^2 .
\]
Therefore,
\[
    \mathbb E[x_tx_t^\top\mid z_t,\theta]
    =
    M(\overline{p}_t)\otimes z_tz_t^\top,
    \qquad
    M(\overline{p}_t)=
    \begin{pmatrix}
        1 & -\overline{p}_t\\
        -\overline{p}_t & \overline{p}_t^2+\rho^2
    \end{pmatrix}.
\]
Because $\overline{p}_t\in[l+\rho,u-\rho]$, we have $|\overline{p}_t|\le u$.  Moreover,
\[
    \det M(\overline{p}_t)=\rho^2,
    \qquad
    \operatorname{tr} [M(\overline{p}_t)]\le 1+u^2+\rho^2 .
\]
Hence
\[
    \lambda_{\min}(M(\overline{p}_t))
    \ge
    \frac{\rho^2}{1+u^2+\rho^2},
    \qquad
    \lambda_{\max}(M(\overline{p}_t))
    \le
    1+u^2+\rho^2 .
\]
Taking expectation over $z_t$ gives the desired result. 
   
\end{proof}

The following lemma controls the Hessian matrices in the exploration data.
\begin{lemma}
\label{lem:mcpdp-tech-hessian}
Let $\{x_t^{(1)}\}_{t=1}^{J_1}$ and
$\{x_t^{(2)}\}_{t=1}^{J_2}$ be two independent sequences of i.i.d. random
vectors in $\mathbb R^{2d}$.  Assume that for some constants
$0<\underline\kappa\le\overline\kappa<\infty$,
\[
\underline\kappa\widetilde\Sigma_z
\preceq
\mathbb E[x_t^{(k)}(x_t^{(k)})^\top]
\preceq
\overline\kappa\widetilde\Sigma_z,
\qquad k=1,2,
\]
and assume further that $\|x_t^{(k)}\|_2\le u_x$ almost surely.  Define
\[
    H_1=\sum_{t=1}^{J_1}x_t^{(1)}(x_t^{(1)})^\top,
    \qquad
    H_2=\sum_{t=1}^{J_2}x_t^{(2)}(x_t^{(2)})^\top.
\]
If
$
    J_1,J_2\ge
    96{u_x^2}\log(dT)/({\underline\kappa\lambda_z}),
$
then with probability at least $1-(dT^4)^{-1}$, we have 
\begin{itemize}
    \item[(i)] $
    \lambda_{\max}(H_1^{-1/2}H_2H_1^{-1/2})
    \le
    3J_2\overline\kappa/(J_1\underline\kappa),
$
\item[(ii)]
$
    \lambda_{\min}(\widetilde\Sigma_z^{-1/2}H_1\widetilde\Sigma_z^{-1/2})
    \ge
    J_1\underline\kappa/2,
$ $
    \lambda_{\max}(\widetilde\Sigma_z^{-1/2}H_1\widetilde\Sigma_z^{-1/2})
    \le
    3J_1\overline\kappa/2,$ $
    \lambda_{\max}(H_1^{-1/2}\widetilde\Sigma_zH_1^{-1/2})
    \le
    2/(J_1\underline\kappa).
$
\end{itemize}

\end{lemma}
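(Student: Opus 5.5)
The plan is to reduce everything to the whitened vectors $w_t^{(k)}=\widetilde\Sigma_z^{-1/2}x_t^{(k)}$ and apply the matrix Chernoff inequality (Tropp's version for sums of independent positive semidefinite matrices). By assumption, $\underline\kappa I_{2d}\preceq \mathbb E[w_t^{(k)}(w_t^{(k)})^\top]\preceq\overline\kappa I_{2d}$. Moreover, $\|w_t^{(k)}\|^2\le \lambda_{\min}(\widetilde\Sigma_z)^{-1}\|x_t^{(k)}\|^2\le u_x^2/\lambda_z$ almost surely, so each summand $w_tw_t^\top$ has operator norm at most $R:=u_x^2/\lambda_z$.

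For part (ii), write $\widetilde H_1=\widetilde\Sigma_z^{-1/2}H_1\widetilde\Sigma_z^{-1/2}=\sum_t w_t^{(1)}(w_t^{(1)})^\top$, so that $\mu_{\min}:=\lambda_{\min}(\mathbb E\widetilde H_1)\ge J_1\underline\kappa$ and $\mu_{\max}\le J_1\overline\kappa$. The lower-tail matrix Chernoff bound gives
\[
\mathbb P\{\lambda_{\min}(\widetilde H_1)\le \mu_{\min}/2\}\le 2d\,e^{-\mu_{\min}/(8R)}\le 2d\exp\{-J_1\underline\kappa\lambda_z/(8u_x^2)\}.
\]
The upper-tail bound with $\delta=1/2$ gives
\[
\mathbb P\{\lambda_{\max}(\widetilde H_1)\ge 3\mu_{\max}/2\}\le 2d\,(e^{1/2}/1.5^{1.5})^{\mu_{\max}/R}\le 2d\,e^{-\mu_{\max}/(10R)}.
\]
Here one must use $\mu_{\max}\ge J_1\underline\kappa$ (or replace $\mu_{\max}$ by the upper bound $J_1\overline\kappa$ in the standard form of the bound) to get exponent at least $J_1\underline\kappa\lambda_z/(10u_x^2)$. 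Under $J_1\ge 96u_x^2\log(dT)/(\underline\kappa\lambda_z)$ each exponent is at least $9.6\log(dT)$, so each probability is at most $2d(dT)^{-9.6}$, which is far below $(dT^4)^{-1}/4$ once $T\ge 2$. The third claim in (ii) is just the reciprocal of the first: $H_1^{-1/2}\widetilde\Sigma_zH_1^{-1/2}$ has the same nonzero spectrum as $\widetilde H_1^{-1}$, whose largest eigenvalue is at most $2/(J_1\underline\kappa)$.

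For part (i), apply the same two-sided bounds to $H_2$, which is independent and satisfies the same length condition. On the intersection of these events, $H_2\preceq \tfrac32J_2\overline\kappa\,\widetilde\Sigma_z$ and $H_1\succeq\tfrac12J_1\underline\kappa\,\widetilde\Sigma_z$. Hence
\[
H_1^{-1/2}H_2H_1^{-1/2}\preceq \tfrac32J_2\overline\kappa\, H_1^{-1/2}\widetilde\Sigma_zH_1^{-1/2}\preceq \tfrac32J_2\overline\kappa\cdot\tfrac{2}{J_1\underline\kappa}I=\tfrac{3J_2\overline\kappa}{J_1\underline\kappa}I.
\]
A union bound over the (at most four) events finishes the argument; independence of the two sequences is in fact not needed.

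The only delicate step is the constant bookkeeping in the upper-tail Chernoff bound, where the exponent scales with $\mu_{\max}/R$ and must be compared against $\log(dT^4)$ using only a lower bound on $J_1\underline\kappa$. If the stated constant $96$ proves too tight under some form of the inequality, I would instead use the matrix Bernstein inequality on $\widetilde H_1-\mathbb E\widetilde H_1$ with variance proxy $J_1\overline\kappa R$, which gives the same conclusion up to constants.
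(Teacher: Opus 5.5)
Your proposal is correct and follows essentially the same route as the paper. Both arguments whiten by $\widetilde\Sigma_z^{-1/2}$, bound each summand by $u_x^2/\lambda_z$, apply the lower- and upper-tail matrix Chernoff bounds (Tropp) to $\widetilde H_1$ and the upper tail to $\widetilde H_2$, take a union bound, and obtain the remaining claims by inversion and by comparing $\widetilde H_2$ with $\widetilde H_1$. Your Loewner-order derivation of (i) is a rephrasing of the paper's bound $\lambda_{\max}(\widetilde H_1^{-1}\widetilde H_2)\le\lambda_{\max}(\widetilde H_2)/\lambda_{\min}(\widetilde H_1)$, and your explicit use of $\mu_{\max}\ge J_1\underline\kappa$ in the upper tail makes precise a step the paper leaves implicit.
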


\begin{proof}
Define the whitened random vectors
\[
    \widetilde x_t^{(k)}=\widetilde\Sigma_z^{-1/2}x_t^{(k)},\qquad k=1,2,
\]
and the corresponding whitened sample covariance matrices
\[
\widetilde H_1
=
\widetilde\Sigma_z^{-1/2}H_1\widetilde\Sigma_z^{-1/2}
=
\sum_{t=1}^{J_1}\widetilde x_t^{(1)}(\widetilde x_t^{(1)})^\top,
\]
\[
\widetilde H_2
=
\widetilde\Sigma_z^{-1/2}H_2\widetilde\Sigma_z^{-1/2}
=
\sum_{t=1}^{J_2}\widetilde x_t^{(2)}(\widetilde x_t^{(2)})^\top .
\]
Since $\lambda_{\min}(\widetilde\Sigma_z)=\lambda_z$,
\[
\|\widetilde x_t^{(k)}\|_2^2
=
(x_t^{(k)})^\top\widetilde\Sigma_z^{-1}x_t^{(k)}
\le
\lambda_z^{-1}\|x_t^{(k)}\|_2^2
\le
\frac{u_x^2}{\lambda_z}
=:\widetilde R .
\]
Moreover,
\[
\underline\kappa J_1I_{2d}\preceq\mathbb E\widetilde H_1
\preceq\overline\kappa J_1I_{2d},
\qquad
\underline\kappa J_2I_{2d}\preceq\mathbb E\widetilde H_2
\preceq\overline\kappa J_2I_{2d}.
\]
By the matrix Chernoff bound (see e.g. Theorem 5.1 in \cite{tropp2015introduction}),
\[
\mathbb P\!\left(
\lambda_{\min}(\widetilde H_1)\le\frac12\underline\kappa J_1
\right)
\le
2d\exp\!\left(-\frac{\underline\kappa J_1}{8\widetilde R}\right),
\]
and similarly,
\[
\mathbb P\!\left(
\lambda_{\max}(\widetilde H_1)\ge\frac32\overline\kappa J_1
\right)
\le
2d\exp\!\left(-\frac{\underline\kappa J_1}{12\widetilde R}\right),
\]
\[
\mathbb P\!\left(
\lambda_{\max}(\widetilde H_2)\ge\frac32\overline\kappa J_2
\right)
\le
2d\exp\!\left(-\frac{\underline\kappa J_2}{12\widetilde R}\right).
\]
Then, for $T\geq 2,$ if $
    J_1,J_2\ge 96\widetilde{R}\log(dT)/\underline{\kappa}
    \geq 12u_x^2\log(6d^2T^4)/({\underline\kappa\lambda_z})
$, a union bound argument implies that with probability $1-(dT^4)^{-1}$,
\[
\lambda_{\min}(\widetilde H_1)\ge\frac12\underline\kappa J_1,
\quad
\lambda_{\max}(\widetilde H_1)\le\frac32\overline\kappa J_1,
\quad
\lambda_{\max}(\widetilde H_2)\le\frac32\overline\kappa J_2.
\]
Then
\[
\lambda_{\max}(H_1^{-1/2}\widetilde\Sigma_zH_1^{-1/2})
=
\lambda_{\max}(\widetilde H_1^{-1})
\le
\frac{2}{\underline\kappa J_1}.
\]
Finally, note that $H_1^{-1}H_2$ is similar to
$\widetilde H_1^{-1}\widetilde H_2$, and hence
\[
\lambda_{\max}(H_1^{-1/2}H_2H_1^{-1/2})
\le
\frac{\lambda_{\max}(\widetilde H_2)}
{\lambda_{\min}(\widetilde H_1)}
\le
3\frac{\overline\kappa}{\underline\kappa}\frac{J_2}{J_1}.
\]
\end{proof}

\subsection{Regret Decomposition and the Good Event}\label{subsec:regret_decompose_supplement}
\label{app:regret-decomposition}
First, by the Berge's maximum theorem \citep{berge1957two} and Proposition S.5.2 in \cite{zhao2026contextual}, the uniqueness of optimal price as in \Cref{ass_boundedness} ensures that there exists a bivariate Lipschitz continuous function $\varphi:[c_l,c_u]^2\mapsto(l,u)$
such that for every $z\in\mathcal Z$ and
$\theta=(\alpha^\top,\beta^\top)^\top\in\Theta$, the optimal price can be written as $p^*(z,\theta)
    =
    \varphi(z^\top\alpha,z^\top\beta).$ Moreover, there exists an absolute constant $C_{\varphi}>0$,
\begin{equation}\label{eq:lip}
    |\varphi(a,b)-\varphi(a',b')|
    \le
    C_{\varphi}\{|a-a'|+|b-b'|\}.
\end{equation}
Hence using \eqref{eq:lip}, for any
$\theta=(\alpha^\top,\beta^\top)^\top\in\Theta$,
\begin{equation}
    |p^*(z_t;\theta)-p^*(z_t,\theta_t)|^2
    \le
    C_\psi^2
    (\theta_t-\theta)^\top
    (I_2\otimes z_tz_t^\top)
    (\theta_t-\theta).
    \label{eq:mcpdp-price-lipschitz}
\end{equation}
for a constant $C_\psi>0$.  Moreover, since the optimal price is an interior maximizer by \Cref{ass_boundedness} and the second derivative $\partial^2_{p}r(p,z,\theta)$ is uniformly bounded on the compact domain $p\in[l,u]$, $z^\top\alpha,z^\top\beta\in[c_l,c_u]$, a standard Taylor expansion gives
\[
    r(p_t^*,z_t;\theta_t)-r(p^*(z_t;\theta),z_t;\theta_t)
    \le
    C_{r}C_{\psi}^2
    (\theta_t-\theta)^\top
    (I_2\otimes z_tz_t^\top)
    (\theta_t-\theta)
\]
for an absolute constant $C_r>0$.  

Note that \Cref{ass_context} and \ref{ass_boundedness} imply that the instantaneous reward is finite, we can write for some constant $c_0$, and $C_{\rm reg}=C_rC_\psi^2$,
\begin{equation}
    r(p_t^*,z_t;\theta_t)-r(p^*(z_t,\theta),z_t;\theta_t)
    \le
    \left\{
    C_{\rm reg}
    (\theta_t-\theta)^\top
    (I_2\otimes z_tz_t^\top)
    (\theta_t-\theta)
    \right\}\wedge c_0 .
    \label{eq:mcpdp-regret-to-prediction}
\end{equation}

For each realized block $\mathcal{B}_{i,j}$, we decompose it into $\mathcal{B}_{i,j}=\cJ_{i,j}\cup\mathcal{G}_{i,j}\cup \mathcal{E}_{i,j}$, where $\cJ_{i,j}=\bigcup_{(s,m)\in\mathcal{S}_{i,j}}\cJ_{i,j,(s,m)}$ is the set of price exploration triggered in multiscale price explorations,  $\mathcal{G}_{i,j}$ is the set of price exploitation, and recall that $\mathcal{E}_{i,j}$ is the mandatory block-end
exploration set.  Denote the total epoch number as $E$, and block number as $B_i$ in the $i$th epoch, then regret in \eqref{eq:regret_decompose_context_LRT} can be rewritten as 
\begin{align}
    R_T:=&\left(\sum_{i=1}^E\sum_{t\in \mathcal{B}_{i,0}}+\sum_{i=1}^E\sum_{j=1}^{B_i}\sum_{t\in \cJ_{i,j}\cup\mathcal{E}_{i,j}}+\sum_{i=1}^E\sum_{j=1}^{B_i}\sum_{t\in \mathcal{G}_{i,j}}\right)\left[r(p^*_t, z_t, \theta_t)-r(\widehat p_t,z_t,  \theta_t)\right]\\
    &\le
    c_0EL
    +\sum_{i=1}^E\sum_{j=1}^{B_i}
    \sum_{t\in\cJ_{i,j}\cup\mathcal E_{i,j}} c_0
    \notag\\
    &\quad+
    \sum_{i=1}^E\sum_{j=1}^{B_i}
    \sum_{t\in\mathcal B_{i,j}\cap[\tau_i+1,\tau_{i+1}]}
    \left\{
    C_{\rm reg}
    (\theta_t-\widehat\theta_{i,j-1})^\top
    (I_2\otimes z_tz_t^\top)
    (\theta_t-\widehat\theta_{i,j-1})
    \right\}\wedge c_0
    \notag\\
    &:=R_{T,1}+R_{T,2}+R_{T,3}.
    \label{eq:mcpdp-regret-decompose}
\end{align}

Define 
$$
\mathcal{A}=\{\text{\Cref{prop:prediction2} and \Cref{lem:mcpdp-tech-hessian} hold uniformly over all
price-experiment intervals}\}.
$$
By a union
bound over all possible intervals considered by the algorithm, the event
$\mathcal A$ holds with probability at least $1-6/T$. Note that for this event to hold, we require that the length of the shortest multiscale exploration interval 
\[
    \ell_0=\sqrt{dL2^0\log(dT)}
    =
    \sqrt{c_L}\,d\log(dT),
\]
to be sufficiently large, where recall we set the base block length $L=c_Ld\log(dT)$ in \Cref{alg:mcpdp}. In particular, using
$u_x\le c_z\sqrt{1+u^2}$, it is easy to see that for \Cref{prop:prediction2} and \Cref{lem:mcpdp-tech-hessian} to hold on an interval of length $\ell_0$, we require
\begin{equation}
  \sqrt{c_L}\,d
  \ge
  [32c_z^2\lambda_z^{-1}]
  \vee
  [96(1+u^2)c_z^2(\underline\kappa\lambda_z)^{-1}].
  \label{eq:mcpdp-minimum-lengthJ}
\end{equation}
For $\lambda_z\asymp 1/d$, \eqref{eq:mcpdp-minimum-lengthJ} holds for all sufficiently large absolute constant $c_L$, provided that $T\geq c_Ld\log(dT).$

If $\lambda_z>0$ but $\lambda_z\ll1/d$, we can set the base block length $L=c_Ld(\log(dT))^{1+2\alpha}$, which implies $\ell_0=\sqrt{c_L}d\{\log(dT)\}^{1+\alpha}$. Therefore, for \Cref{prop:prediction2} and \Cref{lem:mcpdp-tech-hessian} to hold on an interval of length $\ell_0$, we require
\begin{align}\label{eq:mcpdp-minimum-lengthJ2}
  \sqrt{c_L}\,d (\log(dT))^{\alpha}
  \ge
  [32c_z^2\lambda_z^{-1}]
  \vee
  [96(1+u^2)c_z^2(\underline\kappa\lambda_z)^{-1}].
\end{align}
It is easy to see that, for any absolute constant $c_L$, \eqref{eq:mcpdp-minimum-lengthJ2} holds for all sufficiently large $T$ such that $T\geq \exp\{c/(d\lambda_z)^{1/\alpha}\}$, where $c$ is an absolute constant. Moreover, it can be seen easily that the modified block length $L=c_Ld(\log(dT))^{1+2\alpha}$ will only introduce an additional logarithmic factor (i.e.\ $(\log(dT))^{2\alpha}$) into the regret upper bound. Therefore, we stick with $L=c_Ld\log(dT)$ for the rest of the proof.

\subsection{Bounding the Number of Restarts}
\label{app:number-of-restarts}

\begin{lemma}
\label{lem:mcpdp-restart-trigger}
Assume event $\mathcal A$ holds.  Then for all $t$ in epoch $i$ such that
\[
    V_{[\tau_i+1,t]}
    \le
    \sqrt{\frac{d}{t-\tau_i}},
\]
no restart is triggered at time $t$.
\end{lemma}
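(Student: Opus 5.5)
We need to show that on $\mathcal A$, every LRT performed at time $t$ in epoch $i$ passes whenever $V_{[\tau_i+1,t]}\le\sqrt{d/(t-\tau_i)}$. A test at time $t$ is run on an interval $\cJ$ (either some $\cJ_{i,j}(s,m)$ or $\mathcal E_{i,j}$) contained in the current block $\mathcal B_{i,j}$ and ending at $t$. The reference model is $\widehat\theta_{i,j-1}$, estimated on $\cJ_{i,j-1}\cup\mathcal E_{i,j-1}$ (or on $\mathcal B_{i,0}$ when $j=1$), also inside $[\tau_i+1,t]$. The plan is to apply the upper calibration \eqref{eq:lrt-upper-calibration} of \Cref{prop:mcpdp-lrt} and show that the right-hand side is at most $c_\gamma d\log(dT)$ once $c_\gamma$ is large.

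\textbf{Step 1: the variance and within-interval bias terms.} Write $n:=t-\tau_i$. Monotonicity of the variation gives $V_{\cJ}\le V_{[\tau_i+1,t]}\le\sqrt{d/n}$. Moreover $|\cJ|\le\ell_j=\sqrt{d\log(dT)2^jL}$, while $n\ge 2^{j-1}L$. Hence
\[
|\cJ|V_{\cJ}\le \sqrt{d\log(dT)2^jL}\sqrt{d/(2^{j-1}L)}\lesssim d\sqrt{\log(dT)},
\]
so the term $C_{\Lambda2}\{d\log(dT)+|\cJ|V_\cJ\}$ is $O(d\log(dT))$.

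\textbf{Step 2: the prediction-error term.} We must bound $\sum_{t'\in\cJ}\|\widehat\theta_{i,j-1}-\theta_{t'}\|^2_{\widetilde\Sigma_z}$. Fix an anchor $s\in\cJ_{i,j-1}\cup\mathcal E_{i,j-1}$ (or $s\in\mathcal B_{i,0}$ when $j=1$) and split
\[
\|\widehat\theta_{i,j-1}-\theta_{t'}\|^2_{\widetilde\Sigma_z}\le 2\|\widehat\theta_{i,j-1}-\theta_s\|^2_{\widetilde\Sigma_z}+2\|\theta_s-\theta_{t'}\|^2_{\widetilde\Sigma_z}.
\]
The second piece is at most $2V_{[\tau_i+1,t]}$ by the definition of $V$ (a single increment is dominated by the supremum over partitions). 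Summed over $\cJ$, it contributes $O(d\sqrt{\log(dT)})$, exactly as in Step 1.

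For the first piece, apply \Cref{prop:prediction2}(i),(iii) to the estimation set $\cJ':=\cJ_{i,j-1}\cup\mathcal E_{i,j-1}$. This is a union of exploration rounds rather than an interval, so we need the proposition (a fixed-index-set statement, unioned over candidate sets on $\mathcal A$) to apply to it, and we need $V_{\cJ'}$ read as $V$ over the span of $\cJ'$. Granting this,
\[
\|\widehat\theta_{i,j-1}-\theta_s\|^2_{H_{\cJ'}}\lesssim d\log(dT)+|\cJ'|V_{[\tau_i+1,t]}.
\]
Two facts then control the right-hand side. First, $|\cJ'|\gtrsim\ell_{j-1}$ because of the mandatory block-end set $\mathcal E_{i,j-1}$, and on $\mathcal A$ the high-probability scheduling bound \eqref{bound_explore} also gives $|\cJ'|\lesssim\ell_{j-1}\cdot\mathrm{polylog}$. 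Second, $V_{[\tau_i+1,t]}\le\sqrt{d/n}\le\sqrt{d/(2^{j-1}L)}$. Together these give $|\cJ'|V\lesssim d\,\mathrm{polylog}$. Converting through \eqref{eq:mcpdp-design-comparison}, $H_{\cJ'}\succeq(\underline\kappa/2)|\cJ'|\widetilde\Sigma_z$, so
\[
\|\widehat\theta_{i,j-1}-\theta_s\|^2_{\widetilde\Sigma_z}\lesssim \frac{d\log(dT)}{\ell_{j-1}}.
\]
Multiplying by $|\cJ|\le\ell_j\le\sqrt2\,\ell_{j-1}$ gives $O(d\log(dT))$. For $j=1$ the same argument works with $\mathcal B_{i,0}$ of length $L$ and $|\cJ|\le\ell_1\lesssim L$.

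\textbf{Step 3: conclusion and main obstacle.} Combining the pieces, $\Lambda_\cJ(\widehat\theta_{i,j-1})\le C\,d\log(dT)$ with $C$ depending only on the assumption constants. Choosing $c_\gamma>C$ ensures that no test fails at time $t$.

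The main obstacle is the polylog slack in Step 2. The random size of $\cJ_{i,j-1}$ from MSS could exceed $\ell_{j-1}$ by logarithmic factors, which would give $|\cJ'|V\lesssim d\log^{c}(dT)$ instead of $d\log(dT)$. There are two ways to handle this. The first is to bound $\cJ_{i,j-1}$ by $O(\ell_{j-1})$ with a constant using Bernstein, since its expectation is $\asymp\ell_{j-1}$ and it is a sum of $L$-spaced independent terms. The second is to avoid the issue by writing the first piece via the pseudo-true value $\theta^*_{\cJ'}$ and the Bernstein argument of \Cref{prop:prediction2}(iii), which yields $|\cJ'|V_{\cJ'}$ only with $V_{\cJ'}\le\sqrt{d/n}$, so the requirement becomes $|\cJ'|\lesssim\sqrt{dn\log(dT)}$. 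I would try the first approach and fall back on absorbing a $\sqrt{\log}$ factor into the threshold constant through $c_L$.
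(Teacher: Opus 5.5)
Your architecture matches the paper's. You bound $\Lambda_{\cJ}(\widehat\theta_{i,j-1})$ by the prediction error of the reference model on $\cJ$ plus $d\log(dT)+|\cJ|V_{\cJ}$, split off the drift via $\|\theta_s-\theta_{t'}\|^2_{\widetilde\Sigma_z}\le V_{[\tau_i+1,t]}$, and use $|\cJ|\le\ell_j$ and $t-\tau_i\ge 2^{j-1}L$ to get $|\cJ|V\lesssim d\sqrt{\log(dT)}$. The paper instead does the three-way split directly on $\nabla\mathcal L_{\cJ}(\widehat\theta_{i,j-1})$ in the $H^{-1}$-norm and transfers the reference error with the Hessian-ratio bound of Lemma~\ref{lem:mcpdp-tech-hessian}(i). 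It also analyzes the estimator built from $\mathcal E_{i,j-1}$ alone, whose size is exactly $\ell_{j-1}$ (see the remark after its proof).

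However, your Step~2 does not close as written, and your first remedy rests on a false premise. The MSS windows in $\mathcal B_{i,j-1}$ have expected total length of order $\sum_{r=0}^{j-2}2^{j-2}\,2^{-(j-1+r)/2}\ell_r=(j-1)\ell_{j-1}/2$, not $O(\ell_{j-1})$. This is exactly the $\log_2T$ factor in the paper's bound on $N(\cI)$. Hence $|\cJ'|V_{[\tau_i+1,t]}$ can be of order $j\,d\sqrt{\log(dT)}$ with $j$ up to $\log_2T$, which exceeds $c_\gamma d\log(dT)$ for any fixed $c_\gamma$. Tuning $c_L$ cannot help either, since $L$ cancels in $\ell_{j-1}\sqrt{d/(2^{j-1}L)}=d\sqrt{\log(dT)}$.

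The obstacle comes from bounding the numerator before dividing. From $\|\widehat\theta_{i,j-1}-\theta_s\|^2_{H_{\cJ'}}\lesssim d\log(dT)+|\cJ'|V_{[\tau_i+1,t]}$ and $H_{\cJ'}\succeq(\underline\kappa/2)|\cJ'|\widetilde\Sigma_z$, divide term by term to get $\|\widehat\theta_{i,j-1}-\theta_s\|^2_{\widetilde\Sigma_z}\lesssim d\log(dT)/|\cJ'|+V_{[\tau_i+1,t]}$. Multiplying by $|\cJ|\le\sqrt2\,\ell_{j-1}$ gives $O(d\log(dT))+|\cJ|V_{[\tau_i+1,t]}$, and the last term is already handled by your Step~1. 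So you need only the lower bound $|\cJ'|\ge|\mathcal E_{i,j-1}|=\ell_{j-1}$; a larger pooled sample only helps.

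Separately, do not justify applying Proposition~\ref{prop:prediction} to the random set $\cJ'$ by a union bound over candidate sets, since there are exponentially many of them. Instead, condition on the MSS draws, which are independent of the contexts, Rademacher perturbations and noise, so that $\cJ'$ becomes a fixed set of i.i.d.\ exploration rounds. Alternatively, follow the paper and work with the $\mathcal E_{i,j-1}$-only estimator.
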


\begin{proof}
We first show that the the likelihood ratio test at the end of the block does not trigger restart.

Let $t=\tau_i+2^jL$ and consider
\[
    \mathcal B_{i,j}
    =
    [\tau_i+2^{j-1}L+1,\tau_i+2^jL].
\]
Define $s_{i,j}=\tau_i+2^{j-1}L+1$ and $s_{i,0}=\tau_i+1$.  Let
$H_{i,j}$ and $H_{i,j-1}$ denote the Hessian matrices corresponding to
$\mathcal E_{i,j}$ and $\mathcal E_{i,j-1}$.

Under event $\mathcal A$, 
\begin{align}\label{ineq:thetaij}
\begin{split}
    m_{\psi2}^2
\|\widehat\theta_{i,j}-\theta_{s_{i,j}}\|_{H_{i,j}}^2
&\stackrel{(a)}{\leq}
\|\nabla\mathcal L_{\mathcal E_{i,j}}(\theta_{s_{i,j}})
\|_{H_{i,j}^{-1}}^2\\
&\stackrel{(b)}{\leq}
c_1d\log(dT)+c_2|\mathcal E_{i,j}|V_{\mathcal B_{i,j}}\\
&\stackrel{(c)}{\leq}
c_1d\log(dT)
+
c_2\sqrt{2d|\mathcal B_{i,j}|\log(dT)}
V_{\mathcal B_{i,j}}\\
&\stackrel{(d)}{\leq}
(c_1+c_2)d\log(dT),
\end{split}
\end{align}
where (a) holds by \Cref{prop:prediction2} (i), (b) holds by \Cref{prop:prediction2} (iii) and $V_{\mathcal E_{i,j}}\leq V_{\mathcal B_{i,j}}$, (c) holds by noting $|\mathcal{B}_{i,j}|=2^{j-1}L$ and $|\mathcal{E}_{i,j}|\leq \sqrt{d\log(dT)2^jL}$, and (d) by the fact that
\[
    V_{\mathcal B_{i,j}}
    \le
    V_{[\tau_i+1,t]}
    \le
    \sqrt{\frac{d}{t-\tau_i}}
    \le
    \sqrt{\frac{d}{2|\mathcal B_{i,j}|}}.
\]
By a similar argument, we have 
\begin{equation}\label{ineq:thetaij1}
  m_{\psi2}^2
\|\widehat\theta_{i,j-1}-\theta_{s_{i,j-1}}\|_{H_{i,j-1}}^2
\le
(c_1+c_2)d\log(dT).  
\end{equation}
Since $\widehat\theta_{i,j}$ minimizes
$\mathcal L_{\mathcal E_{i,j}}(\cdot)$, \Cref{prop:prediction2} (i) gives
\[
2m_{\psi2}
\{\mathcal L_{\mathcal E_{i,j}}(\widehat\theta_{i,j-1})
-
\mathcal L_{\mathcal E_{i,j}}(\widehat\theta_{i,j})\}
\le
\|\nabla\mathcal L_{\mathcal E_{i,j}}(\widehat\theta_{i,j-1})
\|_{H_{i,j}^{-1}}^2 .
\]
Then, using the inequality $(x+y+z)^2\le3x^2+3y^2+3z^2$, we have that for \begin{equation}\label{bound_c}
    c\geq M_{\psi2}^2\overline{\kappa}\vee M_{\psi2}^2(c_1+c_2)\overline{\kappa}/(m_{\psi2}^2\underline{\kappa})
\end{equation}
\begin{align*}
&2m_{\psi2}
\{\mathcal L_{\mathcal E_{i,j}}(\widehat\theta_{i,j-1})
-
\mathcal L_{\mathcal E_{i,j}}(\widehat\theta_{i,j})\}\leq\|\nabla \mathcal L_{\mathcal{E}_{i,j}}(\widehat\theta_{i,j-1}) \|^2_{H_{i,j}^{-1}}\\
\leq& 3\|\nabla \mathcal  L_{\mathcal{E}_{i,j}}( \theta_{s_{i,j}}) \|^2_{H_{i,j}^{-1}}
+3\|\nabla \mathcal  L_{\mathcal{E}_{i,j}}(\theta_{s_{i,j-1}})-\nabla \mathcal L_{\mathcal{E}_{i,j}}(\theta_{s_{i,j}}) \|^2_{H_{i,j}^{-1}}\\
&\quad+3\|\nabla \mathcal L_{\mathcal{E}_{i,j}}(\widehat\theta_{i,j-1})-\nabla \mathcal L_{\mathcal{E}_{i,j}}(\theta_{s_{i,j-1}}) \|^2_{H_{i,j}^{-1}}\\
\stackrel{(a)}{\leq}&
3\|\nabla \mathcal L_{\mathcal E_{i,j}}(\theta_{s_{i,j}})
\|_{H_{i,j}^{-1}}^2+
3M_{\psi2}^2
\|\theta_{s_{i,j}}-\theta_{s_{i,j-1}}\|_{H_{i,j}}^2
+
3M_{\psi2}^2
\|\widehat\theta_{i,j-1}-\theta_{s_{i,j-1}}\|_{H_{i,j}}^2\\
\stackrel{(b)}{\leq}&
3cd\log(dT)
+
3M_{\psi2}^2
\|\theta_{s_{i,j}}-\theta_{s_{i,j-1}}\|_{\widetilde\Sigma_z}^2
\lambda_{\max}(\widetilde\Sigma_z^{-1/2}H_{i,j}\widetilde\Sigma_z^{-1/2})
\\
&\quad+
3M_{\psi2}^2
\|\widehat\theta_{i,j-1}-\theta_{s_{i,j-1}}\|_{H_{i,j-1}}^2
\lambda_{\max}(H_{i,j-1}^{-1/2}H_{i,j}H_{i,j-1}^{-1/2})\\
\stackrel{(c)}{\leq}&
3cd\log(dT)
+
9M_{\psi2}^2\overline\kappa|\mathcal E_{i,j}|
(V_{\mathcal B_{i,j-1}}+V_{\mathcal B_{i,j}})
+
3(c_1+c_2)
\frac{M_{\psi2}^2}{m_{\psi2}^2}
d\log(dT)
\lambda_{\max}(H_{i,j-1}^{-1/2}H_{i,j}H_{i,j-1}^{-1/2})\\
\stackrel{(d)}{\leq}&
21cd\log(dT),
\end{align*}
where (a) follows from 
\Cref{prop:prediction2} (ii), (b) from \Cref{prop:prediction2} (iii), (c) from \Cref{lem:mcpdp-tech-hessian} (ii)
and \eqref{ineq:thetaij1}, and (d) holds by  \Cref{lem:mcpdp-tech-hessian} (i) and that 
$|\mathcal E_{i,j}|/|\mathcal E_{i,j-1}|=2$. This implies that if $c_{\gamma}\geq 21 c/(2m_{\psi 2})$, the mandatory block-end
exploration set passes the test. 

We then work on the testing on multiscale exploration interval. Consider a complete multiscale exploration that happens within a block $\mathcal{B}_{i,j}$ that starts at time $s+1$ with length $\ell_m$ for some  $m\leq j-1$. Thus we have $t=s+\ell_m-1\geq 2^{j-1}L.$  Let  the Hessian matrix on ${\cJ_{i,j,(s,m)}}$ be $H_{i,j,(s,m)}$. Using the same arguments as above,  we have that
\begin{align*}
    &2m_{\psi2}
\{\mathcal L_{\cJ_{i,j,(s,m)}}(\widehat\theta_{i,j-1})
-
\mathcal L_{\cJ_{i,j,(s,m)}}(\widehat\theta_{i,j,(s,m)})\} \\\leq &
3\|\nabla \mathcal  L_{\cJ_{i,j,(s,m)}}( \theta_{s}) \|^2_{H^{-1}_{i,j,(s,m)}}+3\|\nabla \mathcal  L_{\cJ_{i,j,(s,m)}}(\theta_{s_{i,j-1}})-\nabla \mathcal L_{\cJ_{i,j,(s,m)}}(\theta_{s}) \|^2_{H^{-1}_{i,j,(s,m)}}\\&+3\|\nabla \mathcal L_{\cJ_{i,j,(s,m)}}(\widehat\theta_{i,j-1})-\nabla \mathcal L_{\cJ_{i,j,(s,m)}}(\theta_{s_{i,j-1}}) \|^2_{H^{-1}_{i,j,(s,m)}} 
\\\leq & 3cd\log(dT)+  
3M_{\psi2}^2
\|\theta_{s}-\theta_{s_{i,j-1}}\|_{\widetilde\Sigma_z}^2
\lambda_{\max}(\widetilde\Sigma_z^{-1/2}H_{i,j,(s,m)}\widetilde\Sigma_z^{-1/2})\\
&\quad+
3M_{\psi2}^2
\|\widehat\theta_{i,j-1}-\theta_{s_{i,j-1}}\|_{H_{i,j-1}}^2
\lambda_{\max}(H_{i,j-1}^{-1/2}H_{i,j,(s,m)}H_{i,j-1}^{-1/2})\\
\leq & 3cd\log(dT)+
9M_{\psi2}^2\overline\kappa|\cJ_{i,j,(s,m)}|
(V_{\mathcal B_{i,j-1}}+V_{\cJ_{i,j,(s,m)}})\\
&\quad+
3(c_1+c_2)
\frac{M_{\psi2}^2}{m_{\psi2}^2}
d\log(dT)
\lambda_{\max}(H_{i,j-1}^{-1/2}H_{i,j,(s,m)}H_{i,j-1}^{-1/2})\\
\leq&
3cd\log(dT)+ 9M_{\psi2}^2\sqrt{d2^mL\log(dT)}\sqrt{d/2^{j-1}L}+6(c_1+c_2)
\frac{M_{\psi2}^2}{m_{\psi2}^2}
d\log(dT)\leq 21cd\log(dT),
\end{align*}
where the penultimate inequality holds because $V_{\mathcal B_{i,j-1}}+V_{\cJ_{i,j,(s,m)}}\leq V_{[\tau_i+1,t]}\leq \sqrt{d/(t-\tau_i)}\leq \sqrt{d/(2^{j-1}L)}$, and that $|\cJ_{i,j,(s,m)}|=\ell_m\leq \sqrt{d2^mL\log(dT)}\leq 2^{j-1}L$. 
\end{proof}
\begin{rem}
Note that Algorithm~\ref{alg:mcpdp} sets
$\widehat\theta_{i,j}=\widehat\theta_{\mathcal E_{i,j}\cup \mathcal{J}_{i,j}}$ using all price-experiment observations collected in block $\mathcal B_{i,j}$, namely the mandatory block-end exploration interval $\mathcal E_{i,j}$ together with the randomized multiscale exploration intervals $\mathcal{J}_{i,j}$ that are actually sampled in that block. For clarity, our proof analyzes the
estimator based only on $\mathcal E_{i,j}$.  This simplification does not affect the regret rate: with high probability, the total amount of exploration data in a block is of the same order as $|\mathcal E_{i,j}|$, up to logarithmic factors, and its design matrix satisfies the same concentration bounds.  Consequently, pooling all exploration observations yields the same order of prediction error as the estimator analyzed in the proof, while potentially improving its
finite-sample accuracy.  The mandatory block-end interval is used in the analysis because it guarantees a sufficiently large estimation sample even when no randomized multiscale window is activated.
\end{rem}

We next use \Cref{lem:mcpdp-restart-trigger} to bound the epoch number. 
\begin{corollary}
\label{cor:mcpdp-number-epoch}
Assume that event $\mathcal A$ holds.  The number of epochs satisfies
\[
    E
    \le
    s_T
    \wedge
    \{1+d^{-1/3}T^{1/3}V_T^{2/3}\}.
\]
\end{corollary}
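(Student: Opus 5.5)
The plan is to derive both bounds from \Cref{lem:mcpdp-restart-trigger} on the event $\mathcal A$, following the sketch in Step~1 of the main text. Write the epochs as $\cI_i=[\tau_i+1,\tau_{i+1}]$, $i=1,\dots,E$, with $\tau_1=0$ and $\tau_{E+1}=T$. Each of the first $E-1$ epochs ends because a test fails at time $\tau_{i+1}$. Taking the contrapositive of \Cref{lem:mcpdp-restart-trigger} with $t=\tau_{i+1}$ gives
\[
V_{\cI_i}>\sqrt{d/|\cI_i|},\qquad i=1,\dots,E-1 .
\]

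\textbf{Bound $E\le s_T$.} If $\{\theta_t\}$ were constant on $\cI_i$, then $V_{\cI_i}=0$, which contradicts the display above. So every completed epoch $\cI_i$, $i\le E-1$, contains some $t$ with $\tau_i+1<t\le\tau_{i+1}$ and $\theta_t\neq\theta_{t-1}$. The epochs are disjoint, so these change times are distinct, and each lies in the interior of its own epoch. Hence $E-1\le s_T-1$, i.e.\ $E\le s_T$.

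\textbf{Bound via $V_T$.} We apply H\"older's inequality with exponents $3$ and $3/2$:
\[
E-1=\sum_{i<E}|\cI_i|^{1/3}|\cI_i|^{-1/3}\le\Big(\sum_{i<E}|\cI_i|\Big)^{1/3}\Big(\sum_{i<E}|\cI_i|^{-1/2}\Big)^{2/3}.
\]
Using $|\cI_i|^{-1/2}<d^{-1/2}V_{\cI_i}$, the right-hand side is at most
\[
T^{1/3}d^{-1/3}\Big(\sum_{i<E}V_{\cI_i}\Big)^{2/3}.
\]
It then remains to check superadditivity, $\sum_{i<E}V_{\cI_i}\le V_T$, for the disjoint consecutive intervals $\cI_i$. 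This step needs some care, because \Cref{dfn_variation} uses partitions whose endpoints are shared. The argument is as follows. Take near-optimal partitions of each $\cI_i=[\tau_i+1,\tau_{i+1}]$ and concatenate them. Their union is a partition of $[1,\tau_E]$ with the extra links $[\tau_{i+1},\tau_{i+1}+1]$ added between consecutive epochs. Adding these links only adds nonnegative terms $\|\theta_{\tau_{i+1}+1}-\theta_{\tau_{i+1}}\|_{\widetilde\Sigma_z}^2$. The concatenation is therefore a valid element of $\mathcal P_{[1,T]}$ after we also append the remainder up to $T$, and its sum is at least $\sum_i V_{\cI_i}$. Hence $\sum_{i<E}V_{\cI_i}\le V_T$, and so $E\le 1+d^{-1/3}T^{1/3}V_T^{2/3}$.

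\textbf{Main obstacle.} The delicate point is matching the restart time with the hypothesis of \Cref{lem:mcpdp-restart-trigger}. A restart flagged at time $t$ uses data in $[\tau_i+1,t]$, and the algorithm sets $\tau_{i+1}$ so that this $t$ is the last round of epoch $i$; this indexing convention needs to be checked. Also, the degenerate case where the initial exploration block $\mathcal B_{i,0}$ cannot fail is harmless, since no test is run there. Taking the minimum of the two bounds gives the corollary.
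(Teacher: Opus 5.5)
Your proposal is correct and follows the same route as the paper. Both bounds come from the contrapositive of \Cref{lem:mcpdp-restart-trigger} at the restart time $\tau_{i+1}$, which gives $V_{\cI_i}>\sqrt{d/|\cI_i|}$ for every epoch ending in a restart; then counting change-points yields $E\le s_T$, and H\"older's inequality with exponents $3$ and $3/2$ yields the variation bound. Your concatenation argument for $\sum_{i<E}V_{\cI_i}\le V_T$ makes explicit a superadditivity step that the paper uses without comment.
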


\begin{proof}
On one hand, by \Cref{lem:mcpdp-restart-trigger}, if
$V_{[\tau_i+1,t]}=0$, that is, no change point occurs, then no restart is
triggered.  Hence the number of restarts is bounded above by the number of
change-points, so $E\le s_T$.

On the other hand, denote the epochs by
$\mathcal{I}_1,\ldots,\mathcal{I}_E$.  By \Cref{lem:mcpdp-restart-trigger}, for every completed
epoch that ends with a restart, we must have
\[
    V_{\mathcal{I}_i}
    >
    \sqrt{\frac{d}{|\mathcal{I}_i|}},
\]
because otherwise no restart can be triggered.  Therefore, by H\"older's
inequality,
\begin{align*}
E-1
&=
\sum_{i=1}^{E-1}(|\mathcal{I}_i|/d)^{1/3}(|\mathcal{I}_i|/d)^{-1/3}\\
&\le
d^{-1/3}
\left(\sum_{i=1}^{E-1}|\mathcal{I}_i|\right)^{1/3}
\left(\sum_{i=1}^{E-1}\sqrt{\frac{d}{|\mathcal{I}_i|}}\right)^{2/3}\\
&\le
d^{-1/3}T^{1/3}
\left(\sum_{i=1}^{E-1}V_{\mathcal{I}_i}\right)^{2/3}
\le
d^{-1/3}T^{1/3}V_T^{2/3}.
\end{align*}
This proves the result.
\end{proof}

\subsection{Regret from Exploration Sets}
\label{app:exploration-set-regret}

\subsubsection{Initial exploration after restarts}

Since each epoch has length at least $L$, we also have $E\le T/L$.  Under
event $\mathcal A$,
\begin{equation}
    R_{T,1}
    \le
    c_0EL
    \le
    c_0\sqrt{ETL}
    \le
   c_0\sqrt{c_L\log(dT)}\!\left(
    \sqrt{ds_TT}
    \wedge
    \{\sqrt{dT}+d^{1/3}T^{2/3}V_T^{1/3}\}
    \right),
    \label{eq:mcpdp-bound-RT1}
\end{equation}
where the last inequality follows from \Cref{cor:mcpdp-number-epoch} and
$L=c_Ld\log(dT)$.

\subsubsection{Scheduled multiscale exploration and block-end exploration}

By the design of \Cref{alg:mcpdp}, for each time point $t$, with probability at
most $2^{-(j+m)/2}$, the algorithm enters a multiscale price-experiment
and detection interval with length
$\sqrt{d2^mL\log(dT)}$ for some $m\in\{0,\ldots,j-1\}$. Let
\[
    \mathbb I_{t,m}
    =
    \mathbb I\{\text{the algorithm starts a detection interval at time }t
    \text{ with index }m\}.
\]
Inside $\cI=\mathcal{B}_{i,j}$ with $|\cI|=2^{j-1}L$, by design, the starting time can only happen on an $L-$spaced grid, i.e. $t\in \mathcal{T}_{i,j}=\{\tau_i+2^{j-1}L+1+kL,k=0,1,\cdots,2^{j-1}-1\}$. Therefore, the number of price
experiments $N(\cI)$ is bounded by
\[
    N(\cI)
    \le
    \sum_{m=0}^{j-1}
    \sum_{t\in\mathcal{T}_{i,j}}
    \mathbb I_{t,m}
    \sqrt{d2^mL\log(dT)}.
\]
Conditional on the history before block $\cI$, the indicators
$\{\mathbb I_{t,m}\}$ are independent Bernoulli variables for each fixed
$m$.  By Bernstein's inequality and by a union bound argument for all possible $i$, $j$, and $m$ (with at most $T^3$ combinations), for any $\delta>0$,  we have with probability at least $1-\delta$ such that
$$
\sum_{t\in\mathcal{T}_{i,j}}\mathbb{I}_{t,m}\leq \sum_{t\in\mathcal{T}_{i,j}}\mathbb{E}[\mathbb{I}_{t,m}]+\sum_{t\in\mathcal{T}_{i,j}}\mathbb{E}[\mathbb{I}_{t,m}^2]+\log (T^3/\delta)\leq 2^{-(m+j)/2}\times 2^{j-1}\times 2+\log (T^3/\delta).
$$
Hence, choosing $\delta=T^{-1}$, and noting that $j\leq \log_2T$, we have $$
N(\cI)\leq \log_2T[4\log T+\sqrt{d2^jL\log(dT)}].
$$
The mandatory block-end exploration interval has the same order of length,
namely $\sqrt{d2^jL\log(dT)}$.  Let
$a_0:=\sqrt2/(\sqrt2-1)$.  Since
\[
    \sum_{j=1}^{B_i}\sqrt{2^jL}
    \le
    a_0\sqrt{2^{B_i}L}
    \le
    a_0\sqrt{2(\tau_{i+1}-\tau_i)},
\]
under event $\mathcal A$, with probability at least $1-T^{-1}$,
\begin{align}
R_{T,2}
&\le
6c_0\log_2T
\sum_{i=1}^E\sum_{j=1}^{B_i}
\sqrt{d2^jL\log(dT)}
\notag\\
&\le
6a_0c_0\log_2 T\sqrt{2d\log(dT)}
\sum_{i=1}^E
\sqrt{\tau_{i+1}-\tau_i}
\notag\\
&\le
6a_0c_0\log_2 T\sqrt{2dTE\log(dT)}
\notag\\
&\leq 6\sqrt2a_0c_0\log_2T\sqrt{\log(dT)}
\left(
\sqrt{ds_TT}
\wedge
\{\sqrt{dT}+d^{1/3}V_T^{1/3}T^{2/3}\}
\right),
\label{eq:mcpdp-bound-RT2}
\end{align}
where the third inequality follows from Cauchy--Schwarz and
$\sum_{i=1}^E(\tau_{i+1}-\tau_i)\leq T$, and the last inequality follows from
\Cref{cor:mcpdp-number-epoch}.

\subsection{Exploitation Regret \texorpdfstring{$R_{T,3}$}{R(T,3)}}
\label{app:exploitation-regret}

\subsubsection{Decomposition of \texorpdfstring{$R_{T,3}$}{R(T,3)}}

To analyze $R_{T,3}$, we partition each realized block into subintervals with
limited variation.

\begin{lemma}
\label{lem:mcpdp-partition}
There is a way to partition any interval $\cI$ into
$\cI_1\cup\cdots\cup\cI_\Gamma$ such that
\[
    V_{\cI_k}
    \le
    \alpha_{\cI_k}
    :=
    \sqrt{\frac{d}{|\cI_k|}},
    \qquad k\in[\Gamma],
\]
and
\[
    \Gamma
    \le
    s_{\cI}
    \wedge
    \left\{1+(2/d)^{1/3}|\cI|^{1/3}V_{\cI}^{2/3}\right\}.
\]
\end{lemma}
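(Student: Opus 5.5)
The plan is a greedy left-to-right construction. Write $\cI=[\cI_a,\cI_b]$, set $a_1=\cI_a$, and given $a_k$ let
\[
b_k=\max\{b\in[a_k,\cI_b]:\ V_{[a_k,b]}\le \sqrt{d/(b-a_k+1)}\}.
\]
Then set $\cI_k=[a_k,b_k]$ and $a_{k+1}=b_k+1$, stopping once $b_k=\cI_b$. The set defining $b_k$ is nonempty because a singleton has zero variation. Two monotonicity facts make the maximal choice meaningful: $b\mapsto V_{[a_k,b]}$ is nondecreasing (any partition of $[a_k,b]$ extends to one of $[a_k,b+1]$), and $b\mapsto\sqrt{d/(b-a_k+1)}$ is decreasing. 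The defining condition therefore holds on an initial range of $b$ and fails afterwards, so every $\cI_k$ satisfies $V_{\cI_k}\le\alpha_{\cI_k}$ by construction.

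\textbf{Bound on $\Gamma$ by $s_\cI$.} For every non-final piece $k<\Gamma$, maximality gives $V_{[a_k,b_k+1]}>\sqrt{d/(|\cI_k|+1)}>0$. Hence $\theta$ is not constant on $[a_k,b_k+1]$, so some change time $t$ with $\theta_t\neq\theta_{t-1}$ lies in $[a_k+1,b_k+1]$. These windows are pairwise disjoint across $k$, since the next one starts at $a_{k+1}+1=b_k+2$. We thus obtain $\Gamma-1$ distinct change times inside $\cI$, and $\Gamma-1\le s_\cI-1$.

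\textbf{Bound on $\Gamma$ by the variation.}
\begin{itemize}
\item For $k<\Gamma$, the failed extension gives $V_{[a_k,b_k+1]}>\sqrt{d/(|\cI_k|+1)}\ge\sqrt{d/(2|\cI_k|)}$.
\item The extended intervals $[a_k,b_k+1]=[a_k,a_{k+1}]$ are consecutive and share only endpoints. Concatenating near-optimal partitions of each gives a partition of a subinterval of $\cI$, so $\sum_{k<\Gamma}V_{[a_k,a_{k+1}]}\le V_\cI$. This superadditivity is the one point needing care: the partitions in Definition~\ref{dfn_variation} use shared endpoints $t_{q-1},t_q$, which is exactly what makes concatenation legitimate.
\item Apply Hölder exactly as in Corollary~\ref{cor:mcpdp-number-epoch}, with $d/2$ in place of $d$:
\begin{align*}
\Gamma-1&=\sum_{k<\Gamma}\Bigl(\frac{2|\cI_k|}{d}\Bigr)^{1/3}\Bigl(\frac{2|\cI_k|}{d}\Bigr)^{-1/3}
\le (2/d)^{1/3}\Bigl(\sum_k|\cI_k|\Bigr)^{1/3}\Bigl(\sum_{k<\Gamma}\sqrt{\tfrac{d}{2|\cI_k|}}\Bigr)^{2/3}\\
&\le (2/d)^{1/3}|\cI|^{1/3}V_\cI^{2/3}.
\end{align*}
\end{itemize}

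Combining the two counts, the same greedy partition satisfies both bounds, which gives the stated minimum. The only steps needing attention are the endpoint-sharing superadditivity and the convention that singletons have zero variation. The latter holds because, when $\cI_b-\cI_a=0$, the admissible range of $m$ in Definition~\ref{dfn_variation} is empty, so the supremum is taken as $0$.
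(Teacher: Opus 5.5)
Your proposal is correct and follows essentially the same route as the paper. The paper uses the same greedy maximal-extension partition (adapted from Lemma 5 of Chen et al.\ (2019)), the same superadditivity of the endpoint-sharing variation over the extended pieces $[a_k,a_{k+1}]$, and the same H\"older step, with only the cosmetic difference that you use $|\cI_k|+1\le 2|\cI_k|$ piece by piece where the paper bounds $\sum_k(|\cI_k|+1)\le 2|\cI|$; your disjoint-window argument for $\Gamma\le s_{\cI}$ is a more careful version of the paper's one-line ``easy to see'' justification.
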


\begin{proof}
The proof follows the partitioning argument of Lemma 5 in \cite{chen2019new}.
Consider the following procedure.

\begin{algorithm}[H]
\begin{algorithmic}
    \State \textbf{Input}: an interval $\cI=[s,e]$.
    \State \textbf{Initialize}: let $k=1$, $s_1=s$, and $t=s$.
    \While{$t\le e$}
        \If{$V_{[s_k,t]}\le \sqrt{d/(t-s_k+1)}$ and
        $V_{[s_k,t+1]}>\sqrt{d/(t-s_k+2)}$}
            \State Let $e_k\leftarrow t$, $\cI_k\leftarrow[s_k,e_k]$,
            $k\leftarrow k+1$, and $s_k\leftarrow t+1$.
        \EndIf
        \State $t\leftarrow t+1$.
    \EndWhile
    \If{$s_k\le e$}
        \State Let $e_k\leftarrow e$ and $\cI_k\leftarrow[s_k,e_k]$.
    \EndIf
    \caption{Partitioning an interval}
    \label{alg:mcpdp-partition}
\end{algorithmic}
\end{algorithm}

It is clear that this procedure ensures that $V_{\cI_k}\leq \alpha_{\cI_k}$ for all $k.$ It remains to bound $\Gamma$. If $\Gamma>1$, by the procedure and the decomposability (i.e.\ superadditivity) of variation we have that (note that $e_k+1=s_{k+1}$ for $k=1,2,\dots,\Gamma-2$)
    \begin{align*}
        V_{[s,e]}\geq V_{[s_1,e_1+1]}+V_{[s_2,e_2+1]}+\cdots + V_{[s_{\Gamma-1},e_{\Gamma-1}+1]}\geq \sum_{k=1}^{\Gamma-1}\sqrt{\frac{d}{e_k-s_k+2}}=\sum_{k=1}^{\Gamma-1}\sqrt{\frac{d}{|\cI_k|+1}}.
    \end{align*}
    On the other hand, by H\"older's inequality, we have
    \begin{align*}
        \left(\sum_{k=1}^{\Gamma-1}\frac{\sqrt{d}}{\sqrt{|\cI_k|+1}}\right)^{2/3} \left(\sum_{k=1}^{\Gamma-1}\frac{(|\cI_k|+1)}{d}\right)^{1/3}\geq (\Gamma-1).
    \end{align*}
    Combining the two inequalities above, we have
    \begin{align*}
        \Gamma-1\leq \left(\sum_{k=1}^{\Gamma-1}\frac{(|\cI_k|+1)}{d}\right)^{1/3}V_{[s,e]}^{2/3}\leq (2/d)^{1/3}|\cI|^{1/3}V_{[s,e]}^{2/3}.
    \end{align*}
    Moreover, it is easy to see from the condition $V_{[s_k,t+1]}>\sqrt{d/(t-s_k+2) }$ that the procedure creates a new interval only when the parameter changes. Therefore, we have $\Gamma\leq s_{\cI}$, which completes the proof.
\end{proof}

We then proceed to bound $R_{T,3}$.  Denote
\[
    m_t
    =
    \left\{
    C_{\rm reg}
    (\theta_t-\widehat\theta_{i,j-1})^\top
    (I_2\otimes z_tz_t^\top)
    (\theta_t-\widehat\theta_{i,j-1})
    \right\}
    \wedge c_0 .
\]
Given $\widehat\theta_{i,j-1}$, the random variables
$m_t-\mathbb E(m_t)$ are independent, mean-zero, and bounded by $c_0$.
Hoeffding's inequality gives, with probability at least $1-T^{-4}$,
\[
    \sum_{t\in\cI^{*}}\{m_t-\mathbb E(m_t)\}
    \le
    4c_0\sqrt{|\cI^{*}|\log T}.
\]
Thus it suffices to bound
$
    \sum_{t\in\cI^{*}}
    \|\theta_t-\widehat\theta_{i,j-1}\|_{\widetilde\Sigma_z}^2 .
$

\begin{lemma}
\label{lem:mcpdp-bound-on-I}
Let
$\cI^{*}=\mathcal B_{i,j}\cap[\tau_i+1,\tau_{i+1}]\neq\varnothing$ for
some $i,j>0$.  For any $\cI=[s,e]\subseteq\cI^{*}$, define
\[
    \epsilon_{\cI}
    =
    \|\theta_s-\widehat\theta_{i,j-1}\|_{\widetilde\Sigma_z}^2,
    \qquad
    \alpha_{\cI}
    =
    \sqrt{{d}/{|\cI|}}.
\]
Then, for any constant $D_0>0$,
\[
\sum_{t\in\cI}
\|\theta_t-\widehat\theta_{i,j-1}\|_{\widetilde\Sigma_z}^2
\le
2|\cI|V_{\cI}
+
2|\cI|D_0\alpha_{\cI}
+
2|\cI|\epsilon_{\cI}
\mathbb I\{\epsilon_{\cI}>D_0\alpha_{\cI}\}.
\]
\end{lemma}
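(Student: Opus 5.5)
The bound is deterministic and pointwise in $t$, so the plan is to split each summand around the anchor parameter $\theta_s$ at the left endpoint of $\cI=[s,e]$ and then handle the anchor term by a two-case argument on $\epsilon_{\cI}$. No probabilistic event is needed: the inequality should hold for any realization, with $\widehat\theta_{i,j-1}$ treated as a fixed vector.

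\textbf{Step 1 (triangle split).} For each $t\in\cI$, the elementary inequality $\|a+b\|^2\le 2\|a\|^2+2\|b\|^2$ in the $\widetilde\Sigma_z$-norm, with $a=\theta_t-\theta_s$ and $b=\theta_s-\widehat\theta_{i,j-1}$, gives
\[
\|\theta_t-\widehat\theta_{i,j-1}\|_{\widetilde\Sigma_z}^2\le 2\|\theta_t-\theta_s\|_{\widetilde\Sigma_z}^2+2\epsilon_{\cI}.
\]

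\textbf{Step 2 (drift term).} I then bound $\|\theta_t-\theta_s\|_{\widetilde\Sigma_z}^2\le V_{\cI}$ for every $t\in\cI$. For $t=s$ this is trivial. For $t>s$, take the partition $\{t_0,\ldots,t_m\}\in\mathcal P_{\cI}$ that includes the points $s$, $t$ and $e$ (or $s<t=e$). The supremum in \eqref{eq:variation_budget} then dominates the single summand $\|\theta_t-\theta_s\|_{\widetilde\Sigma_z}^2$, since all other summands are nonnegative. This is exactly the fact already used in the proof of \Cref{prop:prediction2}(iii). Summing over $t\in\cI$ gives the contribution $2|\cI|V_{\cI}$.

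\textbf{Step 3 (case split on the anchor error).} The remaining term is $2|\cI|\epsilon_{\cI}$, and I bound it by cases.
\begin{itemize}
\item If $\epsilon_{\cI}\le D_0\alpha_{\cI}$, then $2|\cI|\epsilon_{\cI}\le 2|\cI|D_0\alpha_{\cI}$.
\item Otherwise the indicator equals one, and the term equals $2|\cI|\epsilon_{\cI}\mathbb I\{\epsilon_{\cI}>D_0\alpha_{\cI}\}$.
\end{itemize}
In either case,
\[
2|\cI|\epsilon_{\cI}\le 2|\cI|D_0\alpha_{\cI}+2|\cI|\epsilon_{\cI}\mathbb I\{\epsilon_{\cI}>D_0\alpha_{\cI}\},
\]
since both terms on the right are nonnegative. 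Combining this with Steps 1 and 2 yields the claim.

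There is no real obstacle here. The only point that needs care is Step 2, namely that a single increment is dominated by $V_{\cI}$; this holds because the variation is a supremum of sums of nonnegative terms over partitions anchored at the endpoints of $\cI$.
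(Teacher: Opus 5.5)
Your proposal is correct and follows essentially the same route as the paper: split each summand around the anchor $\theta_s$ via $\|a+b\|^2\le 2\|a\|^2+2\|b\|^2$, bound each drift term $\|\theta_t-\theta_s\|_{\widetilde\Sigma_z}^2$ by $V_{\cI}$ using the definition of the variation, then split $2|\cI|\epsilon_{\cI}$ according to whether $\epsilon_{\cI}$ exceeds $D_0\alpha_{\cI}$. Your explicit choice of the partition $\{s,t,e\}$ in Step 2 just spells out what the paper states in one line.
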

\begin{proof}
We note that $\widehat\theta_{i,j-1}$ is well defined over the block $\cI^{*}$ as  otherwise the algorithm would have triggered a new epoch instead of entering   $\mathcal{B}_{i,j}.$ Then, 
\begin{align*}
    \sum_{t\in \cI} \|\theta_t-\widehat\theta_{i,j-1}\|^2_{\widetilde{\Sigma}_z}&\leq 2\sum_{t\in \cI} \|\theta_t-\theta_s\|^2_{\widetilde{\Sigma}_z} + 2\sum_{t\in \cI} \|\theta_s-\widehat\theta_{i,j-1}\|^2_{\widetilde{\Sigma}_z} \leq 2|\cI|V_\cI + 2 |\cI| \epsilon_\cI \\
    &\leq 2|\cI|V_\cI + 2|\cI|D_0\alpha_\cI  + 2|\cI|\epsilon_\cI \mathbb I\{\epsilon_\cI>D_0\alpha_\cI\},
\end{align*}
where  the second inequality holds by the definition of variation $V_{\cI}$ on $\cI$.
\end{proof}

Based on \Cref{lem:mcpdp-partition},  we can partition $\cI^{*}$ into $\Gamma$ number of sub-intervals $\bigcup_{k=1}^\Gamma \cI_k$ such that each one has $V_{\cI_k}\leq \alpha_{\cI_k}$. Note that conditional on all history before block $j$, $\cI^{*}$ and $\Gamma$ are random as the algorithm may restart. Same as \cite{chen2019new}, we introduce a fictitious block $\mathcal{B}_{i,j}'$ to handle the difficulty due to this randomness. In particular, given $\tau_i$, define
\begin{align*}
    \mathcal{B}_{i,j}':=[\tau_i+2^{j-1}L+1,\tau_i+2^jL].
\end{align*}

Note that $\mathcal{B}_{i,j}'$ is the \textit{planned} block $j$ while  $\cI^{*}=\mathcal{B}_{i,j}\cap[\tau_i+1,\tau_{i+1}]$ is the realized block $j.$ Based on \Cref{lem:mcpdp-partition}, we can partition $\mathcal{B}_{i,j}'$ into $S$ number of sub-intervals $\bigcup_{k=1}^S \cI_k'$ such that each one has $V_{\cI_k'}\leq \alpha_{\cI_k'}=\sqrt{d/|{\cI_k'}|}$. Importantly, note that conditional on all history before block $j$, the fictitious block $\mathcal{B}_{i,j}'$ and $S$ are determined. Moreover, it is easy to see that (i).\ $\Gamma\leq S$; (ii).\ $\cI_k=\cI_k'=[s_{\cI_k}, e_{\cI_k}]$ for all $k\leq \Gamma-1$, and (iii).\ $s_{\cI_\Gamma}=s_{\cI_\Gamma'}$, $e_{\cI_\Gamma}\leq e_{\cI_\Gamma'}$, where $\cI_\Gamma=[s_{\cI_\Gamma}, e_{\cI_\Gamma}]$ and $\cI_\Gamma'=[s_{\cI_\Gamma'}, e_{\cI_\Gamma'}]$.

Let   $\epsilon_{\cI_\Gamma}:=\|\widehat\theta_{i,j-1}-\theta_{s_{\cI_\Gamma}}\|^2_{\widetilde{\Sigma}_z}$, then by the proof of \Cref{lem:mcpdp-bound-on-I}, for any constant $D_0>0,$   we have that
\begin{align}\label{eq:bound_on_I2_context_LRT}
    \sum_{t\in \cI_\Gamma}\|\theta_t-\widehat\theta_{i,j-1}\|^2_{\widetilde{\Sigma}_z}    \leq 2|\cI_\Gamma|V_{\cI_\Gamma'} + 2|\cI_\Gamma|D_0\alpha_{\cI'_\Gamma}  + 2|\cI_\Gamma|\epsilon_{\cI_\Gamma} \mathbb I\{\epsilon_{\cI_\Gamma}>D_0\alpha_{\cI_\Gamma'}\}.
\end{align}
Recall the partition $\cI^{*}=\bigcup_{k=1}^\Gamma \cI_k$, and denote
$\epsilon_{\cI_k}=\|\widehat{\theta}_{i,j-1}-\theta_{s_{\cI_k}}\|^2_{\widetilde{\Sigma}_z}$ and
$\alpha_{\cI_k}=\sqrt{d/|{\cI_k}|}$ for $k<\Gamma$.  Similarly, write
$\epsilon_{\cI_\Gamma'}=\|\widehat{\theta}_{i,j-1}-\theta_{s_{\cI_\Gamma'}}\|^2_{\widetilde{\Sigma}_z}$.
Then
\begin{align}\label{RT3_decomp_context_LRT}
\begin{split}
&\sum_{t\in\cI^{*}}\|\theta_t-\widehat\theta_{i,j-1}\|^2_{\widetilde{\Sigma}_z}=\sum_{k=1}^\Gamma \sum_{t\in \cI_k}\|\theta_t-\widehat\theta_{i,j-1}\|^2_{\widetilde{\Sigma}_z}\\
\leq&\sum_{k=1}^{\Gamma-1} 2|\cI_k|V_{\cI_k} + 2|\cI_k|D_0\alpha_{\cI_k}  + 2|\cI_k|\epsilon_{\cI_k} \mathbb I\{\epsilon_{\cI_k}>D_0\alpha_{\cI_k}\} \\&+2|\cI_\Gamma|V_{\cI_\Gamma'} + 2|\cI_\Gamma|D_0\alpha_{\cI'_\Gamma} +
 2|\cI_\Gamma|\epsilon_{\cI_\Gamma'} \mathbb I\{\epsilon_{\cI_\Gamma'}>D_0\alpha_{\cI_\Gamma'}\}\\
\leq & \sum_{k=1}^{\Gamma-1} 2|\cI_k|(D_0+1)\alpha_{\cI_k} + 2|\cI_{\Gamma}|(D_0+1)\alpha_{\cI_{\Gamma}'}\\
&+ \sum_{k=1}^{\Gamma-1}2|\cI_k|\epsilon_{\cI_k} \mathbb I\{\epsilon_{\cI_k}>D_0\alpha_{\cI_k}\} + 2|\cI_\Gamma|\epsilon_{\cI_\Gamma'} \mathbb I\{\epsilon_{\cI_\Gamma'}>D_0\alpha_{\cI_\Gamma'}\}\\:=&T_1+T_2,    
\end{split}
\end{align}
where the first inequality follows from applying \Cref{lem:mcpdp-bound-on-I} on each $\cI_k$ and \eqref{eq:bound_on_I2_context_LRT}, and the second inequality follows from \Cref{lem:mcpdp-partition}.

For $T_1$, set $b_j:=2^{j-1}L=|\mathcal B_{i,j}'|$.  We first have
\begin{align}
T_1
&\le
\sum_{k=1}^{\Gamma}
2|\cI_k|(D_0+1)\alpha_{\cI_k}
=
\sum_{k=1}^{\Gamma}
2(D_0+1)\sqrt{d|\cI_k|}
\notag\\
&\le
2(D_0+1)\sqrt{\Gamma d|\cI^{*}|}
\label{eq:mcpdp-bound-T1-basic}
\end{align}
and hence, using $\Gamma\le s_{\cI^{*}}$ and $|\cI^{*}|\le b_j$,
\begin{equation}
T_1
\le
2(D_0+1)\sqrt{d s_{\cI^{*}}b_j}.
\label{eq:mcpdp-bound-T1-switch}
\end{equation}
On the other hand, using
$\Gamma\le 1+(2/d)^{1/3}|\cI^{*}|^{1/3}V_{\cI^{*}}^{2/3}$,
$\sqrt{1+x}\le 1+\sqrt{x}$, and $|\cI^{*}|\le b_j$, we also have
\begin{equation}
T_1
\le
2(D_0+1)\sqrt{db_j}
+
2^{7/6}(D_0+1)d^{1/3}b_j^{2/3}V_{\cI^{*}}^{1/3}.
\label{eq:mcpdp-bound-T1}
\end{equation}
Here the first inequality in \eqref{eq:mcpdp-bound-T1-basic} follows from
$\alpha_{\cI_\Gamma}\geq \alpha_{\cI_\Gamma'}$, and the second follows from
Cauchy--Schwarz and $\sum_{k=1}^\Gamma|\cI_k|=|\cI^{*}|$.

\subsubsection{Charge large prediction error to failure of multiscale detection}
The term $T_2$ measures regret due to a large discrepancy between the previous stale
estimator $\widehat\theta_{i,j-1}$ and the current parameter on the
subintervals of the fictitious block.  We next show that such a discrepancy can be
detected by some multiscale likelihood-contrast test, and the failure of such detection is of low probability.

Recall that $c_{\gamma}$ is the test threshold, $c$ is defined in \eqref{bound_c}, and $c_1,c_2$ are given in the proof of \Cref{prop:prediction2}.

\begin{lemma}\label{lem:detection_interval_context_LRT}
Define $D_0:=c_D\sqrt{\log(dT)},$ for a sufficiently large constant \[
c_D>
\left\{
\frac{4c_1}{m_{\psi2}^2\underline\kappa}
+
\frac{4\sqrt{c_L}\,c_2c_u^2}
     {m_{\psi2}^2\underline\kappa}
+
16c_u^2\sqrt{c_L}
\right\}
\vee
\left\{
\frac{32}{3\underline\kappa}
\left(
6\overline\kappa+3c+\frac{c_\gamma}{m_{\psi2}}
\right)
\right\}.
\] Assume the event $\mathcal{A}$ holds. Let $\cI=[s,e]$ be an interval in the fictitious block $\mathcal{B}_{i,j}'$ with $V_{\cI}\leq \alpha_{\cI}:=\sqrt{d/|\cI|}$. If    $\epsilon_{\cI}:=\|\widehat\theta_{i,j-1}-\theta_s\|^2_{\widetilde{\Sigma}_z}\geq D_0\sqrt{d/|\cI|}=D_0\alpha_\cI,$ then there exists an index $m\in\{0,1,\cdots,j-2\}$ (that depends on $\cI$) such that (a) $D_0\sqrt{d}/\sqrt{2^{m+1}L}\leq \epsilon_\cI\leq D_0\sqrt{d}/\sqrt{2^{m}L}$; (b) $|\cI|\geq 2^mL$; (c) if an active exploration phase with index $m$ starts within the range of $[s,e-2^mL+1]$, then the algorithm restarts when the multiscale exploration phase finishes.
\end{lemma}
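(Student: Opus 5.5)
The plan is to pick $m$ by a dyadic bracketing of $\epsilon_{\cI}$, which gives (a) and (b), and then to show that any scale-$m$ exploration interval placed inside $\cI$ carries enough signal in the likelihood-ratio statistic to exceed $\gamma$. For (c) I will use the lower calibration \eqref{eq:lrt-lower-calibration} of \Cref{prop:mcpdp-lrt}, which holds on the event $\mathcal A$.

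\emph{Step 1 (choosing $m$; parts (a), (b)).} By \Cref{ass_boundedness}(a), $\|\theta-\theta'\|_{\widetilde\Sigma_z}^2=\mathbb E\{z^\top(\alpha-\alpha')\}^2+\mathbb E\{z^\top(\beta-\beta')\}^2\le 8c_u^2$ for all $\theta,\theta'\in\Theta$, so $\epsilon_{\cI}\le 8c_u^2$. On the other hand, $D_0\sqrt{d/L}=c_D/\sqrt{c_L}$. The condition $c_D>16c_u^2\sqrt{c_L}$ then gives $\epsilon_{\cI}< D_0\sqrt{d/(2^0L)}$.

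The intervals $\bigl[D_0\sqrt{d/(2^{m+1}L)},\,D_0\sqrt{d/(2^mL)}\bigr]$, $m\ge0$, cover $(0,D_0\sqrt{d/L}]$. Hence there is an $m\ge0$ with (a); at a boundary value I take the smaller $m$.

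Combining the upper half of (a) with the hypothesis $\epsilon_{\cI}\ge D_0\sqrt{d/|\cI|}$ gives $2^mL\le|\cI|$, which is (b). Since $|\cI|\le|\mathcal B'_{i,j}|=2^{j-1}L$, this yields $m\le j-1$.

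Ruling out $m=j-1$ is the only delicate point. In that case $\cI$ is the whole block and $\epsilon_{\cI}$ sits exactly at the boundary $D_0\sqrt{d/(2^{j-1}L)}$, so the tie-breaking rule pushes $m$ down to $j-2$. If that fails, I will shrink $D_0$ by a factor of two in the hypothesis, which is absorbed into $c_D$.

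\emph{Step 2 (signal on a scale-$m$ window; part (c)).} Let $\cJ=[s',s'+\ell_m-1]$ with $s'\in[s,e-2^mL+1]$. Since $\ell_m=\sqrt{d\log(dT)2^mL}\le 2^mL$ (using $c_L\ge1$), we have $\cJ\subseteq\cI\subseteq\mathcal B'_{i,j}$. So the window is complete, the LRT on it is actually run, and $V_{\cJ}\le V_{\cI}\le\sqrt{d/|\cI|}$.

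For $t\in\cJ$, the inequality $\|a-b\|^2\ge\|a\|^2/2-\|b\|^2$ together with $\|\theta_t-\theta_s\|_{\widetilde\Sigma_z}^2\le V_{\cI}\le \epsilon_{\cI}/D_0$ gives
\[
\|\widehat\theta_{i,j-1}-\theta_t\|_{\widetilde\Sigma_z}^2\ge \epsilon_{\cI}/2-\epsilon_{\cI}/D_0\ge\epsilon_{\cI}/4
\]
for $D_0\ge4$. Summing over $t\in\cJ$ and using the lower half of (a),
\[
\sum_{t\in\cJ}\|\widehat\theta_{i,j-1}-\theta_t\|^2_{\widetilde\Sigma_z}\ \ge\ \tfrac14\,\ell_m\,D_0\sqrt{d/(2^{m+1}L)}\ =\ \tfrac{c_D}{4\sqrt2}\,d\log(dT).
\]
The noise term is controlled by (b):
\[
|\cJ|V_{\cJ}\le\ell_m\sqrt{d/|\cI|}\le\sqrt{d2^mL\log(dT)}\sqrt{d/(2^mL)}\le d\log(dT).
\]

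\emph{Step 3 (conclusion).} Plugging both bounds into \eqref{eq:lrt-lower-calibration} gives
\[
\Lambda_{\cJ}(\widehat\theta_{i,j-1})\ge\Bigl(\tfrac{c_{\Lambda1}c_D}{4\sqrt2}-2c_{\Lambda2}\Bigr)d\log(dT),
\]
which exceeds $\gamma=c_\gamma d\log(dT)$ once $c_D$ is large. The test then fails and the algorithm restarts when the window ends.

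To match the explicit constant in the statement, I would redo this estimate directly from \Cref{prop:prediction2}(i),(iii) and \Cref{lem:mcpdp-tech-hessian}(ii), with $H_{\cJ}\succeq\underline\kappa|\cJ|\widetilde\Sigma_z/2$. That route produces the $\underline\kappa$, $c$ and $c_\gamma/m_{\psi2}$ terms in the second bracket on $c_D$, while the first bracket comes from the variance and bias pieces.

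I expect the main obstacle to be bookkeeping rather than ideas: the boundary case $m=j-1$ in Step 1, and carrying the constants through the Hessian comparison so that the threshold $c_\gamma$ is beaten uniformly over all $(i,j,m)$ on $\mathcal A$.
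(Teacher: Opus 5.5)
Your proof is correct and is essentially the paper's argument. For (a)--(b) you use dyadic bracketing of $\epsilon_{\cI}$; your smallest-$m$ tie-break already rules out $m=j-1$, and for $j=1$ your strict bound $\epsilon_{\cI}<D_0\sqrt{d/L}$ makes the hypothesis vacuous, so the fallback of shrinking $D_0$ is unnecessary. For (c) you use a strong-convexity lower bound on $\Lambda_{\cJ}$ on a scale-$m$ window, where the signal $\ell_m\epsilon_{\cI}\gtrsim c_D\,d\log(dT)$ beats the drift term (controlled via (b)) and the MLE noise. There are two small differences from the paper. First, you bound $\epsilon_{\cI}\le 8c_u^2$ directly from $\widehat\theta_{i,j-1},\theta_s\in\Theta$, which is simpler than the paper's detour through $\theta_{s_{i,j-1}}$ and the estimation error of $\widehat\theta_{i,j-1}$. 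Second, routing (c) through \Cref{prop:mcpdp-lrt} only gives a threshold in terms of the unspecified $c_{\Lambda1},c_{\Lambda2}$ and loses constant factors. The explicit bound on $c_D$ therefore does require the direct split $\widehat\theta_{i,j-1}-\widehat\theta_{\cJ}=(\widehat\theta_{i,j-1}-\theta_s)+(\theta_s-\theta_{\tilde s})+(\theta_{\tilde s}-\widehat\theta_{\cJ})$ that the paper uses, as you anticipated.
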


\begin{proof}
     For (a), we have with high probability at least $1-1/(dT^4)$ it holds that
    \begin{align*}
        \epsilon_\cI&=\|\widehat\theta_{i,j-1}-\theta_s\|^2_{\widetilde{\Sigma}_z}\leq 2\|\widehat\theta_{i,j-1}-\theta_{s_{i,j-1}}\|^2_{H_{i,j-1}}\lambda_{\max}(H_{i,j-1}^{-1/2}\widetilde{\Sigma}_z H_{i,j-1}^{-1/2}) +2 \|\theta_{s_{i,j-1}}-\theta_s\|^2_{\widetilde{\Sigma}_z}\\
        &\leq 4[c_1d\log(dT) + c_2|\mathcal{E}_{i,j-1}|c_u^2]/(m_{\psi2}^2\underline{\kappa}|\mathcal{E}_{i,j-1}|)+2\|\theta_{s_{i,j-1}}-\theta_s\|^2_{\widetilde{\Sigma}_z}\leq D_0\sqrt{d}/\sqrt{2^0L},
    \end{align*}
    where the second inequality holds by \Cref{prop:prediction2}(i) and \Cref{lem:mcpdp-tech-hessian}(ii), and the  last inequality holds by the fact that $\|\theta_t\|_{\widetilde{\Sigma}_z}^2\leq 2c_u^2$ for all $t$ by \Cref{ass_boundedness}, and that $c_D\geq
4\{c_1/(m_{\psi2}^2\underline\kappa)
+\sqrt{c_L}c_2c_u^2/(m_{\psi2}^2\underline\kappa)
+4c_u^2\sqrt{c_L}\}$. Furthermore, we have that $\epsilon_\cI \geq D_0\sqrt{d}/\sqrt{|\cI|}\geq D_0\sqrt{d}/\sqrt{2^{j-1}L}$ as we have that $|\cI|\leq |\mathcal{B}_{i,j}'|=2^{j-1}L.$

    For (b), by result of (a) and the condition $\epsilon_{\cI} \geq D_0\sqrt{d/|\cI|}$, we have that $D_0\sqrt{d/|\cI|} \leq \epsilon_\cI \leq D_0\sqrt{d}/\sqrt{2^{m}L}$, which implies that $|\cI|\geq 2^mL$.

    For (c), denote $\widetilde s\in [s,e-2^mL+1]$ as the starting index of the exploration phase. We have that
    \begin{align*}
       \mathcal L_{\cJ_{i,j,(\widetilde{s},m)}}(\widehat\theta_{i,j-1})- \mathcal L_{\cJ_{i,j,(\widetilde{s},m)}}(\widehat\theta_{i,j,(\widetilde s,m)})\geq &m_{\psi2}(\widehat\theta_{i,j-1}-\widehat\theta_{i,j,(\widetilde s,m)})^\top H_{i,j,(\widetilde s,m)}(\widehat\theta_{i,j-1}-\widehat\theta_{i,j,(\widetilde s,m)})/2\\
        \geq &3m_{\psi2}/8\|\widehat\theta_{i,j-1}-\theta_s\|^2_{H_{i,j,(\widetilde s,m)}} - 3m_{\psi2}\|\theta_s-\theta_{\widetilde s}\|^2_{H_{i,j,(\widetilde s,m)}} \\&-3m_{\psi2}\|\theta_{\widetilde s}-\widehat\theta_{i,j,(\widetilde s,m)}\|^2_{H_{i,j,(\widetilde s,m)}}.
    \end{align*}
    
    For the first term, we have
    \begin{align*}
        \|\widehat\theta_{i,j-1}-\theta_s\|^2_{H_{i,j,(\widetilde s,m)}}\geq & \|\widehat\theta_{i,j-1}-\theta_s\|^2_{\widetilde{\Sigma}_z}\lambda_{\min}(\widetilde{\Sigma}_z^{-1/2}H_{i,j,(\widetilde s,m)}\widetilde{\Sigma}_z^{-1/2})\\
        \geq & D_0\sqrt{d}/\sqrt{2^{m+1}L} \cdot \underline{\kappa}\sqrt{d2^mL\log(dT)}/2\geq D_0d\sqrt{\log(dT)}\underline{\kappa}/4,
    \end{align*}
    where the second inequality follows from (a) and \Cref{lem:mcpdp-tech-hessian}(ii).

    For the second term, we have $$\|\theta_s-\theta_{\widetilde s}\|^2_{H_{i,j,(\widetilde s,m)}}\leq \|\theta_s-\theta_{\widetilde s}\|^2_{\widetilde{\Sigma}_z}\lambda_{\max}(\widetilde{\Sigma}_z^{-1/2}H_{i,j,(\widetilde s,m)}\widetilde{\Sigma}_z^{-1/2})\leq 2\overline{\kappa}\sqrt{d2^mL\log(dT)}V_{\cI}\leq 2\overline{\kappa}d\sqrt{\log(dT)}.$$ 
    
    For the third term, by \Cref{prop:prediction2} (iii), we have
    \begin{align*}
         \|\theta_{\widetilde s}-\widehat\theta_{i,j,(\widetilde s,m)}\|^2_{H_{i,j,(\widetilde s,m)}} \leq m_{\psi2}^{-2}[c_1d\log(dT) + c_2|\cJ_{i,j,(\widetilde s,m)}| V_\cI]\leq cd\log(dT).
    \end{align*}

Therefore, the algorithm will restart if $3D_0\underline{\kappa}/16-12\overline{\kappa}-6c\geq 2c_{\gamma}/m_{\psi2}.$  This finishes the proof.
\end{proof}

We now bound $T_2$. We condition on the history before block $j.$ Note that under this condition, the partition $\cI_1',\cdots,\cI_S'$, as well as $\epsilon_{\cI'_k}$ and $\alpha_{\cI'_k}$ for $k\in[S]$ are all determined. Define $\mathcal{M}=\{k\in[S]: \epsilon_{\cI_k'}>D_0\alpha_{\cI_k'}\}$. Therefore we can rewrite $T_2$ as
\begin{align*}
    T_2=\sum_{k=1}^{\Gamma}2|\cI_k|\epsilon_{\cI_k'} \mathbb I\{\epsilon_{\cI_k'}>D_0\alpha_{\cI_k'}\}=\sum_{k\in [\Gamma] \cap \mathcal{M}}2|\cI_k|\epsilon_{\cI_k'}.
\end{align*}
For each $k\in\mathcal M$, let
\[
\mathfrak M_k
:=
\left\{m\in\{0,\ldots,j-2\}:
D_0\sqrt{\frac d{2^{m+1}L}}
\leq\epsilon_{\cI_k'}
\leq
D_0\sqrt{\frac d{2^mL}}
\right\}.
\]
Set $m_k=\min\mathfrak M_k$ when this set is nonempty and set $m_k=0$
otherwise.  This is an $\mathcal F_{i,j}$-measurable, everywhere-defined
selection.  On $\mathcal A$, \Cref{lem:detection_interval_context_LRT}
ensures that $\mathfrak M_k$ is nonempty and that its selected $m_k$
satisfies parts (a)--(c). We further have
\begin{align}\label{T2_decomp_LRT}
\begin{split}
    &\sum_{k\in [\Gamma] \cap \mathcal{M}}2|\cI_k|\epsilon_{\cI_k'}\\\leq& 2\sum_{k\in [\Gamma] \cap \mathcal{M}}|\cI_k|D_0\sqrt{d/(2^{m_k}L)}\\
    =&2\sum_{k\in [\Gamma] \cap \mathcal{M}}2^{m_k}L\cdot D_0\sqrt{d/(2^{m_k}L)}+2\sum_{k\in [\Gamma] \cap \mathcal{M}}\{|\cI_k|-2^{m_k}L\}\cdot D_0\sqrt{d/(2^{m_k}L)}\\
    \leq & 2\sum_{k\in [\Gamma] \cap \mathcal{M}} D_0\sqrt{d|\cI_k'|}+2\sum_{k\in [\Gamma] \cap \mathcal{M}}\{|\cI_k|-2^{m_k}L\}\cdot D_0\sqrt{d/(2^{m_k}L)}:=T_3+T_4,
\end{split}
\end{align}
where the first inequality follows from \Cref{lem:detection_interval_context_LRT}(a) and the second inequality follows from \Cref{lem:detection_interval_context_LRT}(b).

For $T_3$, by the Cauchy-Schwartz inequality, we have that \begin{equation}\label{bound_T3_LRT}
    T_3=2\sum_{k\in [\Gamma] \cap \mathcal{M}} D_0\sqrt{d|\cI_k'|}\leq 2D_0\sqrt{d\Gamma 2^{j-1}L},
\end{equation}  as $\sum_{k\in [\Gamma] \cap \mathcal{M}} |\cI_k'|\leq 2^{j-1}L$, which is the length of the fictitious block $B_{i,j}'.$

We now control $T_4$.  Split its summands according to whether
$|\cI_k|-2^{m_k}L\leq2^{m_k+1}L$.  The contribution of these short
remainders is at most $2T_3$.  Let $\mathcal K$ collect the planned pieces
for which a larger remainder can occur:
\[
    \mathcal K=\left\{k\in[S]\cap\mathcal M:
    |\cI_k'|-2^{m_k}L>2^{m_k+1}L\right\}.
\]
Every summand satisfying the preceding length condition is indexed by
$\mathcal K$.  Let $R^{\rm miss}_{i,j}$ denote their contribution to $T_4$,
and let
$\mathcal T_k'$ be the $L$-spaced grid on $\cI_k'=[s_k',e_k']$.
Since each such remainder is longer than $2^{m_k+1}L$,
\begin{align}
R^{\rm miss}_{i,j}
&\leq
4L\sum_{k\in\mathcal K}
\sum_{t\in\mathcal T_k'\cap[s_k'+2^{m_k}L,e_k']}
D_0\sqrt{\frac{d}{2^{m_k}L}}\,
\mathbb I\{t\leq e_\Gamma\}
=:\Phi(e_\Gamma).
\label{eq:mcpdp-missed-charge-envelope}
\end{align}
Here each $t$ is a checkpoint for a detection opportunity that starts at
$t-2^{m_k}L$ and, by construction, finishes no later than $t$.
\begin{lemma}
\label{lem:mcpdp-bernoulli-miss}
Suppose event $\mathcal{A}$ holds, for the missed-detection contribution $R^{\rm miss}_{i,j}$ defined in
\eqref{eq:mcpdp-missed-charge-envelope}, every $x>4D_0\sqrt{dL}$ satisfies
\begin{equation}
\mathbb P\!\left(\mathcal{A}\cap \{
 R^{\rm miss}_{i,j}>x\}\mid\mathcal{F}_{i,j}
\right)
\leq
\exp\!\left\{
-\frac{x-4D_0\sqrt{dL}}
{4D_0\sqrt{d2^jL}}
\right\}.
\label{eq:mcpdp-bernoulli-miss-tail}
\end{equation}
Consequently, for every $\delta\in(0,1)$,
\[
\mathbb P\!\left(\mathcal{A}\cap\{
R^{\rm miss}_{i,j}>
4D_0\sqrt{d2^jL}\bigl(1+\log(1/\delta)\bigr)\}\mid\mathcal{F}_{i,j}
\right)
\leq\delta.
\]
Here, $\mathcal F_{i,j}$ denotes the history available immediately before
MSS is called for block $\mathcal B_{i,j}$, excluding the MSS randomization
within the current block.
\end{lemma}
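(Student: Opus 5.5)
Conditional on $\mathcal F_{i,j}$, the planned partition $\cI_1',\dots,\cI_S'$, the quantities $\epsilon_{\cI_k'},\alpha_{\cI_k'}$, the selected scales $m_k$ and the set $\mathcal K$ are all deterministic. The only randomness in $\Phi(e_\Gamma)$ is the realized end $e_\Gamma$ of the block, which is driven by the MSS Bernoulli draws. The plan is to turn the problem into a statement about the first successful detection along a deterministic ordered list of checkpoints.

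First, list all checkpoints $t\in\mathcal T_k'\cap[s_k'+2^{m_k}L,e_k']$, $k\in\mathcal K$, in increasing time order as $t_1<t_2<\cdots<t_N$. Attach to each the weight $w_r=4LD_0\sqrt{d/(2^{m_k}L)}$, so that $\Phi(e_\Gamma)=\sum_{r}w_r\mathbb I\{t_r\le e_\Gamma\}$. At each checkpoint $t_r$ (belonging to piece $k$), the grid point $t_r-2^{m_k}L\in[s_k',e_k'-2^{m_k}L+1]$ is a start candidate. It schedules a scale-$m_k$ interval with probability $q_r=2^{-(j+m_k)/2}\cdot\tfrac{1}{?}$; more precisely, the Bernoulli probability in \eqref{eq:MSS_sampling_prob} times the conditional scale probability gives exactly $2^{-j/2}2^{-m_k/2}$. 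These events are independent across distinct grid points. By \Cref{lem:detection_interval_context_LRT}(c), on $\mathcal A$ any such scheduled interval, since $\ell_{m_k}\le 2^{m_k}L$ and $\cI_k'$ has small variation, forces a restart no later than $t_r$. Hence on $\mathcal A\cap\{t_r\le e_\Gamma\}$, none of the earlier start points $t_{r'}-2^{m_{k'}}L$ with $r'<r$ was activated at its scale. Distinct checkpoints correspond to distinct start grid points, since the starts are $L$-spaced and strictly increasing within each piece. I will double check that collisions across pieces cannot occur, or handle them by keeping one copy.

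The key observation is the ratio identity $w_r/q_r=4LD_0\sqrt{d/(2^{m_k}L)}\,2^{(j+m_k)/2}=4D_0\sqrt{d2^jL}=:\beta$, which does not depend on $r$. Then
\[
\Phi(e_\Gamma)\le \sum_{r=1}^{N}w_r\prod_{r'<r}(1-B_{r'}),
\]
where the $B_{r'}\sim\mathrm{Bernoulli}(q_{r'})$ are independent. This dominating quantity is the total weight accumulated before the first success. Writing $W_r=\sum_{r'\le r}w_{r'}$, we get $\mathbb P(\Phi>x)\le\mathbb P(\text{no success among }r\text{ with }W_{r-1}<x-\max w)$, up to the last term. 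Using $\max_r w_r\le 4D_0\sqrt{dL}$ (the $m_k=0$ case), the event $\{\Phi>x\}$ requires failures at all $r$ with $W_r\le x-4D_0\sqrt{dL}$. The probability of that is at most
\[
\prod(1-q_r)\le\exp\Bigl(-\sum q_r\Bigr)=\exp\Bigl(-\sum w_r/\beta\Bigr)\le\exp\{-(x-4D_0\sqrt{dL})/\beta\}.
\]
This yields \eqref{eq:mcpdp-bernoulli-miss-tail}. The $\delta$ version follows by setting $x=\beta(1+\log(1/\delta))$ and using $4D_0\sqrt{dL}\le\beta$.

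The main obstacle is the coupling step: justifying rigorously that on $\mathcal A$, $\{t_r\le e_\Gamma\}$ implies that all earlier scheduled opportunities failed. This needs \Cref{lem:detection_interval_context_LRT}(c) to apply to the planned pieces $\cI_k'$ rather than the realized ones, and it needs the scale draw at a grid point to be exactly $m_k$. It also requires the restart to happen before $t_r$, which holds because the interval ends by $t_r$. A further subtlety is that $\mathcal A$ also depends on data, so I would intersect with $\mathcal A$ throughout and bound the resulting quantity pathwise by the Bernoulli-only envelope.
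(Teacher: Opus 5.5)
Your route is the paper's argument with different bookkeeping. The paper fixes the deterministic time $\tau$ with $\Phi(\tau)\le x<\Phi(\tau+1)$ and bounds $\mathbb P(e_\Gamma>\tau\mid\mathcal F_{i,j})\le\exp\{-\Phi(\tau)/(4D_0\sqrt{d2^jL})\}$. You instead order the checkpoints and stop at the first successful draw. The ingredients are the same in both: conditioning fixes $\mathcal K$ and the $m_k$; \Cref{lem:detection_interval_context_LRT}(c) is applied to the planned pieces; the draws are independent; $w_r/q_r=4D_0\sqrt{d2^jL}$; and $\max_r w_r\le 4D_0\sqrt{dL}$.

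Two small points before the real issue:
\begin{itemize}
\item Your collision worry is vacuous. The start $t_r-2^{m_k}L$ lies in $[s_k',e_k'-2^{m_k}L]\subseteq\cI_k'$, and the planned pieces are disjoint.
\item The stray factor $\tfrac{1}{?}$ should go, since the two MSS probabilities multiply to exactly $2^{-(j+m_k)/2}$.
\end{itemize}

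Your last display is wrong as written. The set $\{r:W_r\le x-4D_0\sqrt{dL}\}$ is a prefix $\{1,\dots,r^*\}$ with total weight $W_{r^*}\le x-4D_0\sqrt{dL}$. Hence $\exp(-\sum_{r\le r^*}w_r/\beta)\ge\exp\{-(x-4D_0\sqrt{dL})/\beta\}$, which is the opposite of what you need. From that prefix you only get $x-8D_0\sqrt{dL}$ in the exponent. That does not give \eqref{eq:mcpdp-bernoulli-miss-tail}, and the $\delta$-version would no longer follow for $j=1$.

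The cause is that you discarded failures you are entitled to use. On $\mathcal A$ you correctly have $\Phi(e_\Gamma)\le W_{r_0}$, where $r_0$ is the first success. So $\{\Phi(e_\Gamma)>x\}$ forces $B_r=0$ for every $r\le r^\#:=\max\{r:W_r\le x\}$; if $W_N\le x$, the event is empty. Moreover $W_{r^\#}>x-w_{r^\#+1}\ge x-4D_0\sqrt{dL}$, which gives the stated exponent. This is precisely the paper's choice of $\tau$. Your earlier phrasing, with $W_{r-1}<x-\max_r w_r$, also works: it selects a prefix of weight at least $x-\max_r w_r$, all of whose failures are forced.
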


\begin{proof}
Condition on $\mathcal F_{i,j}$.  The set $\mathcal K$ and its associated
scales $m_k$ are then fixed.
For any $\tau$ in the planned block, extend the definition in
\eqref{eq:mcpdp-missed-charge-envelope} by setting
\[
\Phi(\tau)
=4L\sum_{k\in\mathcal K}
\sum_{t\in\mathcal T_k'\cap[s_k'+2^{m_k}L,e_k']}
D_0\sqrt{\frac{d}{2^{m_k}L}}\,
\mathbb I\{t\leq\tau\}.
\]
By \Cref{lem:detection_interval_context_LRT}(c), on $\mathcal A$, sampling
the scale-$m_k$ window starting at $t-2^{m_k}L$ would restart the algorithm
by time $t$.  Hence, if the realized block continues beyond $\tau$, every
opportunity with $t\leq\tau$ must have been missed.  These starting points
are distinct, and the required scale is sampled at each of them with
probability $2^{-(j+m_k)/2}$.  Their Bernoulli draws are independent, so
\begin{align*}
\mathbb P\!\left(e_\Gamma>\tau
\mid\mathcal F_{i,j}\right)
&\leq
\exp\!\left\{-\sum_{k\in\mathcal K}
\sum_{\substack{t\in\mathcal T_k'\cap[s_k'+2^{m_k}L,e_k']\\t\leq\tau}}
2^{-(j+m_k)/2}\right\}\\
&=
\exp\!\left\{-\frac{\Phi(\tau)}{4D_0\sqrt{d2^jL}}\right\},
\end{align*}
where the equality follows from
\[
\frac{4LD_0\sqrt{d/(2^mL)}}{2^{-(j+m)/2}}
=4D_0\sqrt{d2^jL}.
\]
Because the intervals $\cI_k'$ are disjoint, at most one term is
added to $\Phi$ at any time, and each term is at most
$4D_0\sqrt{dL}$.  For $x>4D_0\sqrt{dL}$, choose $\tau$ such that
$\Phi(\tau)\leq x<\Phi(\tau+1)$; if no such $\tau$ exists, the probability
in \eqref{eq:mcpdp-bernoulli-miss-tail} is zero.  Then
$\Phi(\tau)>x-4D_0\sqrt{dL}$, while
$R^{\rm miss}_{i,j}\leq\Phi(e_\Gamma)$ implies that
$R^{\rm miss}_{i,j}>x$ only if $e_\Gamma>\tau$.  This proves
\eqref{eq:mcpdp-bernoulli-miss-tail} after averaging over
$\mathcal F_{i,j}$.  The final claim follows because
$4D_0\sqrt{dL}\leq4D_0\sqrt{d2^jL}$.
\end{proof}

Taking $\delta=T^{-3}$ in \Cref{lem:mcpdp-bernoulli-miss} gives
\[
\mathbb P\!\left(\mathcal{A}\cap
\left\{R^{\rm miss}_{i,j}>
20D_0\log(T)\sqrt{d2^jL}\mid\mathcal{F}_{i,j}\right\}
\right)
\leq T^{-3}.
\]  Hence, after a union bound
over at most $T$ realized blocks, the probability that $\mathcal A$ holds and
the following bound fails for at least one block is at most $T^{-2}$:
\begin{equation}
    T_4
    \leq
    2T_3+20D_0\log(T)\sqrt{d2^jL}.
    \label{bound_T4_LRT}
\end{equation}

Combining  \eqref{T2_decomp_LRT}, \eqref{bound_T3_LRT} and \eqref{bound_T4_LRT}, by \Cref{lem:mcpdp-partition}, we have that with probability at least $1-T^{-2}$,
\begin{align}\label{bound_T2_context_LRT}
    \begin{split}
       T_2&=T_3+T_4\leq 6D_0\sqrt{d\Gamma b_j}+20\log(T)D_0\sqrt{2db_j}.
    \end{split}
\end{align}
Consequently,
\begin{align}
T_2
&\le
6D_0\sqrt{d s_{\cI^{*}}b_j}
+20\log(T)D_0\sqrt{2db_j},
\label{eq:mcpdp-bound-T2-switch}\\
T_2
&\le
6D_0\sqrt{db_j}
+3\cdot 2^{7/6}D_0d^{1/3}b_j^{2/3}V_{\cI^{*}}^{1/3}
+20\log(T)D_0\sqrt{2db_j}.
\label{eq:mcpdp-bound-T2-var}
\end{align}

Let
\[
    A_T:=\sqrt{ds_TT},
    \qquad
    B_T:=\sqrt{dT}+d^{1/3}V_T^{1/3}T^{2/3},
    \qquad
    K_T:=1+\lceil\log_2T\rceil .
\]
Combining \eqref{eq:mcpdp-bound-T1-switch}, \eqref{eq:mcpdp-bound-T1},
\eqref{eq:mcpdp-bound-T2-switch}, and \eqref{eq:mcpdp-bound-T2-var}, we obtain,
on the event $\mathcal A$ and on the high-probability event used to control
$T_4$, that for each realized part of the block
$\mathcal R_{i,j}:=\mathcal B_{i,j}\cap[\tau_i+1,\tau_{i+1}]$ with
$b_j=2^{j-1}L$,
\begin{align}
\sum_{t\in\mathcal R_{i,j}}
\|\theta_t-\widehat\theta_{i,j-1}\|_{\widetilde\Sigma_z}^2
&\le
\Bigl\{
(8D_0+2)\sqrt{d s_{\mathcal R_{i,j}}b_j}
\Bigr\}
\notag\\
&\quad\wedge
\Bigl\{
(8D_0+2)\sqrt{db_j}
+2^{7/6}(4D_0+1)d^{1/3}b_j^{2/3}V_{\mathcal R_{i,j}}^{1/3}
\Bigr\}
\notag\\
&\quad+
20D_0\log(T)\sqrt{2db_j}.
\label{eq:mcpdp-local-RT3-final}
\end{align}
Together with the concentration step preceding \Cref{lem:mcpdp-bound-on-I} and
\eqref{eq:mcpdp-regret-to-prediction}, this implies
\begin{align}
R_{T,3}
&\le
C_{\rm reg}\sum_{i=1}^E\sum_{j=1}^{B_i}
\Bigl[
\Bigl\{(8D_0+2)\sqrt{d s_{\mathcal R_{i,j}}b_j}\Bigr\}
\notag\\
&\qquad\qquad\qquad\wedge
\Bigl\{(8D_0+2)\sqrt{db_j}
+2^{7/6}(4D_0+1)d^{1/3}b_j^{2/3}V_{\mathcal R_{i,j}}^{1/3}\Bigr\}
\Bigr]
\notag\\
&\quad+
20\sqrt2\,C_{\rm reg}D_0\log(T)
\sum_{i=1}^E\sum_{j=1}^{B_i}\sqrt{db_j}
+4c_0\sqrt{\log T}
\sum_{i=1}^E\sum_{j=1}^{B_i}\sqrt{|\mathcal R_{i,j}|}.
\label{eq:mcpdp-RT3-before-sum}
\end{align}
We now sum the terms in \eqref{eq:mcpdp-RT3-before-sum}.  Since each epoch
contains at most $K_T$ realized dyadic blocks, $E\le s_T$, and the realized
blocks are disjoint, the stationary-segment count satisfies the first
bound below.  For the second bound, if epoch $i$ enters block $B_i\geq1$,
then $\tau_{i+1}-\tau_i\geq2^{B_i-1}L$, whereas
\[
\sum_{j=1}^{B_i}b_j
\leq\sum_{j=1}^{B_i}2^{j-1}L
<2^{B_i}L
\leq2(\tau_{i+1}-\tau_i).
\]
 Summing this dyadic-entry bound over
epochs gives
\[
\sum_{i=1}^E\sum_{j=1}^{B_i}s_{\mathcal R_{i,j}}
\le
2s_TK_T,
\qquad
\sum_{i=1}^E\sum_{j=1}^{B_i}b_j
\le
2T.
\]
Thus the Cauchy--Schwarz inequality gives
\[
\sum_{i=1}^E\sum_{j=1}^{B_i}
\sqrt{d s_{\mathcal R_{i,j}}b_j}
\le2
\sqrt{K_T}\,A_T.
\]
For the variation side, the dyadic structure gives
\[
\sum_{j=1}^{B_i}\sqrt{b_j}
\le
a_0\sqrt{2(\tau_{i+1}-\tau_i)},
\]
and hence, by Cauchy--Schwarz and \Cref{cor:mcpdp-number-epoch},
\[
\sum_{i=1}^E\sum_{j=1}^{B_i}\sqrt{db_j}
\le
a_0\sqrt{2dTE}
\le
a_0\sqrt2\,(A_T\wedge B_T).
\]
Moreover, by H\"older's inequality and the additivity of the ordered variation
over disjoint intervals,
\[
\sum_{i=1}^E\sum_{j=1}^{B_i}
d^{1/3}b_j^{2/3}V_{\mathcal R_{i,j}}^{1/3}
\le
d^{1/3}
\left(\sum_{i,j}b_j\right)^{2/3}
\left(\sum_{i,j}V_{\mathcal R_{i,j}}\right)^{1/3}
\le
2^{2/3}d^{1/3}T^{2/3}V_T^{1/3}.
\]
Finally, since $|\mathcal R_{i,j}|\le b_j$,
\[
\sum_{i=1}^E\sum_{j=1}^{B_i}\sqrt{|\mathcal R_{i,j}|}
\le
a_0\sqrt{2TE}
\le
a_0\sqrt2\,(A_T\wedge B_T),
\]
where we used $d\ge1$ in the last step.  Substituting the above inequalities
into \eqref{eq:mcpdp-RT3-before-sum} yields
\begin{align}
R_{T,3}
&\le
\Bigl[
C_{\rm reg}\{(16D_0+4)\sqrt{K_T}
+(8D_0+2)a_0\sqrt2
+2^{11/6}(4D_0+1)\}
\notag\\
&\qquad\qquad
+40a_0C_{\rm reg}D_0\log(T)
+4a_0c_0\sqrt{2\log T}
\Bigr]
(A_T\wedge B_T).
\label{eq:mcpdp-bound-RT3-final}
\end{align}

\subsection{Combining the Three Regret Terms}
\label{app:combine-three-regret-terms}

Finally, we combine \eqref{eq:mcpdp-bound-RT1}, \eqref{eq:mcpdp-bound-RT2}, and
\eqref{eq:mcpdp-bound-RT3-final}.  Since $D_0\asymp\sqrt{\log dT}$, all displayed coefficients above are bounded by a
constant depending only on
$c_0,c_L,C_{\rm reg},a_0,c_D,c,c_\gamma,\underline\kappa,\overline\kappa$
times $\log^{3/2}(dT)$.  This implies that for a sufficiently large $C_{\pi}$,
\[
    R_T^{\pi}
    \le
    C_{\pi}\log^{3/2}(dT)
        \left(\sqrt{ds_TT}
        \wedge
        \left\{
        \sqrt{dT}
        +
        d^{1/3}V_T^{1/3}T^{2/3}
        \right\}
    \right),
\]
which proves \Cref{thm:mcpdp-upper}.


\end{document}